\newcommand{\A}[0]{\mathcal{A}}
\newcommand{\B}[0]{\mathcal{B}}
\newcommand{\V}[0]{\mathcal{V}}
\newcommand{\scal}[0]{\mathcal{S}}
\newcommand{\ocal}[0]{\mathcal{O}}
\newcommand{\pcal}[0]{\mathcal{P}}
\newcommand{\E}[0]{\mathcal{E}}
\newcommand{\F}[0]{\mathcal{F}}
\newcommand{\I}[0]{\mathcal{I}}
\newcommand{\pr}[0]{\mathbb{P}}
\newcommand{\G}[0]{\mathcal{G}}
\newcommand{\X}[0]{\mathcal{X}}
\newcommand{\N}[0]{\mathcal{N}}
\newcommand{\M}[0]{\mathcal{M}}
\newcommand{\J}[0]{\mathcal{J}}
\newcommand{\qb}[0]{\mathbf{q}}
\newcommand{\bb}[0]{\mathbf{b}}
\newcommand{\Rb}[0]{\mathbf{R}}
\newcommand{\pa}[1]{\text{Pa}_{\G}\paran{#1}} 
\newcommand{\pay}{\text{Pa}_Y}
\newcommand{\vps}[0]{\V_{p,s}}
\newcommand{\vpzs}[0]{\V_{p_0,s}}
\newcommand{\tps}[0]{T_{p,s}}
\newcommand{\wps}[0]{w_{p,s}}
\newcommand{\paran}[1]{\left( #1 \right)}
\newcommand{\kl}[1]{\mathrm{KL}\left( #1 \right)}
\newcommand{\muh}[0]{\hat{\mu}}
\newcommand{\rione}{R_i^{(1)}}
\newcommand{\ritwo}{R_i^{(2)}}
\newcommand{\cucb}{C_{\text{UCB}}}
\newcommand{\delti}{\Delta T_i}
\newcommand{\ceillnm}{\left\lceil \frac{\ell n}{m} \right\rceil}
\theoremstyle{plain}
\newtheorem{theorem}{Theorem}[section]
\newtheorem{proposition}[theorem]{Proposition}
\newtheorem{lemma}[theorem]{Lemma}
\theoremstyle{definition}
\newtheorem{definition}[theorem]{Definition}
\theoremstyle{remark}
\newtheorem{remark}[theorem]{Remark}
\begin{document}

%

%

\twocolumn[

\aistatstitle{Graph Learning Is Suboptimal in Causal Bandits
}

\aistatsauthor{ Mohammad Shahverdikondori \And Jalal Etesami \And  Negar Kiyavash }

\aistatsaddress{ College of Management \\ of Technology, EPFL \And  Department of Computer \\ Science, TU Munich \And College of Management \\ of Technology, EPFL } ]

\begin{abstract}
  We study regret minimization in causal bandits under causal sufficiency where the underlying causal structure is not known to the agent. Previous work has focused on identifying the reward’s parents and then applying classic bandit methods to them, or jointly learning the parents while minimizing regret.  We investigate whether such strategies are optimal. Somewhat counterintuitively, our results show that learning the parent set is suboptimal. We do so by proving that there exist instances where regret minimization and parent identification are fundamentally conflicting objectives. We further analyze both the known and unknown parent set size regimes, establish novel regret lower bounds that capture the combinatorial structure of the action space. Building on these insights, we propose nearly optimal algorithms that bypass graph and parent recovery, demonstrating that parent identification is indeed unnecessary for regret minimization. Experiments confirm that there exists a large performance gap between our method and existing baselines in various environments.
\end{abstract}


\section{INTRODUCTION}
Multi-armed bandit (MAB) settings are a fundamental framework for modeling sequential decision-making in stochastic environments \cite{bandit-book2-bubeck2012regret, bandit-book1-lattimore2020bandit}, with applications ranging from recommendation systems \cite{mab5-recom-li2010contextual, mab6-recom-bouneffouf2019survey,sankagiri2025recommendations}, clinical trials \cite{mab3-clinical-villar2015multi, mab4-clinical-liu2020reinforcement}, to A/B tests \cite{mab7-abtesting-scott2010modern}. In this framework, an agent repeatedly selects actions (arms), observes the resulting rewards, and aims to maximize the expected cumulative reward. In the classic MAB problem, arms are assumed to be independent. This limits its applicability in more structured environments. To remedy this limitation, various extensions have introduced dependencies among arms, including linear bandits \cite{mab-linear1-dani2008stochastic, mab-linear2-abbasi2011improved}, bandits with graph feedback \cite{graph-feedback1-alon2015online, graph=feedback2-alon2017nonstochastic}, bandits with interference \cite{interference1-jia2024multi, interference2-agarwal2024mutli, interference3-jamshidi2025graph}, and causal bandits \cite{lattimore2016causal, yabe2018causal, pomis-lee2018structural}. 

In causal bandits, the focus of this paper, dependencies are captured by a causal graph over the variables, with one node designated as the reward. Actions correspond to interventions on subsets of variables, after which the agent observes the reward along with the values of non-intervened variables. Exploiting this causal structure and the additional observations has been shown to significantly accelerate learning \cite{lattimore2016causal, CUCB-lu2020regret}.

A common assumption in much of the causal bandit literature is knowledge of the full causal graph. Although progress has been made in developing scalable causal discovery methods \cite{discovery1-glymour2019review, discovery2-vowels2022d, discovery3-lam2022greedy, discovery4-shahverdikondori2024qwo, discovery5-mokhtarian2025recursive}, such methods typically require full knowledge of observational or interventional distributions and remain imperfect. Therefore, in practice it is more likely that the structure of the causal graph is unknown. Under causal sufficiency (i.e., no unobserved variables) and for agents that can intervene on subsets of nodes, it was shown that the optimal action is an intervention on all parents of the reward node \cite{pomis-lee2018structural}. Consequently, prior work has focused on identifying the parent set and then applying standard regret-minimization algorithms to it, or attempting to simultaneously learn the parents and minimize regret \cite{unknwon-tree-lu2021causal, mikhail-konobeev2025causal, linear4-unknown-peng2025asymmetric, elahi2024partial}.

In this work, we ask whether identifying the parent set is indeed optimal for regret minimization. We show that, with no assumptions on the underlying generating model and for the worst-case regret minimization, even when the graph over non-reward variables is fully known, parent identification is sub-optimal. 

It is important to note that our analysis does not impose any structural assumptions (e.g., linearity) on the causal model and focuses on hard interventions, where the agent sets fixed values for a subset of variables. Our results are specific to this setting. In contrast, some related works consider other classes of interventions, such as those that modify conditional distributions \cite{pure2-sen2017identifying,simoes2025minimal}, and our conclusions may not directly extend to those scenarios.

Our main contributions are as follows:
\begin{itemize}
    \item We demonstrate that parent identification and regret minimization can be fundamentally at odds. That is, there exist instances where the set of high-reward actions is disjoint from the set of informative actions to identify the parents.
    \item When the number of parents of the reward node is known, we establish a worst-case regret lower bound that reflects the combinatorial structure of the action space and holds even if the agent has complete knowledge about the graph over the non-reward variables. We further propose a simple algorithm that, under a mild assumption on the problem parameters, achieves regret matching this lower bound up to logarithmic factors, without requiring any prior knowledge.
    \item When the number of parents is unknown, we prove a lower bound, showing that no algorithm can attain the same regret rates as in the known case uniformly over all parent set sizes. In addition, we introduce an adaptive algorithm that adjusts to the unknown parent size. This algorithm is Pareto optimal, up to logarithmic factors, when the agent can intervene on all variables in each round, and is nearly Pareto optimal in more general settings.
    \item Our experiments in diverse environments show that our algorithms outperform the existing baselines and reduce regret by up to a factor of $20$.
\end{itemize}

\subsection{Related Work}

\paragraph{Causal Bandits.} 
The causal bandit problem was first introduced in \cite{lattimore2016causal}, where the authors considered simple regret minimization with access to causal background knowledge, showing that such knowledge can significantly improve learning efficiency. Since then, causal bandits have been studied under a variety of settings and assumptions. Examples include assuming access to the distribution of parents of the reward node under each intervention~\cite{CUCB-lu2020regret, pre-pareto-bilodeau2022adaptively, pareto-liucausal, pure2-sen2017identifying}, restricting the causal model to be linear~\cite{linear1-varici2023causal, linear2-yan2024robust, linear3-yan2024linear}, incorporating action costs into a budget constraint~\cite{budget-nair2021budgeted, jamshidi2024confounded}, studying combinatorial causal bandits with binary generalized linear models (BGLM)~\cite{feng2023combinatorial, pure-xiongcombinatorial}, and addressing best-arm identification~\cite{shahverdikondori2025optimal, unknown-comb-feng2025combinatorial}. Another line of work proposed offline approaches to reduce the action space by identifying so-called possibly optimal intervention sets (POMIS) before the learning process begins~\cite{pomis-lee2018structural, non-manipulable-lee2019structural}.

Causal bandits with an \emph{unknown graph structure} have also been investigated. In~\cite{unknwon-tree-lu2021causal, mikhail-konobeev2025causal}, the authors proposed algorithms in the atomic intervention setting that first identify the parents of the reward node and later run a standard regret minimization algorithm on the identified set. These algorithms are inherently limited to atomic interventions. \cite{unknwon-tree-lu2021causal} proved a lower bound that showed when the reward node has a single parent and only atomic interventions are allowed, without additional distributional assumptions, no algorithm can achieve regret better than that of standard bandit algorithms applied to the full action set. The combinatorial causal bandits with an unknown graph under the BGLM model was studied in \cite{unknown-comb-feng2025combinatorial}.
\cite{additive-malek2023additive} studied causal bandits with an unknown graph under an additive outcome assumption and general interventions, showing that the problem can be reduced to an additive combinatorial linear bandit with full-bandit feedback.
For continuous models, \cite{linear3-yan2024linear} studied unknown-graph causal bandits under linear structural equations. More recently, \cite{linear4-unknown-peng2025asymmetric} considered unknown-graph causal bandits with soft interventions under a linear structural equation model and proposed a sub-graph learning UCB method that controls false negative graph errors. Similarly, \cite{unknown-new2-zhao2025causal} studied unknown-graph causal bandits under a Gaussian linear DAG, using backdoor adjustment to combine observational and experimental data in a UCB algorithm. Finally, \cite{elahi2024partial} demonstrated that, when the graph is unknown, partial causal discovery suffices to identify the set of POMISs.

In contrast to the aforementioned works, we do not assume any particular structure on the graph or the distribution. We allow interventions of size $m$, thus generalizing previous studies that only consider atomic or general interventions.

\paragraph{Adaptivity to Unknown Parameters.} 
Since we study the case where the size of the reward's parent set is unknown and provide an algorithm that adapts to it, our work is also related to the broader literature on adaptivity to unknown parameters in bandit problems. The closest work is \cite{multiple-zhu2020regret}, which considers bandits with multiple optimal arms where the number of optimal arms is unknown to the learner. Other examples include continuum-armed bandits~\cite{x1-agrawal1995continuum, x2-locatelli2018adaptivity, x3-hadiji2019polynomial}, where algorithms are designed to adapt to an unknown smoothness parameter.

\section{PRELIMINARIES AND PROBLEM SETUP}

In this section, we formally introduce the problem setup for the causal bandits problem with an unknown graph and fixed-sized interventions, adapting a terminology similar to prior work.

A causal graph $\G$ over $n$ random variables $\X = \{X_1, X_2, \ldots, X_n\}$ is a directed acyclic graph (DAG), where an edge $X_i \rightarrow X_j$ indicates that $X_i$ directly causes $X_j$, i.e., changes in the value of $X_i$ while keeping all other variables fixed may alter the distribution of $X_j$. 
We assume causal sufficiency holds, i.e., there are no unobserved variables.  
Let $\pa{X_i}$ denote the set of parents of node $X_i$ in $\G$. We assume that each variable in $\G$ takes values in $[\ell] \coloneq \{1, \ldots, \ell\}$ for some positive integer $\ell$.\footnote{The extension to finite domains with different cardinalities is straightforward.}

The causal bandits problem is a sequential game between an agent and an environment. The environment selects a causal graph $\G$ on variables $X_1, \ldots, X_n$ and a reward variable $Y$, together with the conditional dependencies consistent with the graph. Let $\pay \subseteq \X$ denote the set of parents of the reward node in the causal graph with cardinality $k \coloneq  |\pay|$. 
We assume that the agent does not know $\pay$ but may have knowledge of the graph $\G$ over $\X$ or the value of $k$.

\textbf{Action:} At each round, the agent selects a subset of variables of size at most $m$ and fixes their values through intervention. 
Such a selection is called an \emph{action} or playing an \emph{arm}. We use these terms interchangeably.
Let $P([n])$ denote the power set of $[n]$. Each action $a$ is defined by a pair $(p_a, s_a)$, where $p_a \in P([n])$ is a subset of $[n]$ indicating the indices of the intervened variables with $|p_{a}| \leq m$, and $s_a\in [\ell]^{|p_a|}$ is the corresponding vector of their assigned values.  
Let $\mathcal{A}$ denote the set of all such possible actions. 
We also define $\A_i \coloneq  \{ a \in \A \mid |p_a| = i \}$ as the set of actions with intervention size $i$. Then, $|\A_i| = \binom{n}{i} \ell^i$ and $|\mathcal{A}|=\sum_{i=0}^m \binom{n}{i} \ell^i$. 
We assume that $p_a$ is sorted in increasing order and that $s_a$ is aligned with this ordering: the $i$-th entry of $s_a$ specifies the value assigned to the variable whose index is the $i$-th element of $p_a$.
Formally, at round $t$, the agent chooses an action 
$
a_t = (p_{a_t}, s_{a_t})\in \mathcal{A}.
$
With a slight abuse of notation, for any set $S$ and integer $r$, we denote by $\binom{S}{r}$, the family of all subsets of $S$ with $r$ elements. 


\textbf{Reward:} After performing action $a_t$ at round $t$, the agent observes a vector $\paran{\mathbf{x}^{(t)}, y_t} = (x_{1,t}, \ldots, x_{n,t}, y_t)$, which is a sample from the post-interventional distribution $\pr \paran{ X_1, \ldots, X_n, Y \mid do(\mathbf{X}_{p_{a_{t}}} = s_{a_t})}$. 
We denote the expected reward of this action by $\mu_{a_t} \coloneq  \mathbb{E}[Y \mid a_t]=\mathbb{E}[Y \mid do(\mathbf{X}_{p_{a_{t}}} = s_{a_t})]$.
Let $a^* \in \arg\max_{a \in \A} \mu_a$ be an optimal action. We assume that $\mu_a \in [0,1]$ for all $a \in \A$ and that the distribution of $Y$ under each action is $1$-sub-Gaussian, which is a common assumption in the bandit literature \cite{bandit-book2-bubeck2012regret, bandit-book1-lattimore2020bandit}.

\textbf{Environments:} We denote by $\E(n, \ell, k)$ the class of environments with $n$ variables, each taking values in $[\ell]$, a reward node with $k$ parents and 1-sub-Gaussian reward distributions with means in $[0,1]$.
When the parameters are fixed, we write $\E$.

\textbf{Regret:} 
Agent's policy, denoted by $\pi$, represents a sequence of actions over time  i.e., $\pi=(a_1, a_2,...)$. 
Accordingly,  the cumulative regret of a policy $\pi$ in an environment $\V \in \E$ over $T$ rounds is defined as
$$
R_T(\pi, \V) \coloneq  T \mu_{a^*} - \sum_{t \in [T]} \mu_{a_t}.
$$
This corresponds to the cumulative gap between the expected reward of the optimal action and that of the actions chosen by $\pi$. 

\textbf{Agent's objective:} 
To encode the different assumptions about the agent’s prior knowledge, we introduce an \textit{information set} $\I$ that specifies what the agent knows in advance $\big($e.g., $\I = \emptyset$, $\{k\}$, $\{\G\}$, or $\{k,\G\} \big)$ which correspond to the agent having no side information, knowing the number of parents of the reward node,  the causal graph over the random variables $\X$, or both, respectively).  
We define $\Pi(\I)$ as the set of all policies that can utilize prior information $\I$. 
The agent's goal is to design a policy $\pi\in\Pi(\I)$ that minimizes the worst-case cumulative regret over the set of possible instances defined by
$$
R_T(\pi, \E) \coloneq  \max_{\V \in \E} R_T(\pi, \V).
$$
Next lemma shows that there always exists an optimal action in $\A_m$. This allows us to design algorithms that only explore the actions in $\A_m$.

\begin{restatable}{lemma}{optimalActionAm} \label{lem: optimal action A_m}
    Under causal sufficiency, for any values of $n,\ell,k,m$ and any instance $\V \in \E$, there exists an optimal action with the maximum intervention size, that is
$
        \max_{a \in \A_m} \mu_a \;=\; \mu_{a^*}.
$
\end{restatable}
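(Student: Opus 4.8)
The plan is to start from any optimal action and enlarge its intervention set one variable at a time, each enlargement chosen so the expected reward does not decrease, until the intervention size equals $m$. Writing $do(a)$ for $do(\mathbf{X}_{p_a}=s_a)$, the one reusable fact I would establish is the following: if $a=(p_a,s_a)\in\A$ and $X_i\in\X\setminus p_a$ has all of its parents in $\G$ inside $p_a$ -- equivalently, $X_i$ is a source of the subgraph of $\G$ induced on $\X\setminus p_a$ -- then, setting $a^{+v}:=(p_a\cup\{i\},(s_a,v))$ with the value vector re-sorted to respect the indexing convention,
$$
\mu_a \;=\; \sum_{v\in[\ell]} \pr\!\left(X_i=v \mid do(a)\right)\,\mu_{a^{+v}}
\;\;\le\;\; \max_{v\in[\ell]}\mu_{a^{+v}}.
$$

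To prove this identity I would work inside the structural causal model underlying $\V$. Under causal sufficiency the exogenous noises $U_1,\dots,U_n,U_Y$ are mutually independent and $U_j$ feeds only into $X_j$; since every parent of $X_i$ is clamped by $do(a)$, under $do(a)$ the value of $X_i$ is a deterministic function $h_i(U_i)$ of its own noise. Conditioning on $U_i$ and using the composition (consistency) property of interventions -- forcing $X_i$ to the value it already takes is a null operation on everything downstream -- the law of $Y$ given $(do(a),U_i)$ coincides with its law under $do\!\left(a^{+h_i(U_i)}\right)$, so that $\mathbb{E}[Y\mid do(a),U_i]=\mu_{a^{+h_i(U_i)}}$ (the noise $U_i$ is inert once $X_i$ is fixed). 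Taking expectation over $U_i$ and grouping realizations according to the value $v=h_i(U_i)$, whose probability is exactly $\pr(X_i=v\mid do(a))$, yields the displayed identity, and the inequality is immediate. Equivalently, $\mu_{a^{+v}}=\mathbb{E}[Y\mid do(a),X_i=v]$ follows from rule~2 of do-calculus, since $X_i$ being a source of the residual graph leaves no unblocked back-door path from $X_i$ to $Y$ after conditioning on $p_a$.

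Granting the fact, the lemma follows by a short induction. Let $a^*=(p^*,s^*)$ be an optimal action (one exists since $\A$ is finite); if $|p^*|=m$ we are done. Otherwise $|p^*|<m\le n$, so $\X\setminus p^*\neq\emptyset$; fix a topological order $X_{i_1},X_{i_2},\dots$ of $\G$ restricted to $\X\setminus p^*$ and append $X_{i_1},\dots,X_{i_{m-|p^*|}}$ in turn, at each step choosing a value attaining $\max_{v}\mu_{a^{+v}}$. When $X_{i_r}$ is appended, all its $\G$-parents already lie in $p^*\cup\{i_1,\dots,i_{r-1}\}$ by the topological order, hence it is a source of the current residual subgraph and the fact applies; therefore the expected reward is nondecreasing along this chain. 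After $m-|p^*|$ steps we obtain an action $a'\in\A_m$ with $\mu_{a'}\ge\mu_{a^*}$, and since $\A_m\subseteq\A$ gives $\mu_{a'}\le\mu_{a^*}$, we conclude $\max_{a\in\A_m}\mu_a=\mu_{a'}=\mu_{a^*}$.

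The only genuinely delicate step is the identity for $\mu_a$: one must verify that conditioning on $X_i$ equals intervening on $X_i$, which is exactly where it matters that $X_i$ is a source of the residual graph and that causal sufficiency renders the exogenous noises independent. Everything else -- a nonempty DAG has a source, and the reward telescoping along the topological order -- is routine.
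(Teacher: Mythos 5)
Your proof is correct and is essentially the paper's own argument: the paper also extends an arbitrary action one variable at a time, choosing the next variable as the first non-intervened node in a topological order (hence a source of the residual graph), and justifies $\mathbb{E}[Y\mid do(a),X_i=v]=\mathbb{E}[Y\mid do(a^{+v})]$ via rule~2 of do-calculus with an explicit path-blocking check — your SCM/consistency derivation of the same identity is an equivalent substitute, and you even note the do-calculus route. The only cosmetic difference is that the paper handles $m\ge k$ separately by decomposing $\mu_a$ over parent-value configurations, whereas your unified extension argument covers both regimes at once.
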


\section{GRAPH LEARNING MIGHT BE SUBOPTIMAL} \label{sec: trade-off}

Under causal sufficiency and for action sizes at least as large as the reward node’s parent set ($m \geq k$), prior work~\cite{unknwon-tree-lu2021causal, mikhail-konobeev2025causal} establishes that the optimal action corresponds to an intervention on the parent set.
In the presence of unobserved variables, the parent set is replaced by the set of possibly optimal minimal intervention sets (POMIS)~\cite{pomis-lee2018structural}.
Thus, in causal bandits, if the causal graph is known, this knowledge can be used to reduce the exploration set. However, when the causal graph is unknown, the following fundamental question arises: \emph{does regret minimization necessarily require identification of the parent set?}

In this section, we show that, under causal sufficiency and without any distributional assumptions, identifying the parent set is not necessarily optimal for minimizing cumulative regret. In fact, we shall see that there are instances where the two objectives, regret minimization and parent set identification, are at odds. Thus, in such cases, no algorithm that achieves the optimal regret rate can simultaneously identify the parent set with high probability. To formalize this, we define the probability of parent misidentification for a given algorithm.

\begin{definition}[Parent-Identification] \label{def: parent-id}
In an environment $\V\in\E$, a parent identification algorithm uses the interventions generated by a policy $\pi\in\Pi(\I)$ for $T$ rounds along with a decision rule $\bar{d}$ to output an estimated parent set of the reward node, denoted by $\widehat{Pa}_T(\pi,\bar{d}, \V)\subseteq \X$. 
For a subclass $\E_0 \subseteq \E$ of instances, we define 
    $\delta_T \paran{\pi,\bar{d}, \E_0}$ as the maximum probability of misidentifying the true parent set after $T$ rounds over instances in $\E_0$, i.e.,
    $$
    \delta_T \paran{\pi, \bar{d} ,\E_0} \coloneq  \max_{\V \in \E_0} \pr \paran{\widehat{Pa}_T(\pi,\bar{d}, \V) \neq \pay(\V)}.
    $$ 
\end{definition}
To demonstrate the trade-off between the probability of identifying a reward's parent set and regret, we introduce a subclass of instances $\E_0$, in which any parent identification algorithm with $m = k$ that achieves low error rates (in terms of $T$) necessarily incurs suboptimal regret. To show that $\E_0$ is not a collection of degenerate instances where, for example, identifying the parent set is impossible, we propose a simple uniform sampling algorithm and prove that it achieves good performance in the parent identification task over $\E_0$. 

\textbf{Uniform Sampling.} 
Consider the case $m = k$. The uniform sampling policy $\pi_{\text{Unif}}$, on any instance $\V$, plays each  action $a \in \A_m$ equally often, that is, $\tfrac{T}{\binom{n}{m} v^m}$ times. 

\begin{restatable} [Identification-Regret Trade-Off] {theorem} {tradeOff} \label{them: id-regret-trade-off}
    There exists a subclass $\E_0 \subseteq \E(n,\ell,k)$ such that for $m = k$, the two following statements hold: 
    \begin{enumerate}[noitemsep,nolistsep]
        \item There exists a decision rule $\bar{d}_{\text{Unif}}$ that combined with the uniform sampling policy achieves $\delta_T \paran{\pi_{\text{Unif}},\bar{d}_{\text{Unif}}, \E_0} \in \ocal \paran{ \exp \paran{- T}}$. \vspace{0.2cm}
        \item The regret of any policy $\pi$ for which there exists a decision rule $\bar{d}$ such that $\delta_T \paran{\pi, \bar{d}, \E_0} \in \ocal \paran {\exp \paran{- T^{\alpha}}}$ grows as $R_T\big( \pi, \E_0, \{k\} \big) \in \Omega \paran{T ^ \alpha}$.
    \end{enumerate}
\end{restatable}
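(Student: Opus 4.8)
The plan is to exhibit $\E_0$ by hand and then prove the two claims by, respectively, an elementary concentration argument and a change-of-measure (Bretagnolle--Huber) argument. I would take $\E_0$ (for $k\ge 2$, and with $\ell\ge 2$, $n\ge k+1$) to be the family $\{\V_k,\V_{k+1},\dots,\V_n\}$ of instances that all share the empty graph over $\X$ (so knowing $\G$ is useless here) and differ only in one parent of $Y$: in each $\V_j$ the variables $X_1,\dots,X_n$ are i.i.d.\ uniform on $[\ell]$, $\pay(\V_j)=\{X_1,\dots,X_{k-1},X_j\}$, and $Y\mid\mathrm{pa}\sim\mathrm{Bernoulli}\big(f(x_1,\dots,x_{k-1},x_j)\big)$ for one fixed $f\colon[\ell]^k\to(0,1]$ that \emph{saturates on the all-$\ell$ slice}: $f(\vec v,u)=1$ whenever $\vec v=(\ell,\dots,\ell)$, while for every other $\vec v$ the section $u\mapsto f(\vec v,u)$ is non-constant with values in a fixed $[\eta,1-\Delta]\subset(0,1)$ (e.g.\ affine in $u$). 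The two structural properties I would establish up front are: \textbf{(i)} the action $a^\star=do(X_1=\ell,\dots,X_{k-1}=\ell,X_k=\ell)\in\A_k$ satisfies $\mu_{a^\star}=1$ in \emph{every} $\V_j$ and induces the \emph{same} observation law in every $\V_j$ — saturating the common parents forces $Y\equiv 1$ regardless of which variable is the $k$-th parent — and more generally any action setting $X_1=\dots=X_{k-1}=\ell$ is simultaneously optimal and instance-independent; \textbf{(ii)} since $m=k$, an action that pins all $k-1$ common parents has exactly one remaining intervention slot, so it cannot pin two distinct candidates $X_j,X_{j'}$ at once; hence any action whose observation law differs between $\V_j$ and $\V_{j'}$ must leave $X_1=\dots=X_{k-1}=\ell$ unsatisfied, and every such action has $\mu_a\le 1-\Delta'$ with $\Delta':=\tfrac{\ell-1}{\ell}\Delta>0$ in \emph{both} $\V_j$ and $\V_{j'}$. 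Thus the nearly-optimal actions and the instance-distinguishing (``informative'') actions form disjoint sets, uniformly over $\E_0$ — this is the tension between high reward and identifiability.

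For the first claim I would use that $\pi_{\mathrm{Unif}}$ plays each of the $|\A_k|=\binom nk\ell^k$ arms $T/|\A_k|=\Theta(T)$ times; in particular it plays the probing arms $b_i^{(u)}:=do(X_1=1,\dots,X_{k-1}=1,X_i=u)$, $i\in\{k,\dots,n\}$, $u\in[\ell]$, $\Theta(T)$ times each. In $\V_j$ one has $\mathbb{E}[Y\mid b_i^{(u)}]=f(1,\dots,1,u)$ if $i=j$ and $=\tfrac1\ell\sum_{u'}f(1,\dots,1,u')$ (independent of $u$) if $i\neq j$, so by non-constancy of $f(1,\dots,1,\cdot)$ the empirical spread $\max_u\bar Y(b_i^{(u)})-\min_u\bar Y(b_i^{(u)})$ concentrates around a fixed positive constant exactly when $i=j$ and around $0$ otherwise. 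Let $\bar d_{\mathrm{Unif}}$ report the parent set obtained from the $i$ with the largest spread; Hoeffding over the $\Theta(T)$ samples behind each empirical mean plus a union bound over the $O(n\ell)$ relevant means give misidentification probability $\exp(-\Omega(T))$ over $\E_0$, which is the claimed $\exp(-T)$ rate up to the constant in the exponent.

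For the second claim I would run the usual change-of-measure. Let $\pi$ and $\bar d$ satisfy $\delta_T(\pi,\bar d,\E_0)\le C\exp(-T^\alpha)$ and fix $j\neq j'$. Because $\{\V_j,\V_{j'}\}\subseteq\E_0$, the event $E=\{\widehat{Pa}=\pay(\V_j)\}$ has $\pr_{\V_j}(E^c)\le C\exp(-T^\alpha)$ and, using $\pay(\V_j)\neq\pay(\V_{j'})$, also $\pr_{\V_{j'}}(E)\le C\exp(-T^\alpha)$; Bretagnolle--Huber then gives $\mathrm{KL}\big(\pr^{\V_j}_\pi\,\|\,\pr^{\V_{j'}}_\pi\big)\ge T^\alpha-\ln(4C)\ge\tfrac12 T^\alpha$ for large $T$, where $\pr^{\V}_\pi$ is the law of the $T$-round history. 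The divergence-decomposition identity rewrites this as $\sum_{a\in\A}\mathbb{E}_{\V_j}[N_a(T)]\,\mathrm{KL}(P^{\V_j}_a\|P^{\V_{j'}}_a)$, and by (ii) only informative actions contribute, each with $\mathrm{KL}(P^{\V_j}_a\|P^{\V_{j'}}_a)\le D$ for a constant $D$ (all Bernoulli parameters involved lie in the compact interval $[\eta,1-\Delta']\subset(0,1)$, so these divergences are finite and bounded). Hence $\sum_{a\ \text{informative}}\mathbb{E}_{\V_j}[N_a(T)]\ge T^\alpha/(2D)$, and since by (ii) each informative action has regret $\mu_{a^\star}-\mu_a\ge\Delta'$ in $\V_j$,
\[
\mathbb{E}_{\V_j}[R_T]=\sum_{a\in\A}\mathbb{E}_{\V_j}[N_a(T)]\,(\mu_{a^\star}-\mu_a)\ \ge\ \Delta'\!\!\sum_{a\ \text{informative}}\!\!\mathbb{E}_{\V_j}[N_a(T)]\ \ge\ \frac{\Delta'}{2D}\,T^\alpha .
\]
As $\V_j\in\E_0$, this forces $R_T(\pi,\E_0,\{k\})=\Omega(T^\alpha)$; the argument is unaffected by the side information $\{k\}$ (or $\{k,\G\}$), since all instances in $\E_0$ share $k$ and the graph over $\X$.

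I expect the main obstacle to be the construction — specifically, arranging (i) and (ii) together. Making a single optimal action uninformative, or making informative actions suboptimal in \emph{some} instance, is easy; the subtle point is that informative actions must be suboptimal in \emph{every} instance of the hard pair $\{\V_j,\V_{j'}\}$, for otherwise a policy could accumulate the information that the change-of-measure step charges for by replaying an action that is optimal in $\V_j$ yet informative against $\V_{j'}$, collapsing the lower bound on $\mathbb{E}_{\V_j}[R_T]$. Saturation on the all-$\ell$ slice together with the budget $m=k$ (no action can pin all $k-1$ common parents \emph{and} two candidates) is precisely what enforces the disjointness, which is also why $k\ge 2$ is needed and why $f$ must retain some non-constancy off the saturation slice — that residual signal is what the identification bound of the first claim relies on. The remaining items (finiteness and uniform boundedness of the per-action KL terms, validity of the divergence decomposition for randomized history-dependent policies) are routine once the Bernoulli means are confined to a fixed compact subinterval of $(0,1)$.
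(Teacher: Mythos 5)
Your proposal is correct in its overall architecture and matches the paper's strategy step for step: build a class in which the uniformly optimal actions carry no information about the parent identity while every instance-distinguishing action is uniformly suboptimal, then prove claim~1 by uniform sampling plus Hoeffding and a thresholding rule, and claim~2 by Bretagnolle--Huber combined with the divergence decomposition to force $\Omega(T^\alpha)$ pulls of informative (hence $\Delta'$-suboptimal) actions. Where you differ is the hard-instance construction. The paper keeps the reward mechanism trivial (mean $1$ iff all parents equal $1$, mean $0$ otherwise) and puts the ``saturation'' into the graph over $\X$: the first $k$ variables are parents of everything else and deterministically propagate the all-ones pattern, so $do(\mathbf{X}_{[k]}=\mathbf{1})$ is optimal and uninformative in every instance, while the unique informative action against $\V_p$ is $do(\mathbf{X}_p=\mathbf{1})$. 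You instead use the empty graph and put the saturation into the reward function via $k-1$ shared parents and an all-$\ell$ slice where $f\equiv 1$. Both enforce the same disjointness, but your version genuinely requires $k\ge 2$ (with no common parents there is nothing to saturate), whereas the paper's propagation-through-the-graph trick covers $k=1$ as well; since the theorem is stated for arbitrary $k$, this is the one real scope restriction you should either close (e.g., by adopting a graph-based propagation for $k=1$) or state explicitly. Two further minor points: your parenthetical that ``all Bernoulli parameters involved lie in $[\eta,1-\Delta']$'' is not literally true for informative actions whose unpinned common parents happen to realize the all-$\ell$ pattern --- there the parameter is $1$ in both instances, so the conditional KL contribution is zero and the bound $\mathrm{KL}(P^{\V_j}_a\Vert P^{\V_{j'}}_a)\le D$ still holds, but the justification should say this; and the $N_a(T)$ in your divergence decomposition must count plays of action $a$, not observed value patterns, which is what your regret identity implicitly assumes. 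Neither affects correctness. One thing your route buys over the paper's is that the empty graph makes it immediate that knowledge of $\G$ is worthless here; the paper achieves the same by fixing a single shared nontrivial $\G$ across $\E_0$.
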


\textbf{Proof Sketch:} The key step in the proof of this result is the construction of a class of instances in which the set of high-reward actions and the set of informative actions for identifying the set of parents are disjoint. Consequently, at each round the learner must decide to either minimize the regret by playing an optimal action or to play an action that is informative for learning the parents. 
To illustrate the construction, consider an instance with $n$ binary variables and let $m = k$. The causal model is defined such that every variable is equal to $0$ unless intervened upon, and the first $k$ variables are the parents of all others. If all of these $k$ variables are set to $1$, then all remaining variables also become $1$ deterministically. The reward is defined to have mean $0$ for all actions except the one that sets the first $k$ variables to $1$, which yields the expected reward of $1$.

In this setting, the unique high-reward action sets the first $k$ variables to $1$, but this simultaneously forces all other variables to be $1$, making it impossible to distinguish which variables actually influence the reward. To identify the parent set, the learner must instead intervene on arbitrary subsets of $k$ variables and test whether setting them to $1$ increases the reward. However, such actions are suboptimal and incur regret. This complete separation between high-reward and informative actions is the fundamental source of the trade-off.

The construction used in the proof is intentionally degenerate to highlight this phenomenon clearly. While similar lower bounds might be established for non-degenerate models, doing so requires more involved constructions, and the trade-off might be weaker. For instance, suppose we impose a non-degeneracy condition such that each variable takes any possible value with probability at least $\epsilon > 0$ under any parent configuration. In the regime $T \gg 1/\epsilon$, the situation becomes more nuanced. Even when the learner plays the optimal (low-regret) action, it observes alternative configurations of other variables with non-zero probability. These observations can reveal that changes in those variables do not affect the reward, effectively providing information about the parent set ``for free.'' This suggests that, in such settings, the trade-off would be weaker.
More details are in Appendix \ref{proof:id-regret-trade-off}.

Theorem~\ref{them: id-regret-trade-off} shows the fundamental trade-off between the cumulative regret minimization and the parent identification. More precisely, it implies that any policy achieving the same parent-identification performance as the one using $\bar{d}_{\text{Unif}}$ and uniform sampling (i.e., $\alpha = 1$) necessarily suffers linear cumulative regret. 
It is noteworthy that there are other pure-exploration objectives in bandit literature without any fundamental trade-off with the cumulative regret. For instance, \emph{simple regret} objective seeks to minimize the expected difference between the estimated and true best arm~\cite{simple1-bubeck2009pure, simple2-zhao2023revisiting}. For simple regret, it is established that there is no fundamental trade-off with cumulative regret: algorithms that are optimal for cumulative regret can, up to constant factors, also achieve optimal simple regret. For further details, see \cite[Section~33]{bandit-book1-lattimore2020bandit}.


\section{KNOWN PARENT SIZE} \label{sec: known-k}

In this section, we address the cumulative regret minimization problem in causal bandits with an unknown graph, where the number of parents of the reward node $k$ is known to the agent, i.e., $k \in \I$. However, the exact parent set $\pay$ is not known. 
We consider two regimes based on the relation between the intervention size ($m$) and the number of parents ($k$): $m \geq k$ and $m < k$. 
First, we establish lower bounds on the worst-case regret for both regimes, and then propose an algorithm together with an upper bound on its regret that is close to the lower bounds in both regimes.

\subsection{Lower Bounds} 


The results of this section highlight the difficulty of worst-case regret minimization in causal bandits when the agent knows only the number of parents $k$ but not their identities. The bounds are information-theoretic and hold for any policy $\pi$.

An important feature of both lower bounds is that they remain valid even under the additional assumption that the agent has full knowledge of $\G$, the causal graph over $\X$, the set of non-reward variables. Moreover, the bounds hold for any such causal graph. Therefore, the crucial information to minimize the regret is the knowledge of the parent set $\pay$ as knowing  $\G$ does not improve the worst-case regret guarantees.

\begin{restatable} [$m \geq k$ Lower Bound] {theorem} {mgeqkLB} \label{thm: m-geq-k-lower-bound}
    For any policy $\pi\in\Pi(\{k, \G\})$, any values $n \geq m \geq k$, and any causal graph $\G$:
        \begin{align} \label{eq: m-geq-k-lower-bound}
        R_T \big( \pi, \E \big) \in \Omega \paran{\sqrt{T \max \paran{(\ell-1)^k \frac{\binom{n}{k}}{\binom{m}{k}}, \; \ell^k}}}.
    \end{align}
\end{restatable}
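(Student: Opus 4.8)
The plan is to establish the two terms of the maximum via two separate hard families of instances; since $R_T(\pi,\E)$ is a maximum over $\V\in\E$, it is at least the larger of the two resulting bounds. In both families I fix the prescribed graph $\G$ and take the observational law of $\X$ to be the point mass at $\mathbf{1}=(1,\dots,1)$, which is Markov with respect to every DAG. Under any intervention $do(\mathbf{X}_{p_a}=s_a)$ the resulting law of $\X$ is then a point mass that is identical across all instances in a family, so observing $\X$ — and a fortiori knowing $\G$, $k$, or the trivial $\X$-dynamics — carries no information, and all the difficulty is pushed into the hidden reward mechanism. I take $Y$ to be Gaussian with unit variance (hence $1$-sub-Gaussian, with mean in $[0,1]$ as long as the gap parameter $\epsilon\le\tfrac12$). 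In both families each constructed instance genuinely has $k$ parents of $Y$ and respects $\G$ (and, consistent with \Cref{lem: optimal action A_m}, its optimal-action set contains actions of size $m$).

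\emph{First term.} Set $\Theta=\{(S,s):S\in\binom{[n]}{k},\ s\in\{2,\dots,\ell\}^k\}$, so $|\Theta|=(\ell-1)^k\binom{n}{k}$, and for $(S,s)\in\Theta$ let $\V_{S,s}$ be the instance with reward mean $\mathbb{E}[Y\mid do(\mathbf{X}_{p_a}=s_a)]=\tfrac12+\epsilon\,\mathbf{1}\{\mathbf{X}_S=s\}$. Since $\X\equiv\mathbf{1}$ and every coordinate of $s$ differs from $1$, one checks that $\mathbf{X}_S=s$ under action $a$ if and only if $S\subseteq p_a$ and $s_a|_S=s$; denoting this action set by $G_{S,s}$, every action outside $G_{S,s}$ has reward mean exactly $\tfrac12$ and is thus $\epsilon$-suboptimal in $\V_{S,s}$, while $\pay=S$ so $\V_{S,s}\in\E(n,\ell,k)$. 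The key combinatorial observation is that a fixed action $a$ belongs to $G_{S,s}$ for at most $\binom{m}{k}$ pairs: $S$ must be one of the $\le\binom{m}{k}$ size-$k$ subsets of $p_a$, after which $s=s_a|_S$ is forced. Because $G_{S,s}$ is also exactly the set of actions whose reward law differs from that of the reference instance $\V_0$ with constant reward mean $\tfrac12$ (a comparison point that need not lie in $\E$), the divergence decomposition gives $\kl{\pr^\pi_0\,\|\,\pr^\pi_{S,s}}=\tfrac{\epsilon^2}{2}\,\mathbb{E}_0\!\left[\sum_{a\in G_{S,s}}N_a(T)\right]$, with $N_a(T)$ the play count of $a$. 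Summing over $\Theta$ and using the combinatorial bound, $\sum_{(S,s)\in\Theta}\mathbb{E}_0\!\left[\sum_{a\in G_{S,s}}N_a(T)\right]\le\binom{m}{k}T$, so both these KL values and the quantities $\mathbb{E}_0[N_{G_{S,s}}(T)]$ are, on average over $\Theta$, at most $\binom{m}{k}T/|\Theta|$. A standard Bretagnolle--Huber argument applied to the event $\{N_{G_{S,s}}(T)\ge T/2\}$ then shows that for a constant fraction of the pairs $(S,s)$ the policy spends $\Omega(T)$ rounds outside $G_{S,s}$, i.e.\ $R_T(\pi,\V_{S,s})=\Omega(\epsilon T)$, once $\epsilon\asymp\sqrt{|\Theta|/(\binom{m}{k}T)}$ (and $\epsilon\le\tfrac12$, which holds for $T\gtrsim|\Theta|/\binom{m}{k}$; for smaller $T$ the claimed bound is vacuous since gaps are at most $1$). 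Maximizing over $\Theta$ gives $R_T(\pi,\E)=\Omega\!\left(\sqrt{(\ell-1)^k\binom{n}{k}T/\binom{m}{k}}\right)$.

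\emph{Second term.} Now fix $\pay=[k]$ and, for each needle $s^\star\in[\ell]^k$, let $\V_{s^\star}\in\E(n,\ell,k)$ have reward mean $\tfrac12+\epsilon\,\mathbf{1}\{\mathbf{X}_{[k]}=s^\star\}$. Again because $\X$ is deterministic, every action has reward mean exactly $\tfrac12$ or exactly $\tfrac12+\epsilon$, and a fixed action is optimal (equivalently, informative relative to the null) for at most one needle $s^\star$. Hence this family behaves exactly like a stochastic bandit with $\ell^k$ arms and gap $\epsilon$ — the remaining always-$\tfrac12$ actions only make the task harder — and the classical $\Omega(\sqrt{KT})$ lower bound for $K$-armed bandits (e.g.\ \cite[Ch.~15]{bandit-book1-lattimore2020bandit}) yields $R_T(\pi,\E)=\Omega(\sqrt{\ell^k T})$. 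Combining the two parts completes the proof.

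\emph{Main obstacle.} The delicate point is producing the $\binom{m}{k}$ in the denominator honestly: one must prevent a single action from leaking a little information about many of the $|\Theta|$ hypotheses simultaneously. This is precisely why the construction uses a deterministic $\X$: it forces every action that does not exactly realize the target configuration to have reward mean exactly $\tfrac12$, so the informative set coincides with the optimal set $G_{S,s}$, which any fixed action meets at most $\binom{m}{k}$ times. The rest is routine — checking that each instance lies in $\E(n,\ell,k)$ and is compatible with the arbitrary fixed $\G$, the Gaussian KL computation, the Bretagnolle--Huber and averaging step, and the truncation of $\epsilon$ for small $T$ — together with invoking the textbook multi-armed-bandit lower bound for the second term.
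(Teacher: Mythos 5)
Your proposal is correct and follows essentially the same route as the paper: the paper also builds a deterministic all-ones environment with needle-type reward bumps indexed by pairs $(p,s)$, its first candidate set (all entries of $s$ different from $1$) is exactly your family $\Theta$ with the same key counting that a single action can realize at most $\binom{m}{k}$ such configurations, and its second candidate set plays the role of your $\ell^k$-armed reduction. The only differences are in packaging — the paper proves a general Pinsker-based lower bound and optimizes it over structured candidate sets, while you run a direct Bretagnolle–Huber plus averaging argument on two separate families and invoke the classical $K$-armed bound for the second term — which are standard and interchangeable here.
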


\textbf{Proof Sketch:} To prove this theorem, we establish a novel and more general statement. 
Let $\V_0 \in \E$ denote the neutral instance where the graph $\G$ is the empty graph, meaning that the variables are random and independent such that the value of any non-intervened variable is always $1$, and the reward distribution is always $\mathcal{N}(0,1)$ regardless of the values of its parents. 
Fix a policy $\pi\in\Pi(\{k, \G \} )$. For any pair $(p,s)$ with $p$ denoting a subset of $[n]$ of size $k$ and $s \in [\ell]^k$, let $w_{p,s}(\pi)$ denote the expected fraction of times $t$ in $T$ rounds of interaction between $\pi$ and $\V_0$ such that $\mathbf{x}^{(t)}_p = s$. Let $\pcal_{k}$ denote the set of all such pairs $(p,s)$. We prove the following lower bound, which holds for  any graph $\G$:
$$
R_T \big(\pi, \E\big) \in \Omega \!\left( 
\max_{\J \subseteq \pcal_{k}} (1 - c_{\J})^2
\sqrt{\,T \; \frac{\left| \J \right|}{\sum_{(p,s) \in \J} w_{p,s}(\pi)} }
\right),
$$
where $c_{\J}$ is the maximum value of the fraction $\frac{\sum_{(p,s) \in \J} w_{p,s}(\pi)}{\left| \J \right|}$ over all policies. 
We then focus on a collection of candidate sets $\scal$ with $|\scal| = k+1$, and show that maximizing the expression over $\J \in \scal$ yields the regret lower bound stated in the theorem.

In the second regime,  $m < k$, the agent cannot intervene on all parents simultaneously. The bound is obtained by considering instances in which there are $k$ true parents, but only $m$ of them affect the reward. 


\begin{restatable} [$m < k$ Lower Bound] {theorem} {mlesskLB} \label{thm: m-less-k-lower-bound}
    For any policy $\pi\in\Pi(\{k, \G\})$, any values $n \geq k > m$, and any graph $\G$, we have 
    \begin{align} \label{eq: m-less-k-lower-bound}
        R_T \big( \pi, \E \big) \in \Omega \paran{\sqrt{T \max \paran{(\ell-1)^m {\binom{n}{m}}, \; \ell^m}}}.
    \end{align}
\end{restatable}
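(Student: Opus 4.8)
The plan is to follow the same recipe used for Theorem~\ref{thm: m-geq-k-lower-bound}, but specialized to the $m<k$ regime, where the combinatorial hardness comes only from the intervention size $m$ rather than from the parent size $k$. First I would fix an arbitrary graph $\G$ over $\X$ and an arbitrary policy $\pi \in \Pi(\{k,\G\})$, and set up the neutral instance $\V_0$ in which $\G$ is empty, all non-intervened variables are deterministically $1$, and $Y \sim \mathcal{N}(0,1)$ independently of everything. The key design choice is the family of \emph{alternative} instances: for each pair $(q,b)$ with $q \in \binom{[n]}{m}$ and $b \in [\ell]^m$, I would build an instance $\V_{q,b}$ that has exactly $k$ parents of $Y$ — to stay inside $\E(n,\ell,k)$ — but is engineered so that only the $m$ coordinates indexed by $q$ actually influence the reward, and the reward is boosted (mean $\Delta$) precisely when $\mathbf{X}_q = b$, while remaining mean $0$ otherwise. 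Concretely one picks the parent set to be any fixed superset of $q$ of size $k$ (using the ``padding'' idea mentioned in the sketch), but makes the conditional law of $Y$ depend only on $\mathbf{X}_q$. Under $\V_{q,b}$ the unique optimal action is the size-$m$ intervention $\mathrm{do}(\mathbf{X}_q = b)$, with gap $\Delta$ to every other action.

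Next I would invoke the general lower bound machinery from the proof sketch of Theorem~\ref{thm: m-geq-k-lower-bound}: against $\V_0$, let $w_{q,b}(\pi)$ be the expected fraction of the $T$ rounds in which the observed vector restricted to $q$ equals $b$. Since under $\V_0$ every non-intervened variable equals $1$, this reduces to how often $\pi$ plays an action that forces $\mathbf{X}_q = b$ (directly intervening, or intervening on a subset and having the rest be $1$). Crucially $\sum_{(q,b)} w_{q,b}(\pi)$ over a suitable disjoint-ish collection is bounded, because at each round the action affects only a bounded number of such $(q,b)$ pairs. A standard change-of-measure / divergence-decomposition argument (Bretagnolle–Huber or the KL-based regret lower bound, exactly as in Lattimore–Szepesvári Chapter~15–16) then gives, for any subfamily $\J$ of pairs,
\begin{align*}
R_T(\pi,\E) \;\in\; \Omega\!\left( (1-c_{\J})^2 \sqrt{\,T\,\frac{|\J|}{\sum_{(q,b)\in\J} w_{q,b}(\pi)}\,}\right),
\end{align*}
where $c_{\J}$ is the worst-case average weight, which is small once $|\J|$ is large. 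The two terms inside the $\max$ in \eqref{eq: m-less-k-lower-bound} then come from two choices of $\J$: taking $\J$ to be all $(q,b)$ with $b \neq \mathbf{1}$ (there are $(\ell-1)^m\binom{n}{m}$ of them, yielding the first term), versus taking $\J$ to be all value vectors $b \in [\ell]^m$ on a \emph{single} fixed coordinate set $q$ (there are $\ell^m$ of them, and because they all share the same $q$ the sum of weights is controlled round-by-round, yielding the $\sqrt{\ell^m}$ term). Optimizing $\Delta$ as $\Theta(\sqrt{|\J|/(T \sum w)})$ in the usual way converts the weight bound into the stated regret.

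The main obstacle I anticipate is the bookkeeping needed to show that $\sum_{(q,b)\in\J} w_{q,b}(\pi)$ is genuinely bounded by something like $T$ (so that $|\J|/\sum w \gtrsim |\J|$), \emph{uniformly over all policies}, including those that exploit full knowledge of $\G$. The subtlety is that a single intervention of size $m$ on $\V_0$ can simultaneously match many pairs $(q,b)$ — in fact $\binom{n}{m}$-ish of them if the intervened values happen to be $1$ — so one must be careful about which subfamily $\J$ to select so that each round contributes $O(1)$ to $\sum_{\J} w$. This is exactly why the two sub-collections above are chosen the way they are: restricting to $b\neq\mathbf{1}$ kills the "all-ones free ride," and restricting to a single $q$ forces at most one matching pair per round. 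Once the right $\J$'s are fixed, the remaining steps (the divergence decomposition, the $\Delta$ optimization, and checking $1-c_\J = \Omega(1)$) are routine and mirror the $m\geq k$ case; I would also need to double-check that each $\V_{q,b}$ indeed lies in $\E(n,\ell,k)$ with $1$-sub-Gaussian, $[0,1]$-mean rewards, which is handled by taking $\Delta$ a small constant and shifting/clipping the Gaussian means into $[0,1]$ as in the companion theorem.
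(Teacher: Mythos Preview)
Your approach is correct and shares the paper's core idea --- pad the parent set with $k-m$ neutral parents so that hard instances with only $m$ effective parents live in $\E(n,\ell,k)$ --- but the paper executes this as a two-line reduction rather than a full re-derivation: it simply invokes Theorem~\ref{thm: m-geq-k-lower-bound} with parent size $m$ (so $\binom{m}{m}=1$) and observes that every instance used there embeds into $\E(n,\ell,k)$ via neutral padding. Your explicit replay of the change-of-measure machinery is therefore unnecessary once Theorem~\ref{thm: m-geq-k-lower-bound} is available; one minor slip to fix if you do carry it out is that your first subfamily should consist of pairs $(q,b)$ with \emph{every} coordinate of $b$ different from $1$ (which gives the $(\ell-1)^m\binom{n}{m}$ count and a single match per round), not merely $b\neq\mathbf{1}$.
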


\begin{remark} [Interpretation of the Lower Bounds] \label{rem: known-k-lower-bound}
    To interpret the bounds more concretely, consider the case where $\ell \in \Omega(k)$. This is by assuming that the number of reward's parents in the causal graph is small. 
    In this case,  $\frac{\ell^k}{(\ell-1)^k} \in \ocal(1)$ and for any $\pi\in\Pi(\{k, \G\})$, the following holds:
    \begin{align}\label{eq:lower:remark0}
        &R_T \big( \pi, \E \big) \in \Omega \paran{\sqrt{T \, \ell^k \frac{\binom{n}{k}}{\binom{m}{k}}}}, \quad \text{for } m \geq k,\\ \label{eq:lower:remark1}
        &    R_T \big( \pi, \E\big) \in \Omega \paran{\sqrt{T \, \ell^m \binom{n}{m}}}, \quad \text{for } m < k.
    \end{align}    
Note that when $m < k$, the quantity $\ell^m \binom{n}{m}$ is exactly the number of actions in $\A_m$, i.e., the set of all interventions on $m$ variables. Therefore, no algorithm can achieve a better worst-case performance than running a standard UCB algorithm over the  action set $\A_m$.
\end{remark}

\subsection{Algorithm and Upper Bound} \label{sec:alg_known}

We are now ready to propose a simple algorithm that achieves near-optimal performance on broad classes of instances. 
The pseudo-code is presented in Algorithm~\ref{algo: known-k}.
Note that this algorithm does not attempt to identify the parent set $\pay$ or to recover any other causal relation among the variables. Moreover, it does not require knowledge of the causal graph $\G$; its information set is $\I=\{k\}$, i.e., the number of reward's parents. The algorithm is solely designed for regret minimization and may find the optimal action without explicitly inferring which variables are reward's parents.

For the case $m \geq k$, 
every action that intervenes on the set $\pay$ and assigns it the optimal values achieves the same expected reward, since the reward distribution depends only on the values of the parents. Leveraging this observation, Algorithm~\ref{algo: known-k} samples a random subset $\A' \subseteq \A_m$ of appropriate  size 
such that with high probability $\A'$ contains an optimal action. The algorithm then runs the standard UCB algorithm on the actions in $\A'$.
For the case $m < k$, the algorithm directly runs UCB on the entire action set $\A_m$.


\begin{algorithm}[h]
    \caption{}\label{algo: known-k}
    \begin{algorithmic}[1]      
        \STATE \textbf{Input.} The integers $m, n, k, \ell, T$. ($\G$ is unknown) 
        \IF{$m \geq k$}
            \STATE Set $n_0 = \min \paran{\ell^k \frac{\binom{n}{k}}{\binom{m}{k}} \ln \sqrt{T} \; , \; \ell^m \binom{n}{m}}$. 
        \ELSE
            \STATE Set $n_0 = \ell^m \binom{n}{m}$.
        \ENDIF
        \STATE Construct a uniformly random subset $\A' \subseteq \A_m$ with $|\A'| = n_0$. 
        \STATE Run the standard Upper Confidence Bound (UCB) algorithm on the arms in $\A'$ for $T$ rounds.
    \end{algorithmic}
\end{algorithm}

\begin{restatable} [Known $k$ Upper Bound] {theorem} {knownkUB} \label{thm: known-k-upper-bound}
    The worst-case regret of Algorithm \ref{algo: known-k} with input $k$ is  bounded by
    \begin{align} \label{eq: known-k-upper-bound}
        R_T \big( \texttt{Alg.1}[k], \E(n,\ell,k) \big)  
        \in \begin{cases}
            \tilde{\ocal} \paran{\sqrt{T  \ell^k \frac{\binom{n}{k}}{\binom{m}{k}}}}, \ m \geq k, \\
            \tilde{\ocal} \paran{\sqrt{T  \ell^m {\binom{n}{m}}}}, \ m < k,
        \end{cases}
    \end{align}
    where $\tilde{\ocal}$ hides constants and logarithmic factors.
\end{restatable}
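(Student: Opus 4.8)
The plan is to handle the two regimes of Algorithm~\ref{algo: known-k} separately. The case $m < k$ is immediate: there $n_0 = \ell^m\binom{n}{m} = |\A_m|$, so $\A' = \A_m$, and by Lemma~\ref{lem: optimal action A_m} the set $\A_m$ already contains an optimal action. Hence running UCB on $\A_m$ loses nothing to approximation, and the standard minimax regret guarantee for UCB over $N$ arms with $1$-sub-Gaussian rewards, $\tilde{\ocal}(\sqrt{NT})$, yields $R_T \in \tilde{\ocal}\paran{\sqrt{T\,\ell^m\binom{n}{m}}}$.

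The case $m \geq k$ rests on two estimates. First, I would lower bound the number of optimal actions inside $\A_m$. Since the conditional law of $Y$ depends only on $\pay$ (of size $k$), and, by the characterization recalled at the start of Section~\ref{sec: trade-off}, an optimal action fixes $\pay$ to some optimal value vector $s^*$, every action $(p_a,s_a)\in\A_m$ with $p_a \supseteq \pay$ and $s_a$ agreeing with $s^*$ on $\pay$ is likewise optimal; counting the $\binom{n-k}{m-k}$ choices of the remaining intervened indices and the $\ell^{m-k}$ assignments to them gives at least $G = \binom{n-k}{m-k}\ell^{m-k}$ optimal actions in $\A_m$. Using the identity $\binom{n}{m}\binom{m}{k} = \binom{n}{k}\binom{n-k}{m-k}$, this produces the clean ratio $G/|\A_m| = \binom{m}{k}/(\ell^k\binom{n}{k}) = \big(\ell^k\binom{n}{k}/\binom{m}{k}\big)^{-1}$. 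Second, since $\A'$ is a uniformly random $n_0$-subset of $\A_m$ drawn independently of the environment, the probability it contains no optimal action is $\binom{|\A_m|-G}{n_0}/\binom{|\A_m|}{n_0} \leq (1-G/|\A_m|)^{n_0} \leq \exp(-n_0 G/|\A_m|)$. When the minimum defining $n_0$ is the first term, $n_0 = \ell^k\frac{\binom{n}{k}}{\binom{m}{k}}\ln\sqrt{T}$, so this probability is at most $\exp(-\ln\sqrt{T}) = T^{-1/2}$; when it is the second term, $n_0 = |\A_m|$ and $\A' = \A_m$, so the failure probability is $0$.

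To finish, condition on the draw of $\A'$. On the event $E$ that $\A'$ contains an optimal action, UCB on $\A'$ has regret $\tilde{\ocal}(\sqrt{n_0 T})$; on $E^c$ the regret is at most $T$ because $\mu_a \in [0,1]$. Taking expectations over $\A'$ for each fixed environment, $\mathbb{E}[R_T] \leq \tilde{\ocal}(\sqrt{n_0 T}) + \pr(E^c)\,T \leq \tilde{\ocal}(\sqrt{n_0 T}) + \sqrt{T}$, and since $n_0 \geq 1$ the last term is absorbed. As $n_0 \leq \ell^k\frac{\binom{n}{k}}{\binom{m}{k}}\ln\sqrt{T}$ in both branches, this gives $R_T \in \tilde{\ocal}\big(\sqrt{T\,\ell^k\binom{n}{k}/\binom{m}{k}}\big)$, the logarithmic factor being swallowed by $\tilde{\ocal}$. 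Every step is uniform over $\V \in \E(n,\ell,k)$ — in particular the lower bound $G$ on the number of optimal actions holds for all instances — so this is also the worst-case bound.

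I expect the main obstacle to be the interplay between the random-subset argument and the worst-case nature of the regret: one must peel off the expectation over $\A'$ first (legitimate because $\A'$ is independent of $\V$) and use that the coverage guarantee depends on $\V$ only through the universal count $G$, so that the outer $\max_{\V}$ does not destroy it. A lesser but still necessary point is getting the combinatorial identity for $G/|\A_m|$ exactly right, so the exponent $n_0 G/|\A_m|$ collapses to $\ln\sqrt{T}$ and the failure probability is $T^{-1/2}$ rather than merely a small constant — and, relatedly, invoking the minimax (rather than gap-dependent) UCB bound $\tilde{\ocal}(\sqrt{NT})$ so that the final rate has no hidden instance-dependence.
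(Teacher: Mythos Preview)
Your proposal is correct and follows essentially the same approach as the paper: compute the fraction $\alpha_k = G/|\A_m| = \binom{m}{k}/(\ell^k\binom{n}{k})$ of optimal arms in $\A_m$, bound the probability that the random subset $\A'$ misses them by $(1-\alpha_k)^{n_0}\le e^{-n_0\alpha_k}\le T^{-1/2}$, and decompose the regret into the failure event (cost $\le T\cdot T^{-1/2}$) and the UCB regret $\tilde{\ocal}(\sqrt{n_0 T})$ on the success event. The paper packages the coverage argument into a separate lemma and the ratio $\alpha_k$ into preliminary notation, but the logic, the combinatorial identity, and the handling of the two branches of $n_0$ are identical to yours.
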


Comparing the lower bounds for when $\ell \in \Omega(k)$ given in \eqref{eq:lower:remark0} and \eqref{eq:lower:remark1} with the regret bound of Algorithm~\ref{algo: known-k} in \eqref{eq: known-k-upper-bound} shows that Algorithm~\ref{algo: known-k} is optimal in both regimes up to logarithmic factors. 
Moreover, since this algorithm does not require knowledge of $\G$, whereas the lower bounds apply even to algorithms with access to $\G$,
we conclude that prior knowledge of $\G$ does not improve the worst-case regret bounds.

\begin{remark}
    In the case of $m = n$, the derived lower and upper bounds match (up to logarithmic factors), yielding a tight bound of $\tilde{\ocal} \paran{\sqrt{T \, \ell^k}}$. This is precisely the regret bound of the setting where the agent knows the exact set of $\pay$ and applies standard regret minimization algorithms over the $\ell^k$ possible arms. Hence, surprisingly, when the agent can intervene on all variables in each round, the regret with full knowledge of the graph (including the parents of the reward) is equal to that with no knowledge.   
\end{remark}

\section{UNKNOWN PARENT SIZE} \label{sec: unknown-k}

In this section, we drop the assumption that the agent knows $k$, the number of reward's parents. We first establish a lower bound showing that no algorithm can fully adapt to the unknown value of $k$ without incurring a penalty: specifically, it is impossible to achieve the same regret bounds as in the known-$k$ case uniformly over all values of $k$. 
Afterwards, we 
propose an algorithm with an empty information set, $\I=\{\}$, whose regret differs from our lower bound by only a small margin.

\paragraph{Change of performance measure:} 
When an algorithm does not know the value of $k$, its performance may vary significantly across instances with different $k$. 
Therefore, its performance should be assessed for all possible values of $k$, i.e., $k\in[n]$ rather than a single $k$. 
However, as indicated by the upper and lower bounds from the previous section, for $k > m$, any action could be optimal, meaning the algorithm must explore all actions regardless of $k$. Therefore, we focus on the regime $k \leq m \leq n$ and evaluate the performance of a policy $\pi$ using the following vector, which we call the \emph{regret vector} of the policy:
\begin{align*}
\Big[ R_T \big( \pi, \E(n,\ell,k)\big) \Big]_{k \in [m]}.    
\end{align*}
Recall that $R_T \big( \pi, \E(n,\ell,k)\big)$ denotes the worst-case regret of $\pi$ over all instances with $n$ variables and $k$ reward parents. 

One might consider the maximum worst-case regret over $k$ (i.e., the largest entry of the above vector) as a performance measure, but this is unsatisfactory: since for any $k_1 < k_2$, we have $\E(n,\ell,k_1) \subseteq \E(n,\ell,k_2)$, which implies
$$
R_T \big( \pi, \E(n,\ell,k_1)\big) \leq R_T \big( \pi, \E(n,\ell,k_2)\big).
$$
 Hence, the maximum entry of the regret vector always occurs at $k=m$. A trivial strategy that optimizes for $k=m$ is to run standard UCB on all actions; however, this is not necessarily the optimal approach. 
 On the other hand, the next theorem shows that any algorithm tailored to instances with $k = k_1$ necessarily exhibits sub-optimal performance on instances with $k > k_1$. 
 This highlights the need for algorithms that adapt to the unknown value of $k$ and perform well across all $k\in[m]$. 
To compare different algorithms, we use the notions of Pareto domination and Pareto optimality for their regret vectors.

\begin{definition}[Rate-Pareto Domination and Optimality] \label{def: pareto}
    Let $\mathbf{r}, \mathbf{s} \in \mathbb{R}_+^m$ be two regret vectors, where each entry is a function of the parameters $T, n, m, \ell$. We say that 
    $\mathbf{r}$ \emph{rate-Pareto dominates} $\mathbf{s}$ if there exists a universal constant 
    $C > 0$ such that
    $$
    \forall k \in [m]: \quad r_k \;\leq\; C \, s_k,
    $$ 
    and the reverse does not hold (i.e., there is no constant $C'>0$ such that 
    $\forall k \in [m]: \; s_k \leq C' r_k$).  
    In other words, $\mathbf{r}$ is not worse than $\mathbf{s}$ in every coordinate (up to constant factors) and is strictly better in at least one coordinate.  \\
    A regret vector $\mathbf{r}$ is said to be \emph{rate-Pareto optimal} for a set of policies 
    $\Pi$ if no regret vector corresponding to a policy in $\Pi$ rate-Pareto dominates $\mathbf{r}$.  
    A policy $\pi$ is \emph{rate-Pareto optimal} if its regret vector is rate-Pareto optimal.
\end{definition}



\subsection{Lower Bound}

Herein, we present a lower bound on the product of two distinct entries of the regret vector for any policy $\pi \in \Pi(\G)$.
  
\begin{restatable} [Unknown $k$ Lower Bound] {theorem} {unknownkLB} \label{thm: unknown-k-lower-bound}
    For any causal graph $\G$, any policy $\pi\in\Pi(\G)$, and any values $n \geq m \geq k_2 > k_1$, we have 
    \begin{align} \label{eq: unknown-k-lower-bound}
        & R_T \big( \pi, \E(n,\ell,k_1) \big)\times R_T \paran{\pi, \E(n,\ell,k_2)} \nonumber \\ 
        &\in \Omega \paran{{T \max \paran{(\ell-1)^{k_2} \frac{\binom{n - k_1}{k_2 - k_1}} {\binom{m - k_1}{k_2 - k_1}}, \; \ell^{k_2}}}}.
    \end{align}
\end{restatable}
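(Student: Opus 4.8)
The plan is to prove this lower bound via a two-instance (or small family) information-theoretic argument that simultaneously constrains the regret at two different parent-set sizes $k_1$ and $k_2$. The key idea, following the sketch given for Theorem~\ref{thm: m-geq-k-lower-bound}, is to build everything around the neutral instance $\V_0$ (empty graph, deterministic non-reward variables, $\mathcal{N}(0,1)$ reward) and then perturb it in two different ways. For the $k_1$-component, I would embed a ``hidden parent configuration'' as in the known-$k$ lower bound, but restricted to environments where $k_1$ of the variables are the true parents; the relevant counting quantity becomes the number of $(p,s)$ pairs the policy must distinguish. For the $k_2$-component, the crucial observation is that an environment in $\E(n,\ell,k_2)$ can be chosen so that $k_1$ of its parents are ``frozen'' to a fixed value (matching what the $k_1$-instances look like), while the remaining $k_2 - k_1$ parents carry the hidden signal. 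This is why the binomial ratio in \eqref{eq: unknown-k-lower-bound} has the shifted form $\binom{n-k_1}{k_2-k_1}/\binom{m-k_1}{k_2-k_1}$: after conditioning on the $k_1$ common parents, the residual problem is exactly a known-$(k_2-k_1)$ problem on $n - k_1$ remaining variables with budget $m - k_1$.

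Concretely, the steps I would carry out are: (1) Fix $\G$ arbitrarily (the bound must hold for all $\G$), and define $w_{p,s}(\pi)$ as in the Theorem~\ref{thm: m-geq-k-lower-bound} sketch — the expected fraction of rounds in which the observed values on coordinate set $p$ equal $s$, when $\pi$ runs against $\V_0$. (2) Construct a family of alternative instances indexed by a choice of parent set and optimal assignment: for each target size $k$, choose $k$ coordinates and a value vector, plant a reward bump of size $\varepsilon_k$ whenever those coordinates are simultaneously hit. Because $\pi$ does not know $k$, the same policy must behave well against both the $k_1$-family and the $k_2$-family. (3) Apply a change-of-measure / Pinsker-type argument (or the Bretagnolle–Huber inequality) to lower-bound the regret of $\pi$ on the $k_i$-family in terms of $\varepsilon_i$ and the ``mass'' $\sum_{(p,s)\in\J_i} w_{p,s}(\pi)$ that $\pi$ allocates to the informative pairs $\J_i$; then optimize $\varepsilon_i$ to get $R_T(\pi,\E(n,\ell,k_i)) \gtrsim \sqrt{T\,|\J_i| / \sum_{(p,s)\in\J_i} w_{p,s}(\pi)}$ up to the saturation factor. (4) Multiply the two bounds. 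The masses $\sum w_{p,s}$ for the two scales are coupled: since the $k_2$-informative pairs are refinements (on $n-k_1$ extra coordinates) of the $k_1$-informative pairs, a single weight budget of total mass $T$ cannot be large for both simultaneously. Quantify this via a Cauchy–Schwarz or averaging step over a well-chosen collection $\scal$ of candidate sets, so that the product of the two denominators is at most $T^2$ (times the appropriate combinatorial factors), which makes the $T$'s combine into a single $T$ and the square roots disappear, yielding $R_T(k_1)\cdot R_T(k_2) \gtrsim T \cdot (\text{combinatorial term})$.

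The main obstacle I anticipate is step (4): carefully setting up the combinatorial bookkeeping so that the two informative families $\J_1, \J_2$ are related in exactly the right way, and the adversarial choice of which $k_1$ coordinates to ``share'' between the two scales is made so that whatever mass-allocation $\pi$ uses, the product of $\big(|\J_1|/\sum_{\J_1} w\big)$ and $\big(|\J_2|/\sum_{\J_2} w\big)$ is $\Omega\big(\max((\ell-1)^{k_2}\binom{n-k_1}{k_2-k_1}/\binom{m-k_1}{k_2-k_1},\ \ell^{k_2})\big)$. This requires choosing $\scal$ (the analogue of the size-$(k+1)$ collection in the Theorem~\ref{thm: m-geq-k-lower-bound} sketch) so that for every policy there is some $\J_1\in\scal_1$ and $\J_2\in\scal_2$ that are jointly ``starved'' of exploration mass — essentially a pigeonhole over a two-level partition of $\A_m$. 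The second, easier obstacle is handling the saturation constants $(1-c_\J)^2$: I would argue that on the relevant sub-collection these are bounded below by an absolute constant (since the total mass is $T$ and is spread over exponentially many pairs, no pair's normalized mass can be close to $1$), so they only affect the hidden constant. The $\ell^{k_2}$ branch of the max is the ``trivial'' bound coming from the fact that even distinguishing the $\ell^{k_2}$ assignments to a fixed parent set at scale $k_2$ already forces $\sqrt{T\ell^{k_2}}$ regret there, while the $k_1$-instance always forces at least constant regret, and is handled by a separate, simpler two-point argument.
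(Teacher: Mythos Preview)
Your proposal has a genuine gap at step~(4), and the overall architecture does not match what is needed. The claimed ``coupling'' of the two exploration masses goes in the wrong direction: if the $k_2$-informative pairs are refinements of $k_1$-informative pairs, then every round that hits a $k_2$-pair \emph{also} hits the underlying $k_1$-pair, so the two sums $\sum_{\J_1} w$ and $\sum_{\J_2} w$ can be simultaneously large, not small. Concretely, if you try to multiply two separate known-$k$ bounds derived from the neutral instance $\V_0$, the best you get for the second term in the max is $\sqrt{T\ell^{k_1}}\cdot\sqrt{T\ell^{k_2}} = T\,\ell^{(k_1+k_2)/2}$, which is strictly weaker than the target $T\,\ell^{k_2}$. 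No pigeonhole over $\scal_1\times\scal_2$ can repair this, because the deficiency is in the information structure of $\V_0$ itself: under $\V_0$ there is no action that is costly for one scale and informative for the other. Your handling of the $\ell^{k_2}$ branch (``the $k_1$-instance always forces at least constant regret'') is also incorrect, since a policy can have zero regret on every $k_1$-instance.

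The paper's construction is \emph{asymmetric} and this is the missing idea. The base instance $\V$ is not neutral; it already lies in $\E(n,\ell,k_1)$, with parents $\{X_1,\dots,X_{k_1}\}$ and mean reward $\Delta$ precisely when all $k_1$ parents equal $1$ (and $0$ otherwise). The $k_2$-alternatives $\V_{p,s}$ have $[k_1]\subset p$ and plant a reward bump $2\Delta$ at a configuration $s$ in which \emph{at least one of the first $k_1$ coordinates is not $1$}. Thus any round that could distinguish $\V$ from $\V_{p,s}$ is necessarily a round in which $\pi$ incurs regret $\Delta$ on $\V$. Bretagnolle--Huber (with $\Delta$ set proportional to $R_T(\pi,\E(n,\ell,k_2))/T$) then \emph{lower}-bounds each $w_{p,s}$ by $\Omega\big(T/R_T(\pi,\E(n,\ell,k_2))^2\big)$. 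Summing these lower bounds over the family $\M$ and dividing by the combinatorial packing number $\binom{m-k_1}{k_2-k_1}$ gives a lower bound on the fraction of suboptimal rounds under $\V$, hence on $R_T(\pi,\V)\le R_T(\pi,\E(n,\ell,k_1))$. The product structure appears directly from the choice of $\Delta$, not from multiplying two square-root bounds.
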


This result shows a fundamental trade-off: for any policy in $\Pi(\{\G\})$, improving performance on instances with a fixed value of $k=k_1$ necessarily worsens performance on instances with a larger value of $k=k_2>k_1$. 

\subsection{Algorithm and Upper Bound}

As discussed earlier, if the goal were to minimize the maximum worst-case regret (i.e., minimizing the largest entry of the regret vector), Algorithm~\ref{algo: known-k} could be applied with the input $k=m$. In this case, we obtain the following for each $k \in [m]$,
$$
R_T \big( \texttt{Alg.1}[m], \E(n,v,k) \big)\! \in\! \ocal \!\left(\sqrt{T \, \ell^m \binom{n}{m}}\right)\!.
$$
However, based on the result of Theorem \ref{thm: unknown-k-lower-bound}, the incurred regret by Algorithm \ref{algo: known-k} is not optimal when the actual number of parents is less than $m$.  
We now propose an algorithm that adapts to the unknown value of $k$ and incurs lower regret when $k$ is small. The design is adapted from the setting of bandits with multiple optimal arms~\cite{multiple-zhu2020regret}, which in turn is inspired by the literature on continuum-armed bandits~\cite{x1-agrawal1995continuum, x2-locatelli2018adaptivity, x3-hadiji2019polynomial}. 

The algorithm proceeds in phases. In each phase $i$, it randomly selects a subset $S_i \subseteq \A_m$ of arms of size $q_i$, and then runs the standard UCB algorithm for $\Delta T_i$ rounds on the arms in $S_i$ together with the mixture arms constructed in earlier phases. At the end of phase $i$, the algorithm defines a \emph{mixture arm} $\tilde{a}_i$ based on the actions played during that phase. Formally, if the actions played in phase $i$ are $a_1,..., a_{\Delta T_i}$, then $\tilde{a}_i$ is the randomized arm that, when played, samples an action $a$ uniformly from $\{a_1,..., a_{\Delta T_i}\}$ and plays it. 
Intuitively, $\tilde{a}_i$ summarizes the exploration of phase $i$ into a single representative action, while preserving the empirical distribution of actions observed in that phase.

The schedule of the algorithm is such that $q_i$ is halved at each new phase, while the phase length $\Delta T_i$ is doubled. 
Pseudocode for this procedure is given in Algorithm~\ref{algo: unknown-k}, and the theorem below provides its regret upper bound on the class of instances with $k$ parents, for any $k \in [n]$.  It is important to note that Algorithm~\ref{algo: unknown-k} requires neither knowledge of $\G$ nor of $k$.

\begin{algorithm}[h]
    \caption{}\label{algo: unknown-k}
    \begin{algorithmic}[1]      
        \STATE \textbf{Input.} The integers $m, n, \ell, T$. ($\G$ is unknown)  
        \STATE \textbf{Initialization.} Set $i_f = \Big\lceil \log_2 \sqrt{T\frac{m}{\ell n}} \Big\rceil, \forall i \in [i_f]: q_i = 2^ {\lceil \log_2 \sqrt{T} \rceil - i + 1}, \Delta T_i = \ceillnm \, 2 ^{\lceil \log_2 \sqrt{T} \rceil + i}$, and $M = \emptyset$.
        \FOR{$i$ in $1,2,\ldots, i_f$}
            \STATE Construct $S_i \subseteq \A_m$ consisting if $q_i$ uniform random actions selected with replacement.
            \STATE Run UCB on actions in $S_i \cup M$ for $\Delta T_i$ rounds. 
            \STATE Construct the mixture arm $\tilde{a}_i$ from the UCB actions and add it to $M$. 
        \ENDFOR 
    \end{algorithmic}
\end{algorithm}

\begin{restatable} [Unknown $k$ Upper Bound] {theorem} {unknownkUB} \label{thm: unknown-k-upper-bound}
    For any $k$, the worst-case regret of Algorithm \ref{algo: unknown-k} on all the instances in $\E(n,\ell,k)$ is upper bounded as follows 
    \begin{align} \label{eq: unknown-k-upper-bound}
        R_T \paran{\texttt{Alg.2}, \E(n,\ell,k) }\!  
        \in\! \begin{cases}
            \tilde{\ocal} \paran{\sqrt{T  \frac{m}{n}}  \ell^{k- \frac12} \frac{\binom{n}{k}}{\binom{m}{k}}},  m \geq k, \\
            \tilde{\ocal} \paran{\sqrt{T  \frac{m}{n}}  \ell^{m - \frac12} {\binom{n}{m}}},  m < k,
        \end{cases}
    \end{align}
    where $\tilde{\ocal}$ hides constants and logarithmic factors.
\end{restatable}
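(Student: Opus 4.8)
The plan is to bound the regret of Algorithm~\ref{algo: unknown-k} directly, without ever identifying $\pay$: by Lemma~\ref{lem: optimal action A_m} there is always an optimal action inside $\A_m$, so it suffices to control how quickly the phased UCB‑with‑mixtures procedure locates a near‑optimal member of $\A_m$. I would fix an arbitrary instance $\V\in\E(n,\ell,k)$ with $k\ge 1$ (for $k=0$ every action is optimal and the regret is zero), write $N=|\A_m|=\binom nm\ell^m$, and exhibit a large set $\A_m^{\star}\subseteq\A_m$ of \emph{optimal} actions, with $\rho=|\A_m^{\star}|/N$. When $m\ge k$, every action whose intervened set contains $\pay$ and which assigns $\pay$ its reward‑maximizing values is optimal (the reward depends only on $\pay$), and counting the $\binom{n-k}{m-k}$ choices of the remaining coordinates and their $\ell^{m-k}$ value assignments yields $\rho\ge\binom{n-k}{m-k}\ell^{m-k}/(\binom nm\ell^m)=\binom mk/(\binom nk\ell^k)$, i.e.\ $1/\rho\le\ell^k\binom nk/\binom mk$. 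When $m<k$ I would use only that $\A_m^{\star}$ is nonempty, i.e.\ $1/\rho\le N$. In either case, if phase $i$ draws its $q_i$ actions uniformly with replacement then $\pr(S_i\cap\A_m^{\star}=\emptyset)=(1-\rho)^{q_i}\le e^{-\rho q_i}$, so $S_i$ contains an optimal arm with probability at least $1-T^{-2}$ whenever $\rho q_i\ge 2\ln T$.

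Next I would run the phase analysis. Let $i^{\dagger}$ be the largest phase index with $\rho q_i\ge 2\ln T$; since $q_i$ is nonincreasing, every phase $i\le i^{\dagger}$ meets this threshold, and a union bound over the at most $i_f\le\log_2 T$ phases shows that with probability at least $1-T^{-1}$ the set $S_i$ contains an optimal arm for all $i\le i^{\dagger}$ --- on the complement I would bound the regret by $T$, contributing only $\tilde{\ocal}(1)$ to the expectation. On the good event: (i) for $i\le i^{\dagger}$ the arm set $S_i\cup M$ contains an arm of mean $\mu_{a^*}$, so the minimax bound for UCB gives phase‑$i$ regret $\tilde{\ocal}(\sqrt{(q_i+|M|)\Delta T_i})=\tilde{\ocal}(\sqrt{q_i\Delta T_i})$ using $|M|\le i_f$; (ii) the mixture arm $\tilde{a}_{i^{\dagger}}$ created at the end of phase $i^{\dagger}$ has mean $\mu_{a^*}-g$ with $g\le(\text{phase-}i^{\dagger}\text{ regret})/\Delta T_{i^{\dagger}}=\tilde{\ocal}(\sqrt{q_{i^{\dagger}}/\Delta T_{i^{\dagger}}})$, since playing $\tilde{a}_{i^{\dagger}}$ reproduces the empirical action distribution of that phase; (iii) for every later phase $i>i^{\dagger}$ the arm $\tilde{a}_{i^{\dagger}}$ is still in $M$, so UCB has access to a mean at least $\mu_{a^*}-g$ and its phase‑$i$ regret against $\mu_{a^*}$ is at most $\tilde{\ocal}(\sqrt{q_i\Delta T_i})+g\,\Delta T_i$. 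Summing over all phases, using $\sum_i\Delta T_i\le T$ and absorbing $i_f=\tilde{\ocal}(1)$, the expected regret is $\tilde{\ocal}(\sqrt{q_i\Delta T_i})+g\,T$, where $q_i\Delta T_i=\ceillnm\,2^{2\lceil\log_2\sqrt T\rceil+1}=\Theta(\tfrac{\ell n}{m}T)$ does not depend on $i$.

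It then remains to estimate $g\,T$. If $i^{\dagger}$ is the last executed phase, the bound is just $\tilde{\ocal}(\sqrt{\tfrac{\ell n}{m}T})$. Otherwise $q_{i^{\dagger}+1}$ violates the threshold, so $q_{i^{\dagger}}=2q_{i^{\dagger}+1}<4(\ln T)/\rho$, and combining with $q_{i^{\dagger}}\Delta T_{i^{\dagger}}=\Theta(\tfrac{\ell n}{m}T)$ gives $\Delta T_{i^{\dagger}}=\Omega(\tfrac{\ell n}{m}T\rho/\ln T)$, hence $g=\tilde{\ocal}(\sqrt{q_{i^{\dagger}}/\Delta T_{i^{\dagger}}})=\tilde{\ocal}\!\big(1/(\rho\sqrt{\tfrac{\ell n}{m}T})\big)$ and $g\,T=\tilde{\ocal}\!\big(\tfrac1\rho\sqrt{\tfrac{m}{\ell n}T}\big)$. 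So the expected regret is $\tilde{\ocal}\!\big(\sqrt{\tfrac{\ell n}{m}T}+\tfrac1\rho\sqrt{\tfrac{m}{\ell n}T}\big)$. Since every factor of $\binom nk/\binom mk$ and of $\binom nm$ is at least $n/m$, we have $\ell n/m\le\ell^k\binom nk/\binom mk$ (for $k\ge1$) and $\ell n/m\le\ell^m\binom nm$; together with the bounds on $1/\rho$ above, both terms are at most $\ell^k\binom nk/\binom mk\cdot\sqrt{Tm/(\ell n)}=\sqrt{Tm/n}\,\ell^{k-\frac12}\binom nk/\binom mk$ when $m\ge k$, and at most $\ell^m\binom nm\cdot\sqrt{Tm/(\ell n)}=\sqrt{Tm/n}\,\ell^{m-\frac12}\binom nm$ when $m<k$, which is exactly \eqref{eq: unknown-k-upper-bound}.

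The hard part will be making steps (ii)--(iii) rigorous. I would need to check that a mixture arm has an $\ocal(1)$‑sub‑Gaussian reward (its reward is $1$‑sub‑Gaussian conditioned on the sampled action, and the resulting mean fluctuation is bounded in $[0,1]$) so that UCB's guarantee still applies once mixture arms populate the action set, and that the gap bound on $\tilde{a}_{i^{\dagger}}$ holds with high probability rather than merely in expectation, so that it can be chained through the union bound governing the subsequent phases. A secondary point is the regime $i^{\dagger}=0$, i.e.\ $\rho\sqrt T=\tilde{\ocal}(1)$ --- for fixed $n,\ell,m,k$ this occurs only for small $T$ --- where the stated rate requires the same mild largeness assumption on $T$ used in Theorem~\ref{thm: known-k-upper-bound}, and is otherwise absorbed by the trivial bound $R_T\le T$.
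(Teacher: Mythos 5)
Your proposal is correct and follows essentially the same route as the paper's proof: the same decomposition of each phase's regret into a UCB learning term plus an approximation term, the same threshold phase index defined by when $q_i$ drops below $\Theta(\ln T/\alpha_k)$, the same use of the mixture arm from that phase to control the approximation error in all later phases, and the same final arithmetic yielding $\tfrac{1}{\alpha_k}\sqrt{Tm/(\ell n)}$. The only notable difference is that the paper bounds the mixture arm's gap directly in expectation (since $\mu^*_i \geq \mu_{\tilde a_{i^*}}$ holds deterministically once $\tilde a_{i^*}\in M$, one has $\mathbb{E}[\mu^*-\mu^*_i]\le \mathbb{E}[R^{(2)}_{i^*}]/\Delta T_{i^*}$), which sidesteps the high-probability chaining you flag as the hard part; it also handles your sub-Gaussianity concern exactly as you anticipate, via a mixture-of-sub-Gaussians lemma giving parameter $5/4$.
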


Next lemma uses the results of Theorems \ref{thm: unknown-k-upper-bound} and \ref{thm: unknown-k-lower-bound} to show that Algorithm~\ref{algo: unknown-k} is near rate-Pareto optimal. 


\begin{restatable}[Pareto Optimality of Algorithm~\ref{algo: unknown-k}] {lemma}{pareto}\label{lem: pareto optimality}
The following statements hold:
\begin{itemize}
    \item When $m=n$, Algorithm~\ref{algo: unknown-k} is rate-Pareto optimal for $\Pi(\{\G\})$, up to logarithmic factors.
    
    \item In the general case, if $\ell \in \Omega(m)$, then the regret vector
    \begin{align*}
        &\Big[R_T \big(\texttt{Alg.2}, \E(n,\ell,1)\big), \;
        R_T \big(\texttt{Alg.2}, \E(n,\ell,2)\big)\tfrac{m}{n}, \\
        &\; \ldots, \;
        R_T \big(\texttt{Alg.2}, \E(n,\ell,m)\big)\tfrac{m}{n} \Big]
    \end{align*}
    is rate-Pareto optimal for $\Pi(\{\G\})$. Up to logarithmic terms, this vector coincides with the regret vector of Algorithm~\ref{algo: unknown-k} when $k=1$, and exhibits a multiplicative gap of $\tfrac{m}{n}$ for larger values of $k$.
\end{itemize}
\end{restatable}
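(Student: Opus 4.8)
The plan is to prove the two bullet points separately, in each case combining the upper bound of Theorem~\ref{thm: unknown-k-upper-bound} with the pairwise lower bound of Theorem~\ref{thm: unknown-k-lower-bound}, using the structure of Definition~\ref{def: pareto}. The overall strategy is the standard one for rate-Pareto optimality arguments: to show a candidate regret vector $\mathbf{r}$ is rate-Pareto optimal for $\Pi(\{\G\})$, I must show that no policy $\pi\in\Pi(\{\G\})$ can be within a constant factor of $\mathbf{r}$ in every coordinate while being strictly (order-wise) better in at least one coordinate. Suppose for contradiction such a $\pi$ exists, so $R_T(\pi,\E(n,\ell,k)) \le C\, r_k$ for all $k\in[m]$, with strict order-improvement at some coordinate $k_2$. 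I then pick the coordinate $k_1$ where the improvement is ``paid for'' — typically $k_1=1$ or a neighbouring index — and invoke Theorem~\ref{thm: unknown-k-lower-bound} on the pair $(k_1,k_2)$ to get $R_T(\pi,\E(n,\ell,k_1))\cdot R_T(\pi,\E(n,\ell,k_2)) \in \Omega(\text{(product of the two lower-bound quantities)})$. Plugging in $R_T(\pi,\E(n,\ell,k_2)) = o(r_{k_2})$ forces $R_T(\pi,\E(n,\ell,k_1)) = \omega(r_{k_1})$, contradicting $R_T(\pi,\E(n,\ell,k_1))\le C r_1$. So the first thing to do is to verify that the candidate vectors in the statement indeed saturate the product lower bound up to logarithmic/constant factors for \emph{every} pair $(k_1,k_2)$.

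For the first bullet ($m=n$), Algorithm~\ref{algo: unknown-k}'s regret vector has $k$-th coordinate $\tilde{\ocal}(\sqrt{T}\,\ell^{k-1/2})$ since $\binom{n}{k}/\binom{m}{k}=1$ and $m/n=1$. The product of coordinates $k_1$ and $k_2$ is then $\tilde{\ocal}(T\,\ell^{k_1+k_2-1})$. I would check this matches $\Omega(T\max((\ell-1)^{k_2}\binom{n-k_1}{k_2-k_1}/\binom{m-k_1}{k_2-k_1},\ell^{k_2}))$ up to constants and logs: with $m=n$ the binomial ratio is $1$, so the lower bound is $\Omega(T\max((\ell-1)^{k_2},\ell^{k_2}))=\Omega(T\ell^{k_2})$ — but this is \emph{not} $T\ell^{k_1+k_2-1}$ unless $k_1=1$. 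This is the subtle point: the product lower bound of Theorem~\ref{thm: unknown-k-lower-bound} as written only involves $\ell^{k_2}$ on the right branch, giving $\Omega(T\ell^{k_2})$ when $m=n$, whereas the candidate product is $\tilde\Omega(T\ell^{k_1+k_2-1})$, which is larger. So rate-Pareto optimality for $m=n$ cannot follow from pairwise products alone in general; it must use that for \emph{any} competing $\pi$, the coordinate $R_T(\pi,\E(n,\ell,k))$ is itself at least $\Omega(\sqrt{T\ell^k})$ by Theorem~\ref{thm: m-geq-k-lower-bound} (the known-$k$ lower bound, which holds for $\Pi(\{k,\G\})\supseteq\Pi(\{\G\})$ in the sense that less information cannot help). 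Combining the single-coordinate lower bound $R_T(\pi,\E(n,\ell,k))\in\Omega(\sqrt{T\ell^k})$ with the fact that Algorithm~\ref{algo: unknown-k} achieves $\tilde\ocal(\sqrt{T\ell^k})$ (the extra $\ell^{-1/2}$ only helps) shows Algorithm~\ref{algo: unknown-k} is within logs of the coordinatewise-optimal value at every $k$, hence no policy can order-dominate it. I would write the $m=n$ case this way: it is really ``coordinatewise near-optimal,'' which trivially implies rate-Pareto optimal.

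For the second bullet, with $\ell\in\Omega(m)$ the factor $\ell^k/(\ell-1)^k\in\ocal(1)$ (Remark~\ref{rem: known-k-lower-bound}), so the pairwise lower bound simplifies to $\Omega(T\,\ell^{k_2}\binom{n-k_1}{k_2-k_1}/\binom{m-k_1}{k_2-k_1})$, and using $\ell\in\Omega(m)$ again one absorbs the ratio $\binom{n-k_1}{k_2-k_1}/\binom{m-k_1}{k_2-k_1}$ versus $\binom{n}{k_2}/\binom{m}{k_2}$ up to constants depending only on $m$ — I'd record the elementary inequality $\binom{n}{k_2}/\binom{m}{k_2} \le (\text{const}_m)\,\binom{n-k_1}{k_2-k_1}/\binom{m-k_1}{k_2-k_1}$ and its reverse, so the lower bound is $\tilde\Omega(T\,\ell^{k_2}\binom{n}{k_2}/\binom{m}{k_2})$. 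Now take the candidate vector $\mathbf{r}$ with $r_1 = R_T(\texttt{Alg.2},\E(n,\ell,1))= \tilde\ocal(\sqrt{T\,m/n}\,\ell^{1/2}\binom{n}{1}/\binom{m}{1}) = \tilde\ocal(\sqrt{T\ell/ (nm)}\cdot n) = \tilde\ocal(\sqrt{T\ell n/m})$, and for $k\ge2$, $r_k = \frac{m}{n}R_T(\texttt{Alg.2},\E(n,\ell,k)) = \tilde\ocal(\sqrt{T\,m/n}\cdot\frac{m}{n}\ell^{k-1/2}\binom{n}{k}/\binom{m}{k})$. The product $r_1 r_{k_2}$ for $k_2\ge 2$ is $\tilde\ocal(\sqrt{Tn\ell/m}\cdot\sqrt{Tm/n}\cdot\frac{m}{n}\ell^{k_2-1/2}\binom{n}{k_2}/\binom{m}{k_2}) = \tilde\ocal(T\,\frac{m}{n}\ell^{k_2}\binom{n}{k_2}/\binom{m}{k_2})$. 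This is smaller than the lower bound $\tilde\Omega(T\ell^{k_2}\binom{n}{k_2}/\binom{m}{k_2})$ by the factor $m/n\le1$ — so I need the product lower bound applied with $k_1=1$ to give exactly $\tilde\Omega(T\ell^{k_2}\binom{n}{k_2}/\binom{m}{k_2})$ while the achievable product is $m/n$ times that; that factor $m/n$ is precisely the claimed ``multiplicative gap of $m/n$.'' To conclude rate-Pareto optimality, suppose $\pi\in\Pi(\{\G\})$ has $R_T(\pi,\E(n,\ell,k))\le C r_k$ for all $k$ and strict order-improvement at some $k_2$. If $k_2\ge 2$: $R_T(\pi,\E(n,\ell,1))\le C r_1=\tilde\ocal(\sqrt{Tn\ell/m})$ and $R_T(\pi,\E(n,\ell,k_2)) = o(r_{k_2})$; then the product is $o(r_1 r_{k_2}) = o(T\,\frac{m}{n}\ell^{k_2}\binom{n}{k_2}/\binom{m}{k_2})$ — but wait, this is consistent with the lower bound $\tilde\Omega(T\ell^{k_2}\binom{n}{k_2}/\binom{m}{k_2})$ only if $m/n$ is bounded below, which it need not be. So the clean contradiction requires improving at $k_2$ \emph{and} not worsening at $k_1=1$, and the lower bound product $T\ell^{k_2}\binom{n}{k_2}/\binom{m}{k_2}$ must exceed $C r_1 \cdot o(r_{k_2})$; since $r_1 r_{k_2} = \frac{m}{n}T\ell^{k_2}\binom{n}{k_2}/\binom{m}{k_2}\cdot\mathrm{polylog}$, we get $o(r_1 r_{k_2}) = o(\frac{m}{n})\cdot T\ell^{k_2}\binom{n}{k_2}/\binom{m}{k_2}\cdot\mathrm{polylog}$, which for fixed $n,m$ (treating them as constants in the asymptotics in $T$) is indeed $o(T\ell^{k_2}\binom{n}{k_2}/\binom{m}{k_2})$ — contradiction with the $\Omega$ in $T$. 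The resolution is that Pareto-domination in Definition~\ref{def: pareto} is ``up to a universal constant $C$'' and the order notation is in $T$ (the other parameters fixed); under that reading the $m/n$ factor is a constant and the argument goes through. I would state this reading explicitly. The case where $\pi$ improves only at $k_2=1$ is symmetric: improving at $k_1'=1$ and pairing with any $k_2'\ge 2$ where $R_T(\pi,\E(n,\ell,k_2'))\le C r_{k_2'}$ again violates the product bound.

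The main obstacle, then, is not a hard inequality but bookkeeping: pinning down exactly which pairs $(k_1,k_2)$ to feed into Theorem~\ref{thm: unknown-k-lower-bound}, and being careful that the product lower bound is tight against the candidate vector \emph{only up to the $m/n$ factor} in the general case, which is why the statement only claims Algorithm~\ref{algo: unknown-k}'s own vector is near-Pareto-optimal while a rescaled vector (with the leading coordinate unscaled and the rest scaled by $m/n$) is exactly rate-Pareto optimal. I would organize the write-up as: (i) a preliminary lemma reducing ``rate-Pareto optimal'' to ``for every $k_2$, improving coordinate $k_2$ by a super-constant factor forces coordinate $1$ to blow up by a super-constant factor, via the product bound with $k_1=1$''; (ii) verification that the candidate vectors saturate the $(1,k_2)$-product bound for all $k_2$, using the $\ell\in\Omega(m)$ simplification and the binomial-ratio comparison; (iii) the $m=n$ case handled directly by coordinatewise near-optimality against Theorem~\ref{thm: m-geq-k-lower-bound}; and (iv) assembling the contradiction. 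Step (ii) — the binomial-ratio comparison $\binom{n}{k_2}/\binom{m}{k_2} \asymp_m \binom{n-k_1}{k_2-k_1}/\binom{m-k_1}{k_2-k_1}$ and the algebra reducing $r_1 r_{k_2}$ to $\frac{m}{n}$ times the lower bound — is the only place real computation is needed, and it is elementary.
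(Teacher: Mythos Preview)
Your overall architecture --- pair the product lower bound of Theorem~\ref{thm: unknown-k-lower-bound} against the upper bound of Theorem~\ref{thm: unknown-k-upper-bound}, anchored at $k_1=1$, and use the single-coordinate lower bound at $k=1$ to rule out improvement there --- is exactly what the paper does. But you make two concrete errors in the execution.

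First, your $m=n$ argument is wrong. You claim Algorithm~\ref{algo: unknown-k} ``achieves $\tilde\ocal(\sqrt{T\ell^k})$ \ldots\ within logs of the coordinatewise-optimal value at every $k$.'' It does not: for $m=n$, Theorem~\ref{thm: unknown-k-upper-bound} gives $\tilde\ocal(\sqrt{T}\,\ell^{\,k-1/2})$, while the single-coordinate lower bound of Theorem~\ref{thm: m-geq-k-lower-bound} is only $\Omega(\sqrt{T}\,\ell^{\,k/2})$. The gap is $\ell^{(k-1)/2}$, which is not a constant for $k\ge 2$, so Algorithm~\ref{algo: unknown-k} is \emph{not} coordinatewise near-optimal and that route fails. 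Ironically, you had already observed that the product bound with $k_1=1$ saturates exactly ($R_1 R_{k_2}=T\ell^{k_2}$ matching the lower bound $\Omega(T\ell^{k_2})$); that \emph{is} the correct argument. Use Theorem~\ref{thm: m-geq-k-lower-bound} only at $k=1$ to pin $R'_1=\Theta(R_1)$, and then for any $k'>1$ the product bound with $k_1=1$ together with $R'_1\le CR_1=C\sqrt{T\ell}$ forces $R'_{k'}\ge\Omega(\sqrt{T}\,\ell^{k'-1/2})=\Omega(R_{k'})$.

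Second, in the general case your ``binomial-ratio comparison'' $\binom{n}{k_2}/\binom{m}{k_2}\asymp_m\binom{n-1}{k_2-1}/\binom{m-1}{k_2-1}$ is false as stated: these differ by exactly $n/m$, via the identity $\binom{n-1}{k_2-1}/\binom{m-1}{k_2-1}=(m/n)\,\binom{n}{k_2}/\binom{m}{k_2}$. With the correct identity, the product lower bound at $k_1=1$ is $\Omega\!\big(T(\ell-1)^{k_2}\tfrac{m}{n}\binom{n}{k_2}/\binom{m}{k_2}\big)$, which matches $r_1 r_{k_2}$ on the nose (up to $(\ell-1)^{k_2}\asymp\ell^{k_2}$ under $\ell\in\Omega(m)$). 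There is no phantom $m/n$ mismatch to resolve by ``treating $n,m$ as constants''; the contradiction goes through directly for all parameter regimes.
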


\begin{figure*}[t]
    \centering
    \begin{subfigure}[t]{0.5\textwidth}
        \centering
        \includegraphics[width=\linewidth]{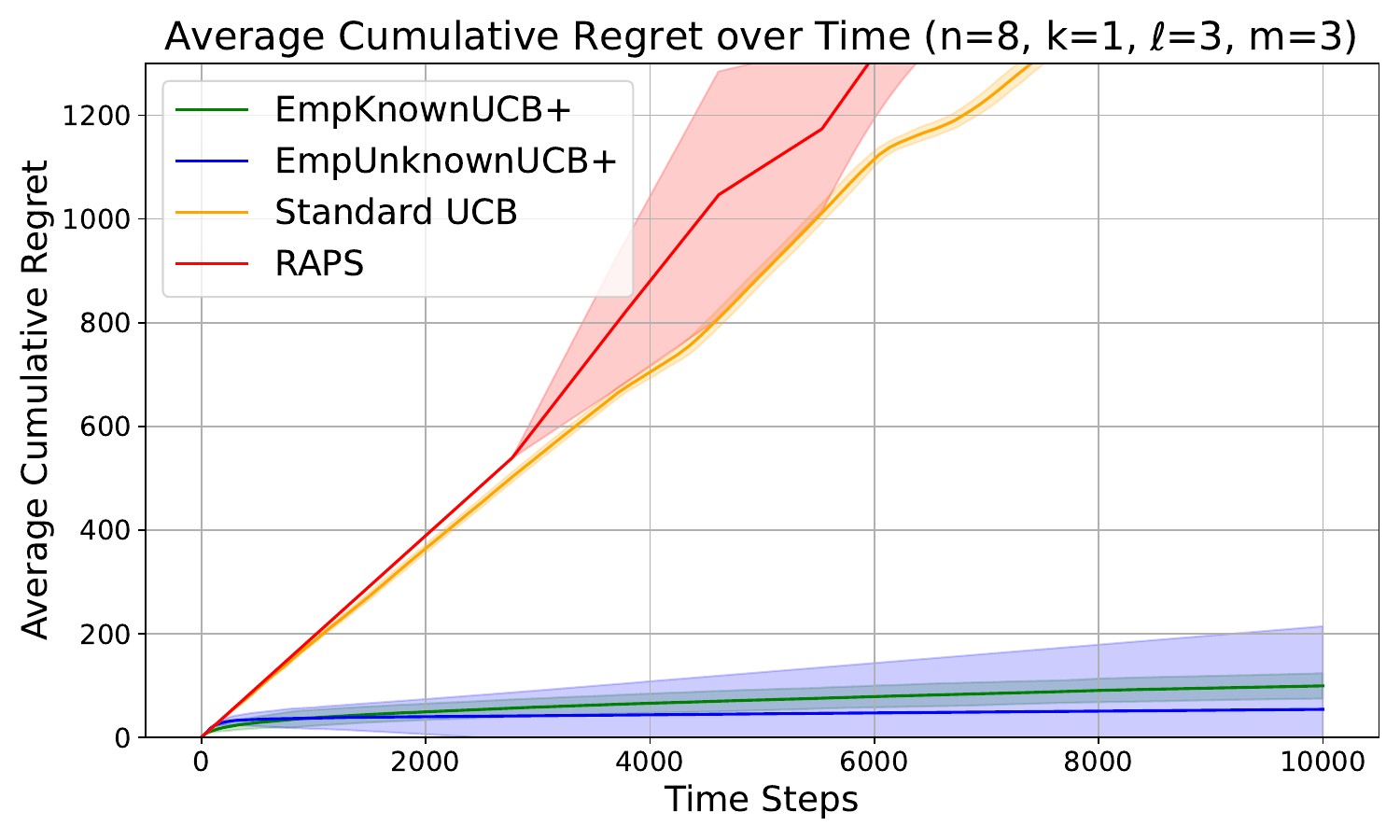}
        \caption{}
        \label{fig: varyT m3}
    \end{subfigure}\hfill
    \begin{subfigure}[t]{0.5\textwidth}
        \centering
        \includegraphics[width=\linewidth]{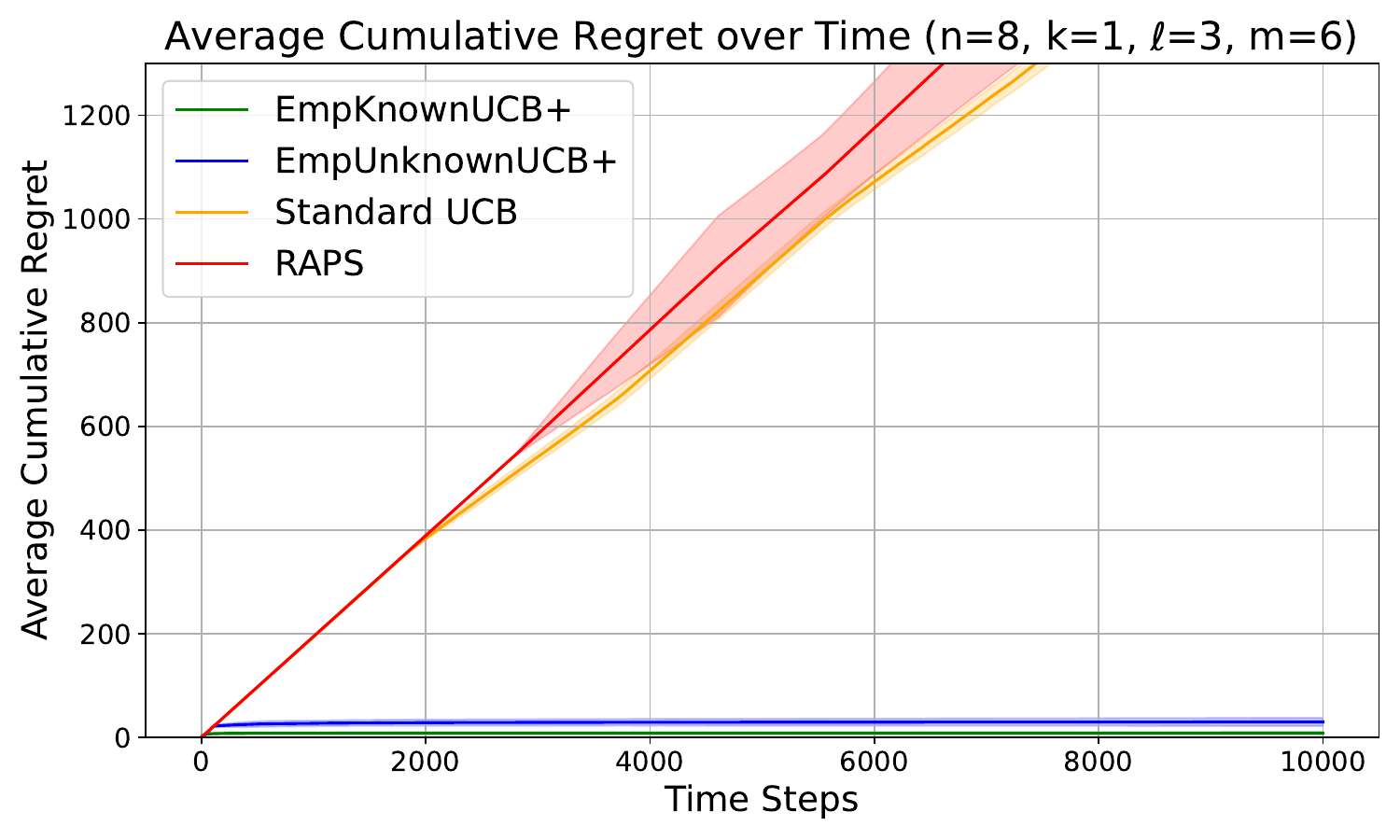}
        \caption{}
        \label{fig: varyT m6}
    \end{subfigure}\hfill
    \caption{Average regret of the algorithms over time. 
    (a) Corresponds to the setting with action size $m=3$. (b) Corresponds to the setting with action size $m=6$.}
    \label{fig: varyT}
\end{figure*}

Note that, similar to the known parent-size setting, the lower bound in Theorem~\ref{thm: unknown-k-lower-bound} applies to any policy in $\Pi(\{\G\})$. In contrast, Algorithm~\ref{algo: unknown-k} operates without knowledge of $\G$, yet, as shown in Lemma~\ref{lem: pareto optimality}, it remains close to Pareto optimal within $\Pi(\{\G\})$. This shows that even in the case of an unknown $k$, knowing the graph $\G$ does not significantly improve the rate of the worst-case regret.

\section{EXPERIMENTS}

In this Section, we evaluate the proposed algorithms on a variety of instances and compare their performance with existing methods. For all experiments, we generate Erdős–Rényi random graphs with edge probability $p = \tfrac{2}{n}$. The reward is modeled as a binary random variable whose mean is chosen uniformly at random from $[0,1]$ for each possible combination of parent values. Each experiment is repeated $100$ times; solid lines in the plots represent averages over these runs, while shaded regions indicate one standard deviation above and below the mean. Additional details on the setup of the experiments, such as the algorithm implementations, and further results are provided in the Appendix~\ref{sec: additional experiments}. The code is available at \hyperlink{https://github.com/ban-epfl/Unknown-Graph-Causal-Bandits}{https://github.com/ban-epfl/Unknown-Graph-Causal-Bandits}.

We compare against the following four algorithms:

\textbf{EmpKnownUCB+.} This is an empirical variant of Algorithm~\ref{algo: known-k}. In the experiments, if the algorithm chooses action $a_t = (p_t, s_t)$ at round $t$ and observes $(\mathbf{x}^{(t)}, y_t)$, then for any other action $a = (p,s) \in \A_m$ such that $\mathbf{x}^{(t)}_p = s$, we also treat $y_t$ as a reward sample for $a$. While this modification may lead to high regret in certain instances, our experiments indicate that it improves the average performance on random instances.

\textbf{EmpUnknownUCB+.} This is an empirical variant of Algorithm~\ref{algo: unknown-k} with two modifications: (i) samples collected in each phase are reused to construct confidence bounds for the arms in subsequent phases, and (ii) for phases $i > 1$, the algorithm selects $q_i$ arms with the highest empirical means, instead of choosing them uniformly at random.

\textbf{RAPS.} This algorithm, proposed in \cite{mikhail-konobeev2025causal}, represents the most recent approach for causal bandits with an unknown graph. It proceeds in two phases. Phase one is purely focused on parent identification, without regard for regret, and involves a sequential search procedure, only on atomic interventions, that identifies parents one by one. Phase two runs a standard UCB algorithm on the identified parent nodes. We additionally reuse the samples collected during phase one in phase two.

\textbf{Standard UCB.} This baseline algorithm simply runs the standard UCB algorithm on the entire set of actions.

In the following two experiments, we set $k=1$ and compare the regret of the algorithms over time and across different numbers of nodes. 
We fix $k=1$ to provide a favorable setting for the RAPS algorithm, since its first phase, which is dedicated entirely to parent identification, typically requires a large number of rounds and is repeated once per parent, thereby significantly increasing regret in practice. 
Although this is the most advantageous case for RAPS, our results demonstrate a substantial performance gap between our algorithms and both RAPS and Standard UCB. More experimental results are provided in Appendix \ref{sec: additional experiments}.

Figures~\ref{fig: varyT m3} and \ref{fig: varyT m6} report the average regret of the algorithms on an instance with parameters $n=8$, $k=1$, $\ell=3$, and intervention sizes $m=3,6$, for a horizon of $T = 10000$. 
Our two proposed algorithms perform nearly identically and achieve more than a $20\times$ improvement compared to the baselines. 
This close performance aligns with our theoretical analysis: for $k=1$, the regret bound of Algorithm~\ref{algo: unknown-k} coincides with that of Algorithm~\ref{algo: known-k}, and the additional adaptation cost appears only when $k > 1$.  

\begin{figure}
    \centering
    \includegraphics[width=\linewidth]{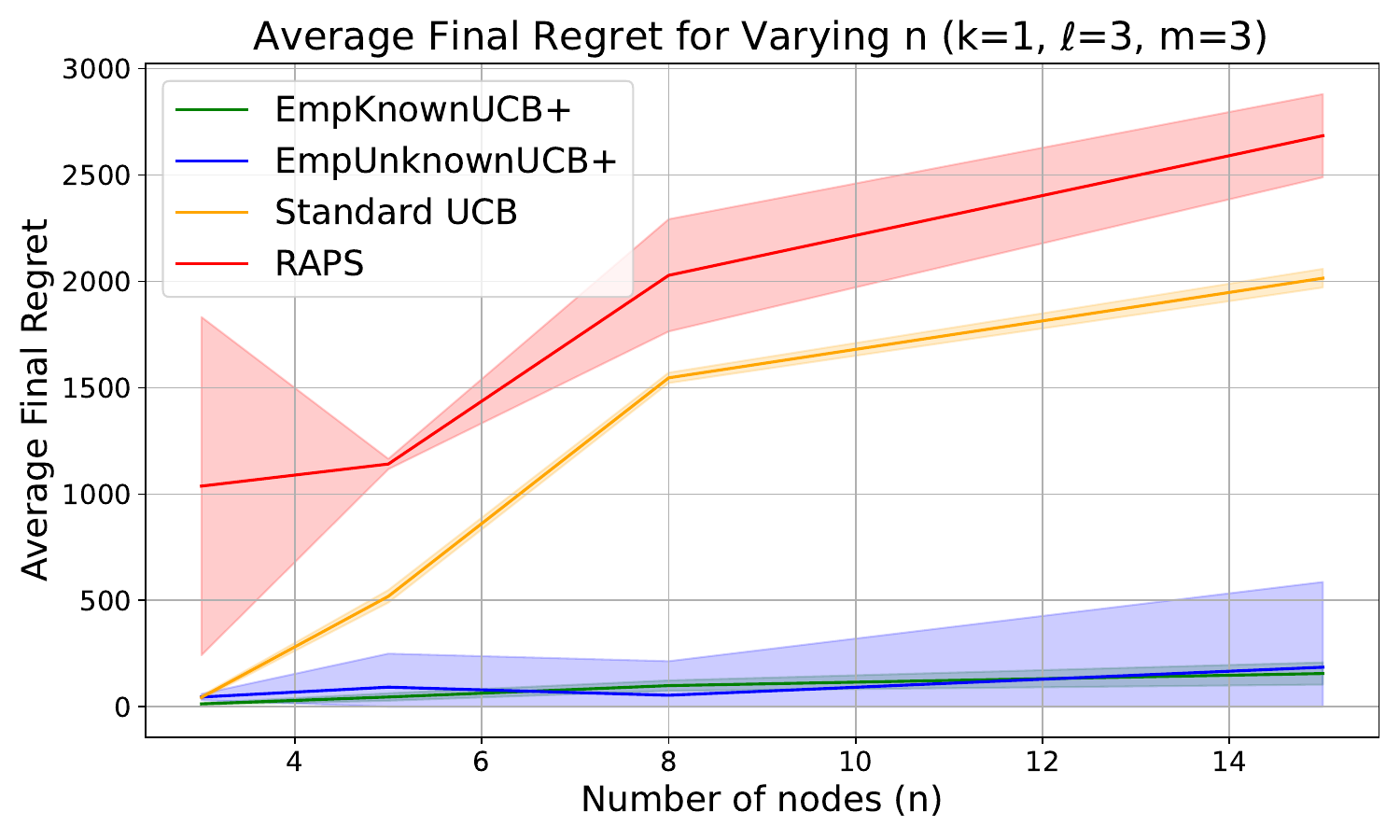}
    \caption{Average cumulative regret of algorithms at time $T$ for instances with varying numbers of nodes.}
    \label{fig: varyn}
\end{figure}

Figure~\ref{fig: varyn} shows the average final regret of the algorithms on instances with parameters $k=1$, $\ell=3$, for varying values of $n \in \{3,5,8,15\}$, with action size $m=3$ and horizon $T=10000$. The results again show that EmpKnownUCB+ and EmpUnknownUCB+ achieve very similar performance, with EmpUnknownUCB+ showing higher variance, while both methods substantially outperform the two baseline algorithms.

\section{DISCUSSION AND FUTURE WORK}

We studied causal bandits with unknown causal structure and showed that, under no distributional assumptions, worst-case regret minimization does not require identifying the reward’s parents. Our results establish the existence of a trade-off between structure learning and regret minimization, while our algorithms achieve nearly optimal rates without recovering the parent set. In both the known and unknown parent-size regimes, our algorithms provably outperform existing baselines and demonstrate strong empirical performance, highlighting that regret minimization can be addressed directly without explicit causal discovery.

While we prove that regret minimization and parent identification can be conflicting objectives in certain instances, characterizing this trade-off remains an interesting direction for future work. For example, one could study the settings in which these objectives are aligned and can be optimized simultaneously, or characterize achievable pairs of parent identification error rates and regret rates, and design algorithms that achieve such trade-offs.

We established that graph learning is not optimal for regret minimization without distributional assumptions. Still it would be valuable to investigate whether there exist realistic assumptions on the causal model that improve the regret. Since structural information alone does not significantly improve regret rates, future work should focus on distributional assumptions that could meaningfully enhance learning performance.



\bibliographystyle{alpha}
\bibliography{biblio}

\newpage

\onecolumn

\aistatstitle{Supplementary Materials}

\section{OMITTED PROOFS}

\subsection{Additional Notation and Helper Lemmas}

In this subsection, we introduce notation and a few helper quantities used in the proofs of the main theorems.

For any pair of probability measures $P,Q$, we denote their Kullback-Leibler divergence by $\kl{P,Q}$.
For any integer $r\ge 1$, define
\[
\mathcal{P}_{r}\; \coloneq  \;\bigl\{(p,s)\ \big|\ p\in \tbinom{[n]}{r},\ s\in [\ell]^r \bigr\},
\]
where $\tbinom{[n]}{r}$ denotes the set of all subset of $[n]$ of size $r$.
We use $\mathcal{P}_r$ to encode pairs $(p,s)$ where $p$ specifies the indices of $r$ variables (out of $n$) and
$s$ lists, in increasing index order, the special values assigned to those variables.  
In particular, the set of size $m$ actions $\mathcal{A}_m$ can be identified with $\mathcal{P}_m$.

For $r\ge 1$, any $(p,s)\in\mathcal{P}_r$, any policy $\pi$, and any horizon $t \in T$, let
$$
N_{p,s}(t,\pi)\;\coloneq \;\sum_{u=1}^{t}\,\mathbbm{1}_{\left\{\mathbf{x}^{(u)}_{p}=s\right\}},
$$
be the number of rounds up to time $t$ in which the coordinates indexed by $p$ take the value pattern $s$, and define the corresponding empirical mean reward
$$
\muh_{p,s}(t,\pi)\;\coloneq \;
\begin{cases}
\displaystyle \frac{1}{N_{p,s}(t,\pi)}\,
\sum_{u=1}^{t} Y_{u}\,\mathbbm{1}_{\left\{\mathbf{x}^{(u)}_{p}=s\right\}} \qquad & \text{if } N_{p,s}(t,\pi)>0,\\[2ex]
0 \qquad & \text{if } N_{p,s}(t,\pi)=0.
\end{cases}
$$
Here $(\mathbf{x}^{(u)},Y_u)$ denotes the observation at round $u$ along the trajectory induced by policy $\pi$ (we omit the instance for brevity). Intervened variables are counted with their assigned values.  
We similarly write $N_{a}(t,\pi)$ and $\muh_{a}(t,\pi)$ for any action $a\in\mathcal{A}$.  

When clear from context, we drop the explicit dependence on $\pi$ in each of these notations.


For any instance with $|\pa{Y}|=k$, let $\alpha_k$ denotes the fraction of optimal arms in $\A_m$. By Lemma~\ref{lem: optimal action A_m}, the optimal action lies in $\A_m$. For $m > k$, in $\A_m$ there are at least $\ell^{m-k}\binom{n-k}{m-k}$ arms achieving the optimal mean reward, since the reward depends only on the values of the $k$ parents and there are exactly this many interventions that set the values of all parents to their optimal combination. Thus, 
\begin{align} \label{eq: alpha_k value}
    \alpha_k 
    &= \frac{\ell^{m-k}\binom{n-k}{m-k}}{\ell^m \binom{n}{m}} 
    = \frac{\binom{m}{k}}{\ell^k \binom{n}{k}}.
\end{align}

For values of $k$ with $m < k$, we know one of the actions in $\A_m$ is optimal, thus $\alpha_k = \frac{1}{|\A_m|} = \frac{1}{\ell ^m \binom{n}{m}}$.

Now we provide two definitions from the literature on causal inference, which we will use in the proof of Lemma~\ref{lem: optimal action A_m}.

\begin{definition}[Blocked Path]
    Let $\G$ be a directed acyclic graph (DAG). A path between two nodes $X$ and $Y$ is said to be \emph{blocked} given a set of nodes $Z$ if at least one of the following holds:
    \begin{itemize}
        \item The path contains a chain $X_i \to X_j \to X_k$ or a fork $X_i \leftarrow X_j \to X_k$ such that the middle node $X_j \in Z$.
        \item The path contains a collider $X_i \to X_j \leftarrow X_k$ such that neither $X_j$ nor any of its descendants belong to $Z$.
    \end{itemize}
\end{definition}

\begin{definition}[d-separation] \label{def: d-separation}
    In a DAG $\G$, two disjoint sets of variables $A$ and $B$ are said to be \emph{d-separated} given a set of variables $Z$ if every path between a node in $A$ and a node in $B$ is blocked given $Z$. We denote this d-separation by $(A \perp \!\!\! \perp B | Z)_{\G}$. 
\end{definition}

\begin{lemma}[Pinsker Inequality] \label{lem: pinsker}
    For measures $P$ and $Q$ on the same probability space $(\Omega,\mathcal F)$
    \begin{equation*}
    \delta(P,Q)\ \coloneq \ \sup_{A\in\mathcal F}\, \big(P(A)-Q(A)\big)
    \ \le\ \sqrt{\frac{1}{2}\,\kl{P,Q}}.
    \end{equation*}
\end{lemma}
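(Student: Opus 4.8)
The plan is to follow the classical route for Pinsker's inequality: reduce the general statement to a scalar inequality on a two-point space via the data-processing inequality for KL divergence, and then verify that scalar inequality by a one-variable convexity argument. First I would dispose of the trivial case: if $\kl{P,Q}=+\infty$ there is nothing to prove, so assume $P\ll Q$ and fix a common dominating measure $\mu$ (for instance $\mu=P+Q$), with densities $p=dP/d\mu$ and $q=dQ/d\mu$. A standard observation is that the supremum defining $\delta(P,Q)$ is attained at $A^\star:=\{p>q\}$: for every measurable $A$ one has $P(A)-Q(A)=\int_A (p-q)\,d\mu\le\int_{A^\star}(p-q)\,d\mu=P(A^\star)-Q(A^\star)$. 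Write $a:=P(A^\star)$ and $b:=Q(A^\star)$, so that $\delta(P,Q)=a-b\ge 0$.

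Next I would coarse-grain $(\Omega,\mathcal F)$ to the two-point partition $\{A^\star,\ \Omega\setminus A^\star\}$. By the data-processing inequality for KL divergence — equivalently, by the log-sum inequality applied to $A^\star$ and its complement, which itself follows from Jensen's inequality and convexity of $t\mapsto t\log t$ — we obtain
\[
\kl{P,Q}\;\ge\; a\log\frac{a}{b}+(1-a)\log\frac{1-a}{1-b}\;=:\;d(a\,\|\,b),
\]
with the usual conventions $0\log 0=0$ and $c\log(c/0)=+\infty$ for $c>0$ handling the boundary cases $b\in\{0,1\}$.

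It then remains to prove the scalar inequality $d(a\|b)\ge 2(a-b)^2$ for all $0\le b\le a\le 1$. Here I would fix $b$, set $g(a):=d(a\|b)-2(a-b)^2$, observe $g(b)=0$ and $g'(b)=0$, and compute $g''(a)=\tfrac{1}{a(1-a)}-4\ge 0$ since $a(1-a)\le\tfrac14$. Hence $g'$ is nondecreasing on $[b,1]$ with $g'(b)=0$, so $g'\ge 0$ there; thus $g$ is nondecreasing with $g(b)=0$, hence $g\ge 0$ on $[b,1]$. Combined with the previous display this yields $\delta(P,Q)^2=(a-b)^2\le\tfrac12\,\kl{P,Q}$, which is exactly the claim.

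The argument is entirely elementary. The only step worth stating with care is the coarse-graining bound (the data-processing / log-sum inequality), since the measure-theoretic preliminaries, the identification of the maximizing event $A^\star$, and the final one-variable estimate are all routine. I do not anticipate any genuine obstacle here — the lemma is a standard tool included for self-containedness, and the above is the textbook proof adapted to the paper's notation.
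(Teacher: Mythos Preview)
Your proposal is correct and is precisely the standard textbook proof of Pinsker's inequality. The paper itself does not include a proof of this lemma --- it is simply stated as a well-known helper result --- so there is no approach to compare against; your argument would serve perfectly well if a self-contained proof were desired.
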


\begin{lemma} [\cite{bandit-book1-lattimore2020bandit}, Section $14$] \label{lem: measures distance}
    Let $(\Omega,\mathcal F)$ be a measurable space and let $P,Q:\mathcal F\to[0,1]$ be probability measures.
    Let $a<b$ and $X:\Omega\to[a,b]$ be an $\mathcal F$-measurable random variable, we have
    \begin{equation*}
    \left|\int_{\Omega} X(\omega)\,dP(\omega)-\int_{\Omega} X(\omega)\,dQ(\omega)\right|
    \ \le\ (b-a)\,\delta(P,Q),
    \end{equation*}
    where $\delta(P,Q)$ is as defined in Lemma~\ref{lem: pinsker}.
\end{lemma}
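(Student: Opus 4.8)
The plan is to reduce to a nonnegative, suitably normalized random variable and then apply the layer-cake (distribution-function) representation of the integral, bounding the integrand pointwise by $\delta(P,Q)$. First I would observe that both sides of the claimed inequality are invariant under translating $X$ by a constant: replacing $X$ by $X-a$ changes $\int_\Omega X\,dP-\int_\Omega X\,dQ$ by $a\paran{P(\Omega)-Q(\Omega)}=a(1-1)=0$, and leaves $(b-a)\,\delta(P,Q)$ untouched. Hence I may assume without loss of generality that $X:\Omega\to[0,c]$ with $c \coloneq b-a$.

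Next I would record that the quantity $\delta(P,Q)=\sup_{A\in\mathcal F}\paran{P(A)-Q(A)}$ is symmetric in $P$ and $Q$: for any $A\in\mathcal F$, taking complements gives $Q(A^c)-P(A^c)=\paran{1-Q(A)}-\paran{1-P(A)}=P(A)-Q(A)$, so $\sup_{A}\paran{P(A)-Q(A)}=\sup_{A}\paran{Q(A)-P(A)}$. This symmetry lets me assume, again without loss, that $\int_\Omega X\,dP\ge\int_\Omega X\,dQ$, so that the absolute value can be dropped (otherwise swap the roles of $P$ and $Q$, which does not affect $\delta$).

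Then I would apply the layer-cake identity. Since $X\ge 0$ is bounded by $c$ and each superlevel set $\{X>t\}$ lies in $\mathcal F$, Tonelli's theorem yields
\begin{equation*}
\int_\Omega X\,dP=\int_0^{c}P(X>t)\,dt,\qquad \int_\Omega X\,dQ=\int_0^{c}Q(X>t)\,dt .
\end{equation*}
Subtracting these two expressions gives
\begin{equation*}
\int_\Omega X\,dP-\int_\Omega X\,dQ=\int_0^{c}\paran{P(X>t)-Q(X>t)}\,dt .
\end{equation*}
For each fixed $t$ the event $\{X>t\}$ is a measurable set, so $P(X>t)-Q(X>t)\le \sup_{A\in\mathcal F}\paran{P(A)-Q(A)}=\delta(P,Q)$.

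Finally I would integrate this pointwise bound over $t\in[0,c]$ to obtain $\int_\Omega X\,dP-\int_\Omega X\,dQ\le c\,\delta(P,Q)=(b-a)\,\delta(P,Q)$, which together with the two reductions establishes the stated inequality with the absolute value. I expect no genuine obstacle here; the only points requiring care are the translation and symmetry reductions, which justify treating $\delta$ one-sidedly and removing the absolute value, and the validity of the layer-cake formula, which is immediate because $X$ is bounded and $\mathcal F$-measurable, so all integrals are finite and Tonelli applies.
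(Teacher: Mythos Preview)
Your proof is correct; the layer-cake/tail-integral argument with the translation and symmetry reductions is a clean and standard way to obtain the bound. The paper does not supply its own proof of this lemma---it is stated as a helper result cited from \cite{bandit-book1-lattimore2020bandit}, Section~14---so there is nothing to compare against beyond noting that your argument is complete and self-contained.
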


\begin{theorem}[Bretagnolle–Huber Inequality]\label{lem:Bretagnolle}
    Let $P$ and $Q$ be probability measures on the same measurable space $(\Omega, \mathcal{F})$, and let $A\in\mathcal{F}$ be an arbitrary event. Then,
    \begin{align*}
        P(A)+Q(A^c)\geq \frac{1}{2}\exp \big( -\kl{P,Q} \big),
    \end{align*}
    where $A^c=\Omega\setminus A$.
\end{theorem}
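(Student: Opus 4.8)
The plan is to prove this inequality through the Hellinger affinity (Bhattacharyya coefficient), passing through two elementary steps after first removing the dependence on the arbitrary event $A$. I would begin by fixing a common dominating measure $\lambda$ (for instance $\lambda = P+Q$) and writing $p = dP/d\lambda$ and $q = dQ/d\lambda$ for the corresponding densities. The first observation is that $P(A)+Q(A^c) = \int_A p\,d\lambda + \int_{A^c} q\,d\lambda \geq \int_{\Omega}\min(p,q)\,d\lambda$, since on $A$ we have $p \geq \min(p,q)$ and on $A^c$ we have $q \geq \min(p,q)$. This eliminates $A$ and reduces the claim to the event-free bound $\int \min(p,q)\,d\lambda \geq \tfrac12 \exp\big(-\kl{P,Q}\big)$.

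The second step relates $\int\min(p,q)\,d\lambda$ to the affinity $\int\sqrt{pq}\,d\lambda$. Writing $\sqrt{pq} = \sqrt{\min(p,q)}\,\sqrt{\max(p,q)}$ and applying the Cauchy–Schwarz inequality to the nonnegative integrable functions $\sqrt{\min(p,q)}$ and $\sqrt{\max(p,q)}$ gives $\big(\int\sqrt{pq}\,d\lambda\big)^2 \leq \int\min(p,q)\,d\lambda \cdot \int\max(p,q)\,d\lambda$. Since $\min(p,q)+\max(p,q)=p+q$ integrates to $2$, the factor $\int\max(p,q)\,d\lambda$ is at most $2$, and hence $\int\min(p,q)\,d\lambda \geq \tfrac12\big(\int\sqrt{pq}\,d\lambda\big)^2$.

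The final step lower bounds the affinity by the KL divergence via Jensen's inequality. I would write $\int\sqrt{pq}\,d\lambda = \mathbb{E}_P\big[\exp\big(\tfrac12\log\tfrac{q}{p}\big)\big]$ and invoke convexity of the exponential to obtain $\int\sqrt{pq}\,d\lambda \geq \exp\big(\tfrac12\mathbb{E}_P\big[\log\tfrac{q}{p}\big]\big) = \exp\big(-\tfrac12\kl{P,Q}\big)$. Chaining the three bounds then yields $P(A)+Q(A^c) \geq \int\min(p,q)\,d\lambda \geq \tfrac12\big(\int\sqrt{pq}\,d\lambda\big)^2 \geq \tfrac12\exp\big(-\kl{P,Q}\big)$, which is exactly the claim.

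The main obstacle is not any single inequality, each of which is standard, but rather the measure-theoretic bookkeeping around the densities. In particular I would handle carefully the regions where $p$ or $q$ vanishes, using the convention that the set $\{p=0\}$ contributes nothing to $\mathbb{E}_P$, and noting that if $P$ is not absolutely continuous with respect to $Q$ then $\kl{P,Q}=\infty$ and the bound holds trivially. I would also confirm that the Cauchy–Schwarz application and the Jensen step are legitimate under these conventions, so that the chain of inequalities is valid for arbitrary $P$, $Q$, and $A$.
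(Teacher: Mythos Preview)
Your proof is correct and follows the classical route to the Bretagnolle--Huber inequality via the Hellinger affinity (reduce to $\int\min(p,q)\,d\lambda$, then Cauchy--Schwarz to reach $\tfrac12(\int\sqrt{pq}\,d\lambda)^2$, then Jensen to connect to $\kl{P,Q}$). The paper, however, does not supply its own proof of this statement: it merely quotes the inequality as a known tool and invokes it inside the proofs of the lower bounds (Theorems~\ref{them: id-regret-trade-off} and~\ref{thm: unknown-k-lower-bound}). So there is nothing in the paper to compare against; your argument is exactly the standard textbook proof (see, e.g., Lattimore and Szepesv\'ari, \emph{Bandit Algorithms}, Chapter~14), including the handling of the $\{p=0\}$ set and the trivialization when $P\not\ll Q$.
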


\begin{lemma}\label{lem: sub-gaussian}
    Let $X_1, X_2, \ldots, X_k$ be sub-Gaussian random variables with parameter $\sigma^2 = 1$ and means $\mathbb{E}[X_i] = \mu_i \in [a, b]$. 
    Define the mixture random variable $X$ such that $X = X_i$ with probability $p_i$. 
    Then $X$ is sub-Gaussian with parameter 
    \[
        \sigma_X^2 = 1 + \frac{(b - a)^2}{4}.
    \]
\end{lemma}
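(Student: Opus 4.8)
\textbf{Proof proposal for Lemma~\ref{lem: sub-gaussian}.}

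The plan is to bound the moment generating function (MGF) of the mixture $X$ directly. Recall that $X$ is sub-Gaussian with parameter $\sigma_X^2$ iff $\mathbb{E}\!\left[e^{\lambda(X-\mathbb{E}X)}\right]\le e^{\lambda^2\sigma_X^2/2}$ for all $\lambda\in\mathbb{R}$. Write $\mu \coloneq \mathbb{E}[X]=\sum_i p_i\mu_i$, which lies in $[a,b]$. By the law of total expectation, $\mathbb{E}\!\left[e^{\lambda(X-\mu)}\right]=\sum_i p_i\,\mathbb{E}\!\left[e^{\lambda(X_i-\mu)}\right]$. The first step is to split the exponent as $X_i-\mu=(X_i-\mu_i)+(\mu_i-\mu)$, so that each summand factors as $e^{\lambda(\mu_i-\mu)}\,\mathbb{E}\!\left[e^{\lambda(X_i-\mu_i)}\right]\le e^{\lambda(\mu_i-\mu)}\,e^{\lambda^2/2}$, using the $1$-sub-Gaussianity of each $X_i$. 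This yields
\begin{align*}
\mathbb{E}\!\left[e^{\lambda(X-\mu)}\right]\;\le\; e^{\lambda^2/2}\sum_i p_i\,e^{\lambda(\mu_i-\mu)}.
\end{align*}

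The second step is to control the remaining factor $\sum_i p_i e^{\lambda(\mu_i-\mu)}$. Since each $\mu_i-\mu\in[a-b,\,b-a]$, the random variable $Z$ taking value $\mu_i-\mu$ with probability $p_i$ is a mean-zero random variable bounded in an interval of width $b-a$. By Hoeffding's lemma, any mean-zero random variable supported on an interval of length $w$ is sub-Gaussian with parameter $w^2/4$; hence $\sum_i p_i e^{\lambda(\mu_i-\mu)}=\mathbb{E}[e^{\lambda Z}]\le e^{\lambda^2(b-a)^2/8}$. Combining the two bounds gives
\begin{align*}
\mathbb{E}\!\left[e^{\lambda(X-\mu)}\right]\;\le\; \exp\!\left(\frac{\lambda^2}{2}\left(1+\frac{(b-a)^2}{4}\right)\right),
\end{align*}
which is exactly the claimed sub-Gaussianity with $\sigma_X^2 = 1+\frac{(b-a)^2}{4}$.

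The main (and only real) obstacle is invoking Hoeffding's lemma cleanly for the discrete mean-zero variable $Z$; this is standard, but one must be careful that $Z$ genuinely has mean zero (which holds since $\mathbb{E}[Z]=\sum_i p_i\mu_i-\mu=0$) and that the support width is $b-a$ rather than something larger — it suffices that all $\mu_i$ lie in $[a,b]$, so all values $\mu_i-\mu$ lie in $[a-\mu,b-\mu]\subseteq[-(b-a),\,b-a]$, an interval of length $\le b-a$ once recentered, and Hoeffding's lemma only depends on the length of the smallest interval containing the support. Everything else is a routine splitting of the MGF and multiplication of the two exponential bounds; no conditioning subtleties arise because the mixture structure makes the total expectation decompose exactly.
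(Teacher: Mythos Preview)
Your proof is correct and uses essentially the same decomposition as the paper: split the mixture into a discrete ``mean'' part and a centered ``noise'' part, apply Hoeffding's lemma to the bounded mean part and $1$-sub-Gaussianity to the noise part, then combine. The paper phrases this as writing $X=Y+\epsilon$ with $Y$ taking value $\mu_i$ with probability $p_i$ and $\epsilon$ ``a $1$-sub-Gaussian random variable independent of $Y$,'' and then invoking additivity of sub-Gaussian parameters for independent sums. Your direct MGF computation is actually the cleaner route: the paper's independence claim is not justified in general (the centered laws $X_i-\mu_i$ need not coincide across $i$, so $\epsilon$ depends on which component was selected), whereas your argument only uses that each conditional $X_i-\mu_i$ is $1$-sub-Gaussian, which is precisely the hypothesis. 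One minor wording issue: the interval $[a-\mu,\,b-\mu]$ already has length exactly $b-a$, so no ``recentering'' is needed before applying Hoeffding's lemma.
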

\begin{proof}
    We can decompose $X$ as $X = Y + \epsilon$, where $Y$ is a discrete random variable taking value $\mu_i$ with probability $p_i$, and $\epsilon$ is a $1$-sub-Gaussian random variable independent of $Y$. 
    The random variable $Y$ is $\tfrac{(b - a)}{2}$-sub-Gaussian because it is supported on an interval of length $(b - a)$. 
    Since $\epsilon$ is $1$-sub-Gaussian, and the sum of independent $\sigma_1$- and $\sigma_2$-sub-Gaussian random variables is $\sqrt{\sigma_1^2 + \sigma_2^2}$-sub-Gaussian, it follows that
    \[
        \sigma_X^2 = 1 + \frac{(b - a)^2}{4}.
    \]
\end{proof}

\begin{lemma} \label{lem: non-optimal prob}
    Let $\alpha$ denote the fraction of optimal arms in $\A_m$, i.e., the number of optimal arms divided by $|\A_m|$. Then, if we choose a random subset of arms of size $\frac{1}{\alpha} \ln \sqrt{T}$, either with or without replacement, the probability that the subset contains no optimal arm is less than $\tfrac{1}{\sqrt{T}}$.
\end{lemma}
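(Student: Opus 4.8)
The plan is to bound the failure probability directly via the union-bound/independence structure of random sampling and then invoke the elementary inequality $1-x \le e^{-x}$. Let $N \coloneq \frac{1}{\alpha}\ln\sqrt{T}$ be the size of the drawn subset, and recall that by definition the fraction of optimal arms in $\A_m$ equals $\alpha$, so a single uniformly random draw from $\A_m$ is non-optimal with probability exactly $1-\alpha$.

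\textbf{Case 1: sampling with replacement.} The $N$ draws are i.i.d., so the probability that \emph{every} draw is non-optimal is $(1-\alpha)^N$. Using $1-\alpha \le e^{-\alpha}$, we get
$$
(1-\alpha)^N \;\le\; e^{-\alpha N} \;=\; e^{-\ln\sqrt{T}} \;=\; \frac{1}{\sqrt{T}}.
$$

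\textbf{Case 2: sampling without replacement.} Here the draws are negatively associated rather than independent, so I would argue that the without-replacement failure probability is no larger than the with-replacement one. Concretely, writing $|\A_m| = M$ and the number of optimal arms as $\alpha M$, the probability of drawing no optimal arm in $N$ draws without replacement is $\binom{(1-\alpha)M}{N}/\binom{M}{N} = \prod_{j=0}^{N-1}\frac{(1-\alpha)M - j}{M-j}$, and each factor satisfies $\frac{(1-\alpha)M-j}{M-j} \le \frac{(1-\alpha)M}{M} = 1-\alpha$ since $(1-\alpha)M-j \le (1-\alpha)(M-j)$ is equivalent to $j\alpha \ge 0$. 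Hence this product is at most $(1-\alpha)^N \le 1/\sqrt{T}$ by Case 1. (One should also note $N\le (1-\alpha)M$ for the binomial coefficient to be nonzero; if $N$ exceeds this, the subset is forced to contain an optimal arm and the probability is $0$, so the bound holds trivially.)

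\textbf{Main obstacle.} There is no deep obstacle here; the only subtlety is the without-replacement case, where one must be careful to compare the hypergeometric tail against the binomial tail correctly rather than appealing to independence. A minor bookkeeping point is that $N = \frac{1}{\alpha}\ln\sqrt{T}$ need not be an integer, so strictly one works with $\lceil \frac{1}{\alpha}\ln\sqrt{T}\rceil$ draws, which only decreases the failure probability and hence preserves the stated bound.
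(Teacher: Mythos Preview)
Your proof is correct and follows essentially the same approach as the paper: the with-replacement case is handled identically via $(1-\alpha)^N \le e^{-\alpha N}$, and for without replacement the paper simply asserts that missing all optimal arms is stochastically less likely than under with-replacement sampling, whereas you spell this out explicitly by bounding each factor of the hypergeometric product by $1-\alpha$. Your additional remarks on the integer-rounding of $N$ and the degenerate case $N > (1-\alpha)M$ are nice touches that the paper omits.
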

\begin{proof}
    Let $N = |\A_m|$ be the total number of arms, so the number of optimal arms is $\alpha N$. Consider sampling with replacement. In each draw, the probability of picking a non-optimal arm is $1-\alpha$. After 
    $$
    q \;\coloneq \; \frac{1}{\alpha}\ln\sqrt{T}
    $$ 
    independent draws, the probability that all sampled arms are non-optimal is
    $$
    (1-\alpha)^q \;\leq\; \exp(-\alpha q) \;=\; \exp\!\big(-\ln\sqrt{T}\big) \;=\; \frac{1}{\sqrt{T}},
    $$
    where the inequality comes from the fact that $\forall x \in \mathbb{R} : 1 - x \leq \exp (-x)$.
    Thus, with replacement, the probability that the sampled set contains no optimal arm is at most $1/\sqrt{T}$. 
    
    Sampling without replacement can only decrease the probability of missing all optimal arms (it is stochastically dominated by the with-replacement model), so the same bound applies.
\end{proof}

\subsection{Proof of Lemma~\ref{lem: optimal action A_m}}
In this section, we present the proof of Lemma \ref{lem: optimal action A_m}. 

\optimalActionAm*

\begin{proof}
    To prove this lemma, we separate two cases depending on whether $m \geq k$ or $m < k$. 

    \textbf{Case 1: $m \geq k$.} In this case, for any action $a = (p,s) \in \A$, we have 
    \begin{align*}
        \mu_a = \mathbb{E} \big[ Y \mid do(\mathbf{X}_p = s)\big] 
        &= \sum_{\mathbf{z} \in [\ell]^k} \pr \big( \pa{Y} = \mathbf{z} \mid do(\mathbf{X}_p = s) \big) \, \mathbb{E} \big[Y \mid \pa{Y} = \mathbf{z} \big]  \\
        &\leq \max_{\mathbf{z} \in [\ell]^k} \mathbb{E} \big[Y \mid \pa{Y} = \mathbf{z} \big] = \max_{\mathbf{z} \in [\ell]^k} \mathbb{E} \big[Y \mid do(\pa{Y} = \mathbf{z}) \big]\\
        &= \mu_{a_{\mathbf{z}}},
    \end{align*}
    where $a_{\mathbf{z}}$ is any of the actions in $\A_m$ that intervenes on all the reward parents and sets their values to $\mathbf{z}$.

    \textbf{Case 2: $m < k$.} This case requires a more involved argument. We show that for any action $a = (p,s)$, there exists an intervention that intervenes on one more variable than $a$ and achieves mean reward at least $\mu_a$. Repeating this process yields an action of size $m$ with mean reward at least as large as $\mu_a$.

    Fix any $r \notin p$. Then
    \begin{align*}
        \mu_a &= \mathbb{E} \big[ Y \mid do(\mathbf{X}_p = s)\big] \\
        &= \sum_{i \in [\ell]} \pr \big( X_r = i \mid do(\mathbf{X}_p = s) \big) \, \mathbb{E} \big[Y \mid do(\mathbf{X}_p = s), X_r = i \big].
    \end{align*}
    If we can show that 
    $$
    \pr \big(Y \mid do(\mathbf{X}_p = s), X_r = i \big) = \pr \big(Y \mid do(X_p = s, X_r = i) \big) \quad \forall i \in [\ell],
    $$
    then it follows that
    \begin{align*}
        \mu_a &= \sum_{i \in [\ell]} \pr \big( X_r = i \mid do(\mathbf{X}_p = s) \big) \, \mathbb{E} \big[Y \mid do(X_p = s, X_r = i) \big] \\
        &\leq \max_{i \in [\ell]} \mathbb{E} \big[Y \mid do(X_p = s, X_r = i) \big] = \mu_{a'},
    \end{align*}
    where $a'$ is an action that extends $a$ by also intervening on $X_r$. Thus, it suffices to show that such an $X_r$ exists.

    By the second rule of Pearl’s do-calculus~\cite{pearl2009causality}, a sufficient condition for the above equality is that 
    \begin{align} \label{eq: d-sep X_r}
        (Y \perp\!\!\!\perp X_r \mid X_p)_{\G'_{\overline{X_p}\underline{X_r}}},
    \end{align}
    where $\G' = \G \cup \{Y\}$, and ${\G'_{\overline{X_p}\underline{X_r}}}$ denotes the graph $\G'$ with all incoming edges to $X_p$ removed and all outgoing edges from $X_r$ removed. 

    To prove~\eqref{eq: d-sep X_r}, let $\sigma : [n] \to [n]$ be a topological ordering of the DAG $\G$, i.e., in the sequence $X_{\sigma(1)}, X_{\sigma(2)}, \ldots, X_{\sigma(n)}$ all edges point forward. Choose $r$ as the first index in this order such that $r \notin p$. We claim this choice of $X_r$ satisfies~\eqref{eq: d-sep X_r}. 

    Consider any path $X_r - X_{i_1} - X_{i_2} - \ldots - X_{i_t} - Y$ in ${\G'_{\overline{X_p}\underline{X_r}}}$. This path has the following properties:  
    \begin{enumerate}
        \item There must be at least one intermediate variable other than $X_r$ and $Y$ (i.e., $t>0$), because all outgoing edges from $X_r$ are removed and $Y$ has no children. Thus $X_r \to Y$ and $X_r \leftarrow Y$ are impossible.  
        \item The first edge must be oriented $X_r \leftarrow X_{i_1}$, since no outgoing edges from $X_r$ remain in ${\G'_{\overline{X_p}\underline{X_r}}}$.  
        \item By the causal order and the choice of $r$, we must have $X_{i_1} \in X_p$ or there is no path between $X_r$ and $Y$.  
    \end{enumerate}
    Therefore, $X_{i_1}$ lies on the path, is conditioned on, and blocks the path regardless of the orientation of the next edge. Hence all such paths are blocked, establishing~\eqref{eq: d-sep X_r}. 

    This proves that for any action $a$ there exists an extended action $a'$ with $\mu_{a'} \geq \mu_a$. Repeating this argument iteratively shows that there exists an optimal action in $\A_m$, completing the proof.
\end{proof}

\subsection{Proofs of Section \ref{sec: trade-off}}\label{proof:id-regret-trade-off}

Herein, we present the proofs for Section~\ref{sec: trade-off} of the main text. 


\tradeOff* 

\begin{proof}
    To prove this result, we first introduce the class of instances $\E_0$. The definition of $\E_0$ proceeds in three steps: we specify the common graph $\G$ over $\X$, then the conditional distributions of variables given their parents, and finally the parent set $\pay$ together with the reward distribution. Throughout, we assume all variables are binary.

    The graph $\G$ shared among all instances in $\E_0$ is defined as follows. For every $i \in \{1,2,\ldots,k\}$ and every $j \in \{k+1,\ldots,n\}$, $X_i$ is a parent of $X_j$, and there are no other edges between pairs of nodes. This structure is illustrated in Figure~\ref{fig: trade-off-graph}.  
    
    For the conditional distributions, the variables $X_i$ with $i \in [k]$ have no parents and are independent. We fix $\pr(X_i = 0) = 1$ for each such variable, meaning they always take the value $0$ unless intervened upon. For any $i > k$, we define
    $$
    \pr \paran{ X_i = 1 \mid X_1 = x_1, X_2 = x_2, \ldots, X_k = x_k } = 
    \begin{cases}
        0 & \text{if } x_1 x_2 \cdots x_k = 0, \\
        1 & \text{if } x_1 x_2 \cdots x_k = 1.
    \end{cases}
    $$
    That is, these variables are always equal to $0$, except in the case where all nodes $\{X_1, X_2, \ldots, X_k\}$ equal $1$, in which case they take the value $1$.
    
    The reward variable $Y$ has distribution $\mathcal{N}(0,1)$ for any combination of parents except the case where all parents equal $1$, in which case its distribution is $\mathcal{N}(1,1)$. Thus, the mean reward is $0$ in all cases except when every parent is set to $1$, where the mean is $1$.
    
    All instances in $\E_0$ share the same graph $\G$, the distributions of all variables, and the reward distribution. The only difference between them is the identity of the parent set $\pay$. For each set $p \in \binom{[n]}{k}$, we construct an instance $\V_p \in \E_0$ where $\pay = p$. Hence, the total number of instances is $|\E_0| = \binom{n}{k}$.

    \begin{figure}[t] 
    \centering
    \includegraphics[width=0.8\textwidth]{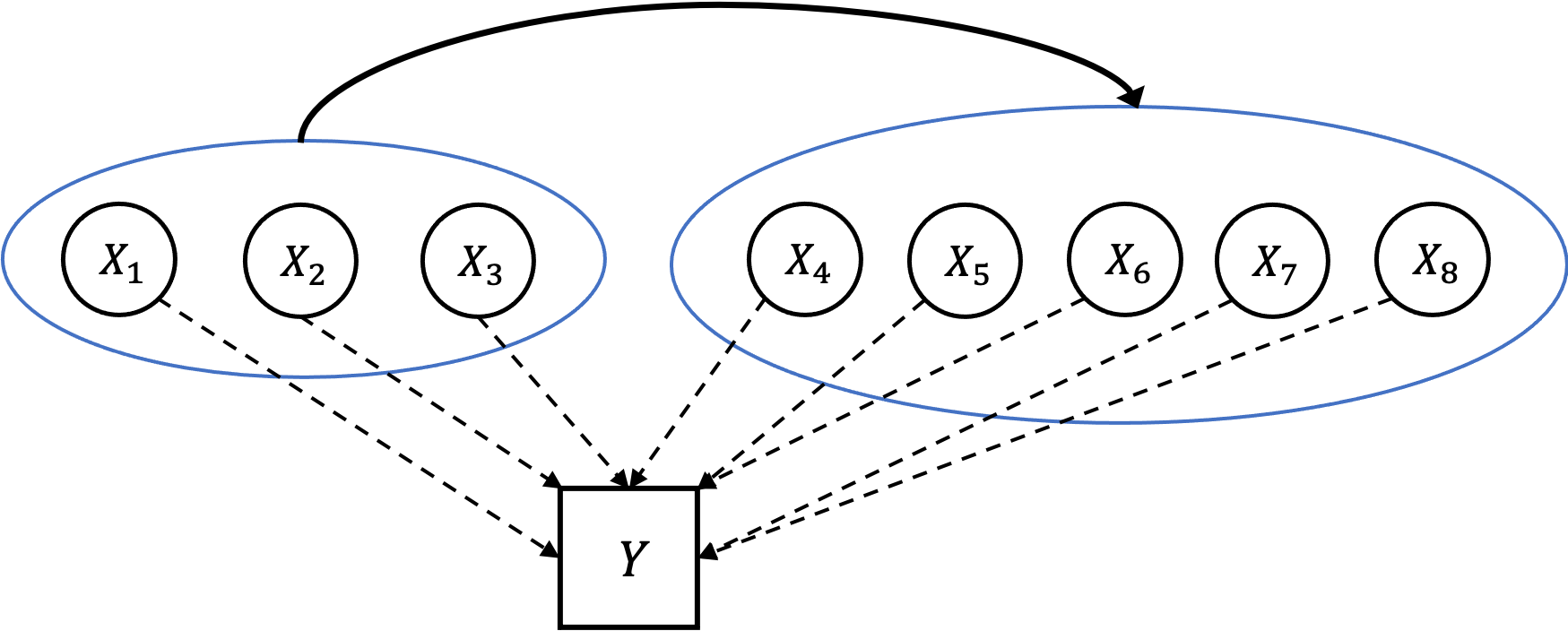}
    \caption{Graph structure for $n=8$, $k=3$. Nodes $X_1, X_2, X_3$ act as parents of all other variables. The reward node $Y$ has dashed incoming edges from all variables, indicating that any subset of size $k$ can form the true parent set $\pay$.}\label{fig: trade-off-graph}
    \end{figure}

    We now prove that in the setting $m = k$, uniform sampling, combined with a suitable suggestion rule $\bar{d}_{\text{Unif}}$, achieves parent identification error $\ocal\paran{\exp\paran{-T}}$ on instances in $\E_0$.

    \paragraph{Decision Rule $\bar{d}_{\text{Unif}}$.}
    Given the empirical means $\muh_a(T)$ for $a \in \A_k$ (policy/instance dependence suppressed), the rule is:
    \begin{enumerate}
        \item If there exists an action $a = (p_a, s_a)$ such that
        \begin{itemize}
            \item $\muh_a(T) > 0.5$, and
            \item $p_a \neq (1,2,\ldots,k)$,
        \end{itemize}
        then output $\bar{d}_{\text{Unif}} = p_a$ (if there are multiple such actions, choose one of them arbitrarily).
        \item Otherwise, output $\bar{d}_{\text{Unif}} = [k]$.
    \end{enumerate}
    
    Consider the uniform sampling policy that samples each arm in $\A_m$ uniformly, with $m = k$, i.e., each $a \in \A_k$ is pulled exactly $T / |\A_k|$ times (up to $\pm 1$). We show that applying $\bar{d}_{\text{Unif}}$ after $T$ rounds yields an error probability $\ocal\paran{\exp\paran{-T}}$ on $\E_0$.
    
    \begin{proposition}\label{prop:uniform-exp}
    For the class $\E_0$ described above and $m=k$, the uniform-sampling policy on $\A_k$ combined with the rule $\bar{d}_{\text{Unif}}$ satisfies
    $$
    \delta \paran{\pi_{\text{unif}},\bar{d}_{\text{Unif}}, \E_0} \le C \exp\paran{- c T},
    $$
    for some constants $C,c>0$ that may depend on $n,k$ but not on $T$.
    \end{proposition}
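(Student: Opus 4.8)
The plan is to split the argument into a deterministic structural part and a routine concentration part. The structural part computes all mean rewards on $\E_0$ exactly; the concentration part shows that the empirical means stay close to them, after which correctness of $\bar{d}_{\text{Unif}}$ follows by inspection. Throughout, fix $m=k$ and an instance $\V_p\in\E_0$ with parent set $p\in\binom{[n]}{k}$.

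\emph{(i) Structural characterization of $\mu_a$.} I would first prove that every action $a=(p_a,s_a)\in\A_k$ satisfies $\mu_a\in\{0,1\}$ — immediate since all conditional laws on $\E_0$ are deterministic, so the parents' values are a deterministic function of the intervention — and, more importantly, that $\mu_a=1$ if and only if $a\in\{([k],\mathbf{1}),(p,\mathbf{1})\}$. The proof is a short case split on whether the intervention forces $X_1=\dots=X_k=1$. Since $X_1,\dots,X_k$ have no parents and default to $0$, this occurs only when $[k]\subseteq p_a$, i.e.\ $p_a=[k]$ and $s_a=\mathbf{1}$; then all variables become $1$, in particular all of $\pay=p$, so $\mu_a=1$ (this is the ``safe'' optimal action available in every instance). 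Otherwise some $i\in[k]$ is left unintervened, equals $0$, and hence forces every unintervened variable of index $>k$ to $0$; so for $\mu_a=1$ one needs $p\subseteq p_a$ with $s_a$ identically $1$ on $p$, and because $|p_a|=|p|=k$ this means exactly $a=(p,\mathbf{1})$, which indeed attains $\mu_a=1$ since it sets all parents directly to $1$. In particular, no ``spurious'' action with $p_a\notin\{[k],p\}$ attains reward $1$.

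\emph{(ii) Decision rule on a good event.} Let $G=\{\,|\muh_a(T)-\mu_a|\le\tfrac{1}{4}\ \text{for all }a\in\A_k\,\}$. On $G$, every action with $\mu_a=0$ has $\muh_a\le\tfrac{1}{4}<\tfrac{1}{2}$, while $(p,\mathbf{1})$ has $\muh_{(p,\mathbf{1})}\ge\tfrac{3}{4}>\tfrac{1}{2}$. Thus $\bar{d}_{\text{Unif}}$ outputs $[k]$ when $p=[k]$ (no action with $p_a\neq[k]$ crosses the threshold $\tfrac{1}{2}$, by (i)), and outputs $p$ when $p\neq[k]$ (the unique action with $p_a\neq[k]$ crossing the threshold is $(p,\mathbf{1})$, again by (i)). Either way the estimated set equals $\pay(\V_p)$, so $G$ implies correct identification uniformly over $\E_0$.

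\emph{(iii) Probability of $G^c$.} Under $\pi_{\text{unif}}$ each arm of $\A_k$ is pulled $N_a=T/(\binom{n}{k}2^k)$ times up to rounding, hence $N_a\ge T/(2\binom{n}{k}2^k)$ once $T$ exceeds this $n,k$-dependent constant. Since each pull's reward is $1$-sub-Gaussian, a standard sub-Gaussian tail bound gives $\pr(|\muh_a(T)-\mu_a|>\tfrac{1}{4})\le 2\exp(-N_a/32)$, and a union bound over the $\binom{n}{k}2^k$ arms gives $\pr(G^c)\le C\exp(-cT)$ with $C,c>0$ depending only on $n,k$; for the finitely many small $T$ below the threshold the stated bound holds trivially after enlarging $C$. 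Combined with (ii), this yields $\delta(\pi_{\text{unif}},\bar{d}_{\text{Unif}},\E_0)\le C\exp(-cT)$. I expect (i) to be the only real obstacle — specifically, confirming that $([k],\mathbf{1})$ and $(p,\mathbf{1})$ are the \emph{only} reward-$1$ actions on $\V_p$, so that no third ``informative'' action can mislead $\bar{d}_{\text{Unif}}$; steps (ii) and (iii) are bookkeeping and standard concentration.
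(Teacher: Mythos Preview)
Your proposal is correct and follows essentially the same approach as the paper's proof: identify that $\mu_a=1$ exactly for $a\in\{([k],\mathbf{1}),(p,\mathbf{1})\}$, define the good event that all empirical means lie within $1/4$ of the truth, verify that $\bar{d}_{\text{Unif}}$ is correct on this event, and bound the bad event via a sub-Gaussian tail bound plus union bound over $\A_k$. The paper's argument is organized in the same three pieces with the same $\epsilon=1/4$ threshold and the same case split on $p=[k]$ versus $p\neq[k]$.
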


    \begin{proof}
    Fix an instance $\V_p \in \E_0$ with true parent set $\pay=p$. Recall from the definition of $\E_0$ (binary variables) that the reward has mean $1$ if and only if \emph{all} parents in $\pay$ equal $1$, and mean $0$ otherwise. Let
    $$
    a_p^\star \;=\; \big(p,\mathbf{1}_k\big), \qquad 
    a_0 \;=\; \big([k],\mathbf{1}_k\big),
    $$
    where $\mathbf{1}_k$ is the all-ones vector in $\{0,1\}^k$. Under $\E_0$, the only actions with $\mu_a=1$ are $a_p^\star$ and $a_0$; every other $a\in\A_k$ has $\mu_a=0$. Indeed, setting $\mathbf{X}_{[k]}$ to $1$ forces all $X_j$, $j>k$, to $1$, so $a_0$ ensures that \emph{any} parent set $\pay$ is all ones; conversely, if an action does not set all coordinates in $p$ to $1$ and does not set $\mathbf{X}_{[k]}$ to $1$, then at least one parent remains $0$ (since $X_i=0$ for each $i$ unless intervened or the action is $a_0$), hence $\mu_a=0$.
    
    Under uniform sampling,
    $$
    N_a(T) \in \Big\{\Big\lfloor \tfrac{T}{|\A_k|}\Big\rfloor, \Big\lceil \tfrac{T}{|\A_k|}\Big\rceil\Big\} 
    \quad\text{so in particular}\quad
    N_{\min} \;\coloneq \; \min_{a\in\A_k} N_a(T) \;\ge\; \frac{T-|\A_k|}{|\A_k|}.
    $$
    Since rewards are $1$-sub-Gaussian with means $\mu_a\in\{0,1\}$, by Hoeffding's inequality, for any $\epsilon\in(0,1/2)$,
    $$
    \pr\!\paran{\big|{\muh_a(T)-\mu_a} \big|>\epsilon} \;\le\; 2 \exp\!\paran{- c_0 N_a \epsilon^2}
    \quad\text{for all } a\in\A_k
    $$
    for some absolute constant $c_0>0$. By a union bound over $\A_k$ and the bound on $N_{\min}$,
    $$
    \pr\!\paran{\exists a\in\A_k:\big| {\muh_a(T)-\mu_a} \big| >\epsilon}
    \;\le\; 2|\A_k|\exp\!\paran{- c_0 N_{\min}\epsilon^2}
    \;\le\; 2|\A_k|\exp\!\paran{- c_0 \epsilon^2 \,\frac{T-|\A_k|}{|\A_k|}}.
    $$
    Fix $\epsilon=\tfrac{1}{4}$. For all $T\ge 2|\A_k|$ we have $(T-|\A_k|)/|\A_k|\ge T/(2|\A_k|)$, hence
    $$
    \Pr\!\paran{\exists a\in\A_k:\big|{\muh_a(T)-\mu_a} \big| >\tfrac{1}{4}}
    \;\le\; C_1 \exp\!\paran{- c_1 T},
    $$
    with $C_1=2|\A_k|$ and $c_1 = c_0/(32|\A_k|)$. (For $T<2|\A_k|$ the bound can be absorbed into constants.)
    
    Now we show that under the complement of this event, i.e., $\forall a \in \A_k : \big|{\muh_a(T)-\mu_a} \big| \leq \tfrac{1}{4}$, the parent set is identified correctly by $\bar{d}_{\text{Unif}}$. Under this good event, we have
    $$
    \muh_{a_p^\star}(T) \ge 1-\tfrac{1}{4} > \tfrac{1}{2}, 
    \qquad
    \muh_{a_0}(T) \ge 1-\tfrac{1}{4} > \tfrac{1}{2},
    \qquad
    \muh_a(T) \le \tfrac{1}{4} < \tfrac{1}{2} \ \ \forall a\notin\{a_p^\star,a_0\}.
    $$
    \emph{Case 1: $p\neq [k]$.} Since $p_{a_p^\star}=p\neq [k]$, $\muh_{a_p^\star}(T) > \tfrac{1}{2}$, and $\muh_a(T) < \tfrac{1}{2} \ \ \forall a\notin\{a_p^\star,a_0\}$, the decision rule outputs $\bar{d}_{\text{Unif}}=p$.
    
    \emph{Case 2: $p=[k]$.} Then $a_p^\star=a_0$, and under the good event there is no action with $\muh_a(T)>\tfrac{1}{2}$ and $p_a\neq [k]$. The rule therefore outputs $\bar{d}_{\text{Unif}}=[k]=p$.
    
    In both cases, under the good event the output is correct; thus
    $$
    \pr\!\paran{\bar{d}_{\text{Unif}}\neq p} \;\le\; \pr\!\paran{\exists a\in\A_k:\big| {\muh_a(T)-\mu_a} \big| >\tfrac{1}{4}}
    \;\le\; C_1 \exp\!\paran{- c_1 T}.
    $$
    Renaming constants gives the claimed bound with some $C,c>0$ depending at most on $n,k$ (through $|\A_k|=\binom{n}{k} 2^k$), but not on $T$.
    \end{proof} 

    We now prove the second statement of the theorem. Fix any policy $\pi$. For any set $p \in \binom{[n]}{k}$, recall that $\pr_p^\pi$ denotes the probability measure induced by $T$ rounds of interaction between $\pi$ and $\V_p$. Define the event $\E$ as the event that the estimated parent set of $\pi$, denoted by $\hat{Pa}(\pi)$, is equal to $[k]$. Then, by the definition of $\delta(\pi, \bar{d}, \E_0)$, for some decision rule notation $\bar{d}$, we have 
    $$
    \forall p \neq [k] : \quad 2 \delta(\pi,\bar{d}, \E_0) \;\geq\; \pr^\pi_{[k]}\paran{\E^c} + \pr^\pi_p(\E).
    $$
    We omit the decision rule for simpler representation. 
    By Bretagnolle--Huber’s inequality \ref{lem:Bretagnolle}, it follows that
    \begin{align*}
    \pr^\pi_{[k]}\paran{\E^c} + \pr^\pi_p(\E) \;\geq\; \tfrac{1}{2} \exp \paran{- \kl{\pr^\pi_{[k]}, \pr^\pi_p}},
    \end{align*}
    and therefore
    \begin{align} \label{eq: trade-off-delta-kl}
    4 \delta(\pi, \E_0) \;\geq\; \exp \paran{- \kl{\pr^\pi_{[k]}, \pr^\pi_p}}.
    \end{align}
    
    To compute the KL divergence between $\pr^\pi_{[k]}$ and $\pr^\pi_p$, observe that the only difference between these two instances arises when the action is $a_p = (p, \mathbf{1}_k)$, i.e., the intervention that sets all variables in $p$ to $1$. In this case, the reward distribution is $\mathcal{N}(1,1)$ in $\V_p$ but $\mathcal{N}(0,1)$ in $\V_{[k]}$. For any other action $a$, the joint distribution of $(X_1,\ldots,X_n,Y)$ is identical under both instances, so the contribution to the KL divergence vanishes. 

    In this setting, when the algorithm is playing on $\V_{[k]}$, the only informative action for the parent identification task is $a_p$, since any other action provides no information to distinguish between $\V_{[k]}$ and $\V_p$. However, in $\V_{[k]}$ this action incurs a regret of $1$. On the other hand, the only optimal action in $\V_{[k]}$ is $a_0 = ([k], \mathbf{1}_k)$, which provides no information about the true parent set. Since the set of optimal actions and the set of informative actions for parent identification are disjoint, this illustrates the fundamental trade-off between regret minimization and parent identification. 
    
    By the divergence decomposition lemma (Lemma~15.1 in \cite{bandit-book1-lattimore2020bandit}), we obtain
    $$
    \kl{\pr^\pi_{[k]}, \pr^\pi_{p}} \;=\; \mathbb{E}_{[k]}\!\big[N_{a_p}(T)] \; \kl{\mathcal{N}(0,1), \mathcal{N}(1,1)},
    $$
    where $\mathbb{E}_{[k]}$ denotes expectation with respect to $\pr^\pi_{[k]}$. Since 
    $$
    \kl{\mathcal{N}(1,1), \mathcal{N}(0,1)}=\tfrac{1}{2},
    $$ 
    we conclude that
    $$
    \forall p \neq [k] : \quad \kl{\pr^\pi_{[k]}, \pr^\pi_{p}} \;=\; \tfrac{1}{2}\,\mathbb{E}_{[k]}\!\big[N_{a_p}(T)\big].
    $$
    
    Now suppose $\delta(\pi, \E_0) \in \ocal \paran{\exp(-T^{\alpha})}$, meaning that there exist constants $C,c>0$ such that 
    $$
    \delta(\pi, \E_0) \leq C \exp(-c T^{\alpha}).
    $$ 
    By~\eqref{eq: trade-off-delta-kl}, we then have
    \begin{align*}
    C \exp(-c T^{\alpha}) 
    &\;\geq\; \exp\!\left(-\tfrac{1}{2}\,\mathbb{E}_{[k]}\!\big[N_{a_p}(T)\big]\right) \\
    &\;\Longrightarrow\; \ln(C) - c T^{\alpha} \;\geq\; -\tfrac{1}{2}\,\mathbb{E}_{[k]}\!\big[N_{a_p}(T)\big] \\
    &\;\Longrightarrow\; \mathbb{E}_{[k]}\!\big[N_{a_p}(T)\big] \;\geq\; C' T^{\alpha},
    \end{align*}
    for some constant $C'>0$. 
         
    Now, the key point is that each time the algorithm plays $a_p$ in $\V_{[k]}$, it incurs a regret of $1$, since $a_p$ is strictly suboptimal there. Therefore, the total expected regret is proportional to the number of pulls of $a_p$. Combining this with the bound above, we obtain
    $$
    R_T(\pi, \E_0) \;\geq\; R_T(\pi, \V_{[k]}) 
    \;\geq\; \mathbb{E}_{[k]}\!\big[N_{a_p}(T)\big] \times 1 
    \;\geq\; C' T^{\alpha},
    $$
    
    which completes the proof.

\end{proof}

\subsection{Proofs of Section \ref{sec: known-k}}


\mgeqkLB* 

\begin{proof}
    First note that the empty graph is a subgraph of any graph $\G$. Moreover, for any graph $\G$ over variables $X_1,\ldots,X_n$, we can specify a data-generating distribution that \emph{ignores} any given edge (i.e., the conditional distribution of a child either does not change with its parent or changes arbitrary small), effectively removing that edge. Hence, to prove the desired statement for arbitrary $\G$, it suffices to show it for the empty graph on $\{X_1,\ldots,X_n\}$; the result then extends to any $\G$ by appropriately “turning off’’ edges. We therefore assume from now on that the graph is empty, so all variables are independent.

    Let $\V$ denote the neutral instance in which the graph $\G$ is empty, the reward distribution is $\N(0,1)$ for any values of its parents (indeed, the distribution is constant so the choice of the parent set can be any $k$-subset), and
    $$
    \forall i \in [n]: \quad \Pr(X_i=1)=1,
    $$
    so that each $X_i$ equals $1$ unless it is intervened upon. Fix any policy $\pi$ interacting with $\V$ for $T$ rounds, and let $\pr_{\V}$ denote the probability measure induced by this sequential interaction.
    
    For any pair $(p,s)$ with $p \in \binom{[n]}{k}$ and $s \in [\ell]^k$, define a perturbed instance $\vps$ which is identical to $\V$ in the graph and in the distributions of the non-reward variables. In $\vps$, the reward’s parent set is $p$, and for any assignment $\mathbf{b} \in [\ell]^k$ of the parents we set
    $$
    Y \mid \mathbf{X}_p=\mathbf{b} \sim \N\!\big(\mu(\mathbf{b}),\,1\big),
    \qquad
    \text{where}\quad
    \mu(\mathbf{b}) \;=\;
    \begin{cases}
    \Delta, & \mathbf{b}=s,\\[2pt]
    0, & \text{otherwise,}
    \end{cases}
    $$
    where $\Delta$ is a positive real number to be chosen later.

    Let $\pr_{p,s}$ denote the probability measure obtained by $T$ rounds of interaction between $\pi$ and $\vps$. By setting $P = \pr_{\V}, Q = \pr_{p,s}$ in Lemma~\ref{lem: measures distance}, and letting $\tps$ denote the random variable equal to the number of times that $\mathbf{x}_p = s$ during the $T$ rounds, i.e. $\tps = \sum_{t \in [T]} \mathbbm{1}_{\left\{ \mathbf{x}^{(t)}_p = s \right\}}$, we obtain
    \begin{align*}
        \big| \mathbb{E}_{\V}[\tps] - \mathbb{E}_{p,s}[\tps] \big| \;\leq\; T \, \delta(\pr_{\V}, \pr_{p,s}),
    \end{align*}
    where we use the fact that $\tps \in [0,T]$.  
    
    Then, by Lemma~\ref{lem: pinsker}, it follows that 
    \begin{align} \label{eq: expected T difference}
        \mathbb{E}_{p,s}[\tps] 
        \;\leq\; \mathbb{E}_{\V}[\tps] \;+\; T \, \sqrt{\tfrac{1}{2} \, \kl{\pr_{\V}, \pr_{p,s}}}.
    \end{align}
    
    Next, note that the observational distribution of all variables is identical in $\V$ and $\vps$, except when $\mathbf{X}_p = s$. Therefore, by the divergence decomposition lemma (Lemma~15.1 in \cite{bandit-book1-lattimore2020bandit}), we obtain
    $$
    \kl{\pr_{\V}, \pr_{p,s}} \;=\; \mathbb{E}_{\V}[\tps] \, \kl{\N(0,1), \N(\Delta,1)}.
    $$
    Since $\kl{\N(0,1), \N(\Delta,1)} = \tfrac{\Delta^2}{2}$, inequality~\eqref{eq: expected T difference} yields
    \begin{align*} 
        \mathbb{E}_{p,s}[\tps] 
        \;\leq\; \mathbb{E}_{\V}[\tps] \;+\; \frac{T}{2} \, \sqrt{\mathbb{E}_{\V}[\tps] \Delta^2}.
    \end{align*}

    Fix any arbitrary subset $\J \subseteq \pcal_k$ consisting of pairs $(p,s)$. Let 
    $$
    \wps \;=\; \frac{1}{T} \, \mathbb{E}_{\V}[\tps]
    $$ 
    denote the fraction of time that policy $\pi$ observes $\mathbf{X}_p = s$ during the process. Then, the above inequality implies
    \begin{align*}
        \sum_{(p,s) \in \J} \mathbb{E}_{p,s}[\tps]  
        &\leq T \sum_{(p,s) \in \J} \wps 
        \;+\; \frac{T \Delta}{2} \sum_{(p,s) \in \J} \sqrt{T \wps}  \\ 
        &\leq T \sum_{(p,s) \in \J} \wps 
        \;+\; \frac{T \Delta}{2} \sqrt{T \, |\J| \sum_{(p,s) \in \J} \wps},
    \end{align*}
    where the last inequality follows from the Cauchy–Schwarz inequality.  
    
    For any pair $(p,s)$, note that 
    $$
    R_T(\pi, \V_{p,s}) \;=\; \Delta \big(T - \mathbb{E}_{p,s}[\tps]\big).
    $$ 
    Therefore,
    \begin{align*}
        \sum_{(p,s) \in \J} R_T(\pi, \V_{p,s}) 
        &= \Delta T |\J| - \Delta \sum_{(p,s) \in \J} \mathbb{E}_{p,s}[\tps] \\ 
        &\geq \Delta T |\J| - \Delta T \sum_{(p,s) \in \J} \wps 
        - \frac{\Delta^2 T}{2} \sqrt{T \, |\J| \sum_{(p,s) \in \J} \wps}.
    \end{align*}
    
    By dividing both sides by $|\J|$, we obtain
    \begin{align*}
        \exists (p,s) \in \J : \quad 
        R_T(\pi, \V_{p,s}) 
        &\geq \Delta T \left(1 - \frac{\sum_{(p,s) \in \J} \wps}{|\J|} 
        - \frac{\Delta}{2} \sqrt{T \, \frac{\sum_{(p,s) \in \J} \wps}{|\J|}} \right).
    \end{align*}
    
    Since $\forall (p,s): \wps \leq 1$, we have $\tfrac{\sum_{(p,s) \in \J} \wps}{|\J|} \leq 1$. For any $\J$, let $c_{\J}$ denote the maximum possible value of $\tfrac{\sum_{(p,s) \in \J} \wps}{|\J|}$ over all policies. By setting
    $$
    \Delta \;=\; (1 - c_{\J}) \sqrt{\frac{|\J|}{T \sum_{(p,s) \in \J} \wps}},
    $$
    we obtain
    \begin{align*}
        \exists (p,s) \in \J : \quad 
        R_T(\pi, \V_{p,s}) 
        &\geq (1 - c_{\J}) \sqrt{\frac{T |\J|}{\sum_{(p,s) \in \J} \wps}} \cdot \frac{1 - c_{\J}}{2} \\
        &\in \Omega \!\left((1 - c_{\J})^2 \sqrt{\frac{T |\J|}{\sum_{(p,s) \in \J} \wps}}\right).
    \end{align*}
    
    Since this holds for any subset $\J \subseteq \pcal_k$, we arrive at the following general lower bound, which to our knowledge is novel in the literature:
    \begin{lemma}[General Lower Bound] \label{lem: general-lb}
    For any graph $\G$ and any policy $\pi \in \Pi(\{k,\G\})$, we have
    \begin{align} \label{eq: general lb}
        R_T(\pi, \E) \;\in\; 
        \Omega \!\left( 
        \max_{\J \subseteq \pcal_k} (1 - c_{\J})^2 \sqrt{\frac{T |\J|}{\sum_{(p,s) \in \J} \wps}} \right),
    \end{align}
    where $c_{\J}$ denotes the maximum possible value of 
    $\tfrac{1}{|\J|}\sum_{(p,s)\in\J}\wps$ attainable by any policy.
    \end{lemma}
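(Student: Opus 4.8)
The plan is to prove this by a single ``needle in a haystack'' change-of-measure argument against one reference instance, from which the claimed maximum over $\J$ follows directly. First I would reduce to the empty DAG: for any $\G$, the class of instances of interest contains ones whose conditional laws make every edge of $\G$ inert (a child either ignores a given parent or reacts by an arbitrarily small amount), so it suffices to argue for independent $X_i$'s, even against a policy that knows $\G$. All instances I build will share this same $\G$ over $\X$ and differ only in $\pay$ and the law of $Y$, so the knowledge encoded in $\I=\{k,\G\}$ is useless for distinguishing them. Fix the reference instance $\V$ with $\Pr(X_i=1)=1$ for every non-intervened $X_i$ and $Y\sim\N(0,1)$ for all parent values (so the parent set is unconstrained), and for each $(p,s)\in\pcal_k$ introduce the perturbation $\vps$ that agrees with $\V$ on the graph and on the laws of $X_1,\dots,X_n$ but has reward parent set $p$ and $Y\mid\mathbf X_p=\mathbf b\sim\N(\mu(\mathbf b),1)$ with $\mu(\mathbf b)=\Delta\cdot\mathbbm 1\{\mathbf b=s\}$, for a gap $\Delta\in(0,1]$ to be tuned.

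The information-theoretic core is that for any $\pi\in\Pi(\{k,\G\})$, the trajectory laws $\pr_\V$ and $\pr_{p,s}$ differ only on rounds with $\mathbf x^{(t)}_p=s$: the non-reward coordinates are deterministic point masses under both instances (so whether $\mathbf x^{(t)}_p=s$ is $\sigma(a_t)$-measurable), and the reward marginal is the only component that can change, contributing $\kl{\N(0,1),\N(\Delta,1)}=\Delta^2/2$ on exactly those rounds. The divergence decomposition lemma (Lemma~15.1 of \cite{bandit-book1-lattimore2020bandit}) then gives $\kl{\pr_\V,\pr_{p,s}}=\tfrac{\Delta^2}{2}\mathbb E_\V[\tps]$ with $\tps=\sum_{t\in[T]}\mathbbm 1\{\mathbf x^{(t)}_p=s\}$. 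Feeding this into Lemma~\ref{lem: measures distance} applied to the bounded statistic $\tps\in[0,T]$ and into Pinsker's inequality (Lemma~\ref{lem: pinsker}) yields the transfer bound $\mathbb E_{p,s}[\tps]\le\mathbb E_\V[\tps]+\tfrac{T\Delta}{2}\sqrt{\mathbb E_\V[\tps]}$, i.e. $\mathbb E_{p,s}[\tps]\le T\wps+\tfrac{T\Delta}{2}\sqrt{T\wps}$ with $\wps=\tfrac1T\mathbb E_\V[\tps]$.

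Next, fix an arbitrary $\J\subseteq\pcal_k$, sum the transfer bound over $(p,s)\in\J$, and apply Cauchy--Schwarz via $\sum_{\J}\sqrt{T\wps}\le\sqrt{T|\J|\sum_\J\wps}$. Using the exact identity $R_T(\pi,\vps)=\Delta\bigl(T-\mathbb E_{p,s}[\tps]\bigr)$ (each round with $\mathbf x_p\neq s$ costs exactly $\Delta$ in $\vps$), summing over $\J$ and dividing by $|\J|$ produces some $(p,s)\in\J$ with $R_T(\pi,\vps)\ge\Delta T\bigl(1-\bar w-\tfrac\Delta2\sqrt{T\bar w}\bigr)$, where $\bar w=\tfrac1{|\J|}\sum_\J\wps\le c_\J$. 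Choosing $\Delta=(1-c_\J)\sqrt{|\J|/(T\sum_\J\wps)}$ makes $\tfrac\Delta2\sqrt{T\bar w}=\tfrac{1-c_\J}{2}$ while $1-\bar w\ge 1-c_\J$, so the bracket is at least $\tfrac{1-c_\J}{2}$ and $R_T(\pi,\vps)\ge\tfrac{(1-c_\J)^2}{2}\sqrt{T|\J|/\sum_\J\wps}$. Since $\J$ was arbitrary, taking the supremum over $\J\subseteq\pcal_k$ gives \eqref{eq: general lb}.

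I expect the main obstacles to be bookkeeping rather than conceptual. The chief one is the admissibility constraint $\mu_a\in[0,1]$, which forces $\Delta\le1$: the prescribed $\Delta$ can exceed $1$ when $\sum_\J\wps$ is tiny, and it is formally infinite in the degenerate case $\sum_\J\wps=0$. This is handled by the usual case split: if the prescribed $\Delta\ge1$ then $T\sum_\J\wps\lesssim|\J|$, in which case the transfer bound already forces $\mathbb E_{p,s}[\tps]$ to be small for every $(p,s)\in\J$, so taking $\Delta=1$ gives $R_T(\pi,\vps)=\Omega(T)$ on some $\vps$, which dominates the claimed rate (and, since $R_T\le T$, caps what the statement can assert in that regime). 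The only other delicate point is justifying that the per-round KL increment genuinely vanishes off $\{\mathbf x_p=s\}$ and that the decomposition applies to a sequentially chosen, random visit count $\tps$; this is precisely the content of the standard decomposition lemma, and it collapses here to a single $\N(0,1)$ versus $\N(\Delta,1)$ term weighted by $\mathbb E_\V[\tps]$ exactly because in $\V$ the non-intervened variables are degenerate.
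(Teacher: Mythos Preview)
Your proposal is correct and follows essentially the same route as the paper: reduce to the empty DAG, build the degenerate reference instance $\V$ with $\Pr(X_i=1)=1$ and $Y\sim\N(0,1)$, perturb to $\vps$ with reward mean $\Delta$ only on $\{\mathbf X_p=s\}$, combine Pinsker with Lemma~\ref{lem: measures distance} and the divergence decomposition to get the same transfer bound, sum over $\J$ with Cauchy--Schwarz, invoke $R_T(\pi,\vps)=\Delta(T-\mathbb E_{p,s}[\tps])$, and plug in the identical choice $\Delta=(1-c_\J)\sqrt{|\J|/(T\sum_\J\wps)}$. Your explicit handling of the admissibility constraint $\Delta\le 1$ is a detail the paper leaves implicit, but otherwise the arguments coincide step for step.
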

    
    \noindent
    In the following, we propose specific candidate subsets $\J$ that make the bound in Lemma~\ref{lem: general-lb} large while keeping $1 - c_{\J}$ constant. 
    
    Recall that $\wps$ is the fraction of time that policy $\pi$ observes $\mathbf{X}_p = s$ in the interaction with instance $\V$. Due to the construction of this instance, at each round $t$, the vector $\mathbf{x}^{(t)}$ can have at most $m$ (intervention size) entries different from $1$, because any non-intervened variable is always equal to $1$. Let 
    $$
    \B_m \coloneq  \Big\{ \mathbf{b} \in [\ell]^n \;\big|\; \sum_{i \in [n]} \mathbbm{1}_{\{b_i \neq 1\}} \leq m \Big\}
    $$ 
    denote the set of all possible realizations that can be observed by any algorithm interacting with $\V$.  
    
    Then for any set $\J$, we define 
    \begin{align} \label{def: m(j)}
        M(\J) \coloneq  \max_{\mathbf{b} \in \B_m} \sum_{(p,s) \in \J} \mathbbm{1}_{\{\mathbf{b}_p = s\}},
    \end{align}
    which represents the maximum number of pairs $(p,s) \in \J$ that can simultaneously be observed in a single round under any policy. By this definition, we have
    \begin{align} \label{eq: sum w J}
        \sum_{(p,s) \in \J} \wps 
        = \frac{1}{T} \sum_{t \in [T]} \sum_{(p,s) \in \J} \mathbb{E} \left[ \mathbbm{1}_{\{\mathbf{x}^{(t)}_p = s\}} \right] 
        \;\leq\; M(\J).
    \end{align}
    
    \medskip
    For any integer $i \in [k+1]$, define the set $\J_i \subseteq \pcal_k$ as
    \begin{align*}
        \J_i \coloneq  \Big\{ (p,s) \;\Big|\; p \in \tbinom{[n]}{k}, \; \sum_{j \in [k]} \mathbbm{1}_{\{s_j = 1\}} < i \Big\}.
    \end{align*}
    
    Thus $\J_i$ consists of pairs $(p,s)$ where $p$ is any $k$-subset of $[n]$ and $s$ contains fewer than $i$ entries equal to $1$. The following lemma provides the values of $|\J_i|$ and $M(\J_i)$ for any $i$. 

    \begin{lemma} \label{lem: values for J_i}
        For any $i \in [k]$, the set $\J_i$ has the following properties:
        \begin{align*}
            |\J_i| &= \binom{n}{k} \, \ell^k \, \pr \big(\text{B}(k, 1/\ell) < i\big), \\ 
            M(\J_i) &= \binom{n}{k} \, \pr \paran{\text{HG}(n,\, n - m,\, k) < i},
        \end{align*}
        where $\text{B}(k,1/\ell)$ denotes a Binomial random variable with parameters $(k,1/\ell)$, and $\text{HG}(n, n - m, k)$ denotes a Hypergeometric random variable obtained by drawing $k$ items without replacement from a population of size $n$ with $n - m$ successes.
    \end{lemma}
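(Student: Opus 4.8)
The plan is to compute $|\J_i|$ and $M(\J_i)$ by direct counting and then recognize the resulting sums as the stated tail probabilities.

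First, for $|\J_i|$: I would fix a set $p\in\binom{[n]}{k}$ (there are $\binom{n}{k}$ of them) and count the vectors $s\in[\ell]^k$ having strictly fewer than $i$ coordinates equal to $1$. Partitioning by the number $j$ of coordinates equal to $1$ gives $\binom{k}{j}(\ell-1)^{k-j}$ choices, so the count is $\sum_{j=0}^{i-1}\binom{k}{j}(\ell-1)^{k-j}$, which does not depend on $p$. Multiplying by $\binom{n}{k}$ and comparing with $\ell^k\,\pr(\text{B}(k,1/\ell)<i)=\ell^k\sum_{j=0}^{i-1}\binom{k}{j}\ell^{-j}(1-\ell^{-1})^{k-j}=\sum_{j=0}^{i-1}\binom{k}{j}(\ell-1)^{k-j}$ yields the first identity. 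This step is a routine calculation.

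Second, for $M(\J_i)$: the key observation is that, for a fixed realization $\mathbf{b}\in\B_m$, a pair $(p,s)\in\J_i$ satisfies $\mathbf{b}_p=s$ only for the single choice $s=\mathbf{b}_p$, and that pair belongs to $\J_i$ precisely when $\mathbf{b}_p$ has fewer than $i$ coordinates equal to $1$. Hence $\sum_{(p,s)\in\J_i}\mathbbm{1}_{\{\mathbf{b}_p=s\}}=\big|\{\,p\in\binom{[n]}{k}:|p\cap Z(\mathbf{b})|<i\,\}\big|$, where $Z(\mathbf{b})=\{j\in[n]:b_j=1\}$. In particular, this depends on $\mathbf{b}$ only through $z:=|Z(\mathbf{b})|$. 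Interpreting a uniformly random $k$-subset $P$ of $[n]$, the overlap $|P\cap Z(\mathbf{b})|$ is hypergeometric with parameters $(n,z,k)$, so the sum equals $\binom{n}{k}\,\pr(\text{HG}(n,z,k)<i)$.

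It then remains to maximize over $\mathbf{b}\in\B_m$, equivalently over the admissible values of $z$. Membership $\mathbf{b}\in\B_m$ means at most $m$ coordinates of $\mathbf{b}$ differ from $1$, i.e.\ $z\ge n-m$, and every $z\in\{n-m,\dots,n\}$ is attained by some $\mathbf{b}\in\B_m$ (for instance, set exactly $m$ coordinates to the value $2$). Since $\text{HG}(n,z,k)$ is stochastically non-decreasing in the number of successes $z$, the tail $\pr(\text{HG}(n,z,k)<i)$ is non-increasing in $z$, so the maximum is achieved at $z=n-m$, giving $M(\J_i)=\binom{n}{k}\,\pr(\text{HG}(n,n-m,k)<i)$. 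I expect the only part requiring genuine care is this last monotonicity argument (together with verifying that $z=n-m$ is feasible); the rest is bookkeeping.
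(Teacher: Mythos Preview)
Your proposal is correct and follows essentially the same approach as the paper. The paper also counts $|\J_i|$ by fixing $p$ and counting admissible $s$, and for $M(\J_i)$ it parameterizes by $r$, the number of non-$1$ coordinates of $\mathbf{b}$ (your $r=n-z$), obtains the same hypergeometric count $S(r)=\sum_{j=0}^{i-1}\binom{n-r}{j}\binom{r}{k-j}$, and proves $S(r)$ is increasing in $r$ via a coupling argument---exactly the stochastic monotonicity you invoke---so the maximum is at $r=m$ (equivalently $z=n-m$).
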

    \begin{proof}
        For $|\J_i|$, note that $p$ can be any $k$-subset, so there are $\binom{n}{k}$ possibilities for $p$. For each $p$, the number of valid vectors $s \in [\ell]^k$ is
        $$
        \ell^k \; \pr \big(\text{B}(k, 1/\ell) < i\big),
        $$
        where $\text{B}(k,1/\ell)$ denotes a Binomial random variable with parameters $(k,1/\ell)$. This holds because each entry of $s$ equals $1$ with probability $1/\ell$, and we require fewer than $i$ entries equal to $1$. Therefore,
        $$
        |\J_i| = \binom{n}{k} \, \ell^k \, \pr \big(\text{B}(k, 1/\ell) < i \big).
        $$
        
        We now calculate $M(\J_i)$. For any $i \in [k]$, our claim is that the maximizer of~\eqref{def: m(j)} corresponds to an action that sets exactly $m$ variables to values different from $1$.  
        
        Consider an action $a$ that sets $r$ variables to values $\neq 1$ and $m-r$ variables to $1$ (the environment sets other variables also equal to $1$). Then under this action, the number of pairs $(p,s)$ with fewer than $i$ ones in $s$ is
        $$
        \sum_{j=0}^{i-1} \binom{n-r}{j} \binom{r}{k-j}.
        $$
        
        This is because to form a pair $(p,s)$ observed under this action:
        (i) we must choose $j$ coordinates among the $n-r$ coordinates fixed to $1$ to appear as $1$ in $s$,  
        (ii) and simultaneously choose $k-j$ coordinates among the $r$ coordinates fixed to non-$1$ values to appear as non-$1$ in $s$.  
        Thus the total number of such pairs is given by the above sum.

        Finally, we show that the maximum of the above sum occurs at the maximum possible value of $r$, which is $m$. 
    
        \textbf{Claim.} For fixed integers $n,k,i$ with $0\le k\le n$ and $i\in\{1,\dots,k+1\}$, the function
        $$
        S(r) = \sum_{j=0}^{i-1} \binom{n-r}{j}\binom{r}{\,k-j\,}, \qquad r=0,1,\dots,n,
        $$
        is increasing in $r$.
        
        \textbf{Proof of claim.}
        Interpret the sum via a hypergeometric law. Consider a population of $n$ items with $r$ \emph{successes} and $(n-r)$ \emph{failures.} Draw $k$ items uniformly at random without replacement, and let $I_r$ denote the number of failures in the sample. Then
        $$
        \mathbb{P}(I_r=j) = \frac{\binom{n-r}{j}\binom{r}{\,k-j\,}}{\binom{n}{k}},\qquad j=0,1,\dots,k,
        $$
        so
        $$
        S(r) = \binom{n}{k}\,\mathbb{P}\!\left(I_r < i \right).
        $$
            
        We prove $\mathbb{P}(I_r < i)$ is increasing in $r$ by a coupling argument. Move from $r$ to $r+1$ by converting one failure in the population into a success (leaving all other items unchanged). Use the \emph{same} random $k$-subset of indices to form both samples. If the converted item is not selected, the sample composition is unchanged, hence $I_{r+1}=I_r$. If it \emph{is} selected, one failure becomes a success, hence $I_{r+1}=I_r-1$. In all cases
        $$
        I_{r+1}\;\le\;I_r \quad\text{almost surely.}
        $$
        Therefore, for every threshold $t$,
        $$
        \mathbb{P}(I_{r+1} < t)\;\ge\;\mathbb{P}(I_r < t).
        $$
        Taking $t=i$ and multiplying by $\binom{n}{k}$ gives $S(r+1)\ge S(r)$, i.e., $S(r)$ is increasing in $r$.
  
        Therefore,
        $$
        M(\J_i) = S(m) = \binom{n}{k} \, \pr \paran{\text{HG}(n,\, n-m,\, k) < i}.
        $$
    \end{proof}

    By Lemma~\ref{lem: values for J_i} and Equation~\eqref{eq: sum w J} we obtain
    $$
    \frac{|\J_i|}{\sum_{(p,s) \in \J_i} \wps} 
    \;\geq\; 
    \frac{\ell ^k \ \pr \big(\text{B}(k, 1/\ell) < i\big)} {\pr \paran{\text{HG}(n,\, n - m,\, k) < i} }.
    $$
    It can be shown that the right-hand side as a function of $i$ first decreases and, after some point, increases again; thus the maximizer 
    $$
    i^* \;=\; \text{argmax}_{i \in [k+1]} \ \frac{|\J_i|}{\sum_{(p,s) \in \J_i} \wps}
    $$
    lies in $\{1,\,k+1\}$. Hence, only these two candidates are sufficient to prove the lower bound.
    
    For $i = 1$, we have
    $$
    \frac{|\J_1|}{\sum_{(p,s) \in \J_1} \wps} 
    \;\geq\;  
    \frac{\ell ^k \ \pr \big(\text{B}(k, 1/\ell) < 1\big)} {\pr \paran{\text{HG}(n,\, n - m,\, k) < 1} }
    \;=\; 
    \frac{\ell ^k \paran{\tfrac{\ell-1}{\ell}}^k }{\binom{m}{k} / \binom{n}{k}}
    \;=\; 
    (\ell - 1)^k \, \frac{\binom{n}{k}}{\binom{m}{k}}.
    $$
    For $i = k+1$, we obtain
    $$
    \frac{|\J_{k+1}|}{\sum_{(p,s) \in \J_{k+1}} \wps} 
    \;\geq\;  
    \frac{\ell ^k \ \pr \big(\text{B}(k, 1/\ell) < k+1\big)} {\pr \paran{\text{HG}(n,\, n - m,\, k) < k+1} } 
    \;=\; 
    \frac{\ell ^k \cdot 1}{1} 
    \;=\; 
    \ell ^ k. 
    $$
    
    Hence, for $i^*$,
    $$
    \frac{|\J_{i^*}|}{\sum_{(p,s) \in \J_{i^*}} \wps} 
    \;\geq\; 
    \max \!\left\{ (\ell - 1)^k \, \frac{\binom{n}{k}}{\binom{m}{k}}, \ \ell^k \right\}.
    $$
    In particular, since $\ell \ge 2$ in our setting, $\ell^k \ge 2$, and therefore
    $$
    1 - c_{\J_{i^*}} 
    \;=\; 
    1 - \frac{\sum_{(p,s) \in \J_{i^*}}\wps}{|\J_{i^*}|} 
    \;=\; 
    1 - \frac{1}{\frac{|\J_{i^*}|}{\sum_{(p,s) \in \J_{i^*}}\wps}} 
    \;\geq\; 
    1 - \frac{1}{2} 
    \;=\; 
    \frac{1}{2}. 
    $$
    
    Substituting this result in the bound of Lemma~\ref{lem: general-lb} yields the desired regret lower bound for any policy $\pi \in \Pi(\{k, \G\})$:
    $$
    R_T \big( \pi, \E \big) 
    \;\in\; 
    \Omega \!\left( \sqrt{\,T \, \max \!\left( (\ell-1)^k \, \frac{\binom{n}{k}}{\binom{m}{k}}, \ \ell^k \right)} \right),
    $$
    which completes the proof.

\end{proof}

\mlesskLB*
\begin{proof}
    This theorem follows directly from the previous result.  
    For any $k > m$, consider instances in which $k - m$ of the reward’s parents are \emph{neutral}, that is, variables whose values do not affect the distribution of the reward (or whose effect is so small that it cannot be detected within $T$ samples).  
    In such cases, the effective number of true parents is $m$, and the instance is statistically indistinguishable from one with exactly $m$ parents.  
    Consequently, the worst-case regret of any algorithm on instances with $k$ reward parents must be at least as large as the regret for instances with $m$ parents.  
    Therefore, the lower bound stated in the theorem also holds for all $k > m$.
\end{proof}

\knownkUB*

\begin{proof}
    We first recall the standard UCB regret bound, which will be used in the analysis of both algorithms.  

    For any horizon $T$ and any bandit environment $\V$ with $N$ arms, UCB satisfies
    \begin{align} \label{eq: ucb bound}
        R_T(\pi_{\mathrm{UCB}}, \V) \;\leq\; \cucb \sqrt{TN\ln(T)},
    \end{align}
    where $\cucb>0$ is a universal constant. While this bound was originally proved for independent arms, it also holds when dependencies exist among arms. For the proof, see \cite{bandit-book1-lattimore2020bandit, bandit-book2-bubeck2012regret}.  

    To analyze Algorithm~1, define the event
    $$
    \E_0 = \{ \mu^*_{\A'} < \mu_{a^*} \},
    $$
    where $\A'$ is the subset of arms sampled in the algorithm, $\mu^*_{\A'}$ is the maximum mean reward among arms in $\A'$, and $a^*$ is the globally optimal arm.  


    By the definition of $\alpha_k$ as the fraction of optimal arms in $\A_m$, and the calculation of this value in \eqref{eq: alpha_k value}, the algorithm samples exactly $\frac{1}{\alpha_k}\ln \sqrt{T}$ random arms (or all arms in $\A_m$ if this number exceeds $|\A_m|$). By Lemma~\ref{lem: non-optimal prob}, the probability that none of them are optimal satisfies
    $$
    \pr(\E_0) \;\leq\; \frac{1}{\sqrt{T}}.
    $$

    Putting these together, for any instance $\V \in \E$, we can bound the regret as
    \begin{align*}
        R_T(\texttt{Alg.1}[k], \V) 
        &= \pr(\E_0) \ R_T(\texttt{Alg.1}[k], \V) \mid \E_0   + \pr(\E_0^c) \ R_T(\texttt{Alg.1}[k], \V) \mid \E_0^c   \\
        &\leq T \cdot \pr(\E_0) + R_T(\texttt{Alg.1}[k], \V) \mid \E_0^c \\
        &\leq \sqrt{T} + \cucb \sqrt{T \, |\A'| \ln(T)},
    \end{align*}
    where with a slight abuse of notation, we use $R_T(\texttt{Alg.1}[k], \V) \mid \E_0$ to denote the expected regret under the event $\E_0$.

    Finally, substituting the size of $\A'$, i.e.,  $n_0$ in Algorithm \ref{algo: known-k}, yields
    \begin{align*}
        R_T(\texttt{Alg.1}[k], \V) \;\leq\;
        \begin{cases}
            2 \cucb \sqrt{\tfrac12 \, T \, \ell^k \tfrac{\binom{n}{k}}{\binom{m}{k}} \, \ln^2(T)} 
            \;\in\; \tilde{\ocal}\!\left(\sqrt{T \, \ell^k \tfrac{\binom{n}{k}}{\binom{m}{k}}}\right), \quad & m \geq k, \\[1.2ex]
            2\cucb \sqrt{T \, \ell^m \binom{n}{m} \ln(T)} 
            \;\in\; \tilde{\ocal}\!\left(\sqrt{T \, \ell^m \binom{n}{m}}\right), & m < k,
        \end{cases}
    \end{align*}
    which matches the theorem statement and completes the proof.
\end{proof}

\subsection{Proofs of Section \ref{sec: unknown-k}}

\unknownkLB*

\begin{proof}
    Analogous to the proof of the lower-bound in the previous section, since the empty graph is a subgraph of any $\G$, it suffices to establish the result when $\G$ is empty; the general case then follows immediately.

    Fix values $k_1, k_2$ such that $k_1 < k_2 \leq m \leq n$.  
    Consider the instance $\V \in \E(n, \ell, k_1)$ defined as follows.  
    The graph $\G$ over the non-reward variables is the empty graph, and the parent set of the reward is $\pa{Y} = \{X_1, X_2, \ldots, X_{k_1}\}$.  
    Since $\G$ is empty, the variables $X_i$ are mutually independent.  
    Moreover, each variable $X_i$ satisfies $\pr(X_i = 1) = 1$, meaning that every variable equals $1$ unless it is intervened upon.  
    
    For any vector $\mathbf{b} \in [\ell]^{k_1}$ representing the values of the reward’s parents, the reward distribution is
    $$
    Y \mid \pa{Y} = \mathbf{b} \;\sim\; \N(\mu_{\mathbf{b}}, 1),
    $$
    where
    $$
    \mu_{\mathbf{b}} = 
    \begin{cases}
        \Delta, & \text{if } \mathbf{b} = (1,1,\ldots,1), \\[0.3em]
        0, & \text{otherwise},
    \end{cases}
    $$
    and $\Delta > 0$ is a constant that will be specified later.  
    
    \vspace{0.5em}
    \noindent
    To define the family of alternative instances, we define $\M$ to be  the set of all pairs $(p,s)$ satisfying:
    \begin{enumerate}[label=(\roman*)]
        \item $p \in \binom{[n]}{k_2}$,
        \item $[k_1] \subset p$,
        \item $s \in [\ell]^{k_2}$,
        \item $\sum_{i=1}^{k_1} \mathbbm{1}_{s_i = 1} < k_1$ (i.e., at least one of the first $k_1$ entries of $s$ is not $1$),
    \end{enumerate}
    Here, $p$ represents a set of $k_2$ indices containing the first $k_1$ parent indices, and $s$ assigns values to those indices such that at least one of the first $k_1$ entries is not $1$.
    
    
    \vspace{0.5em}
    \noindent
    For each $(p,s) \in \M$, define an alternative instance $\vps \in \E(n, \ell, k_2)$ as follows.  
    The graph and the distributions of all non-reward variables are identical to those in $\V$.  
    For any $\mathbf{b} \in [\ell]^{k_2}$ representing the values of the parents of $Y$, define
    $$
    Y \mid \pa{Y} = \mathbf{b} \;\sim\; \N(\mu_{\mathbf{b}}, 1),
    $$
    where
    $$
    \mu_{\mathbf{b}} = 
    \begin{cases}
        \Delta, & \text{if } b_1 = b_2 = \ldots = b_{k_1} = 1, \\[0.3em]
        2\Delta, & \text{if } \mathbf{b} = s, \\[0.3em]
        0, & \text{otherwise}.
    \end{cases}
    $$
    
    \vspace{0.5em}
    \noindent
    Note that $\vps$ differs from $\V$ only in the set of parents and only in the reward distribution corresponding to the  interventions that set $\mathbf{X}_p = s$.  
    Formally, for all other interventions, the joint distribution of all variables $(X_1, \ldots, X_n, Y)$ are identical, while for actions setting $\mathbf{X}_p = s$, they differ only in the reward distribution.
    
    Fix a policy $\pi$, and let $\pr_{\V}$ and $\pr_{p,s}$ denote the probability measures induced by $T$ rounds of interaction between $\pi$ and $\V$, $\vps$, respectively.  
    Define $\tps$ as the random variable counting the number of rounds $t$ during which $\mathbf{x}^{(t)}_p = s$, and let $\wps = \mathbb{E}_{\V}[\tps] / T$ denote its expectation fraction under $\pr_{\V}$.
    
    By the divergence decomposition lemma (Lemma~15.1 in \cite{bandit-book1-lattimore2020bandit}), we obtain
    $$
    \kl{\pr_{\V}, \pr_{p,s}} 
    \;=\; \mathbb{E}_{\V}[\tps] \, \kl{\N(0,1), \N(2\Delta,1)} 
    \;=\; 2 \, T \wps {\Delta^2},
    $$
    where we used $\kl{\N(0,1), \N(2\Delta,1)} = {2\Delta^2}$ in the last equality.

    Now, we define the event $E$ as 
    $$
    E = \left\{ \sum_{t \in [T]} \mathbbm{1}_{\{ \mathbf{x}^{(t)}_{[k_1]} = (1,1,\ldots,1) \}}  \geq \frac{T}{2} \right\}.
    $$
    which captures the case where, in at least half of the rounds, the first $k_1$ variables are all equal to $1$. Then
    \begin{align*}
        &R_T(\pi, \vps) | E \geq \frac{T \Delta}{2} \qquad \forall (p,s) \in \M, \\
        &R_T(\pi, \V) | E^c \geq \frac{T \Delta}{2},
    \end{align*}
    where with a slight abuse of notation, $R_T(\pi, \V) | E$ represents the expected regret given the event $E$. This implies that 
    \begin{align*}
        R_T(\pi, \vps) + R_T(\pi, \V) \geq \frac{T \Delta}{2} \paran{ \pr_{p,s} (E) + \pr_{\V} (E^c)}.
    \end{align*}
    Then, by Bretagnolle-Huber inequality \ref{lem:Bretagnolle}, we obtain
    $$
     \pr_{p,s} (E) + \pr_{\V} (E^c) \geq \frac{1}{2} \exp \paran{\kl{\pr_{\V}, \pr_{p,s}} } = \frac{1}{2} \exp \paran{-2 \, T \wps {\Delta^2}}.
    $$
 
    Now since $\E(n, \ell, k_1) \subseteq \E(n, \ell, k_2)$, we have for any policy $\pi$, $R_T(\pi, \E(n, \ell, k_1)) \leq R_T(\pi, \E(n, \ell, k_2))$, then
    \begin{align*}
        2 R_T(\pi, \E(n, \ell, k_2)) &\geq R_T(\pi, \E(n, \ell, k_1)) + R_T(\pi, \E(n, \ell, k_2)) \\
        &\geq R_T(\pi, \V) + R_T(\pi, \vps) \geq \frac{T \Delta}{4} \exp \paran{- 2 \, T \wps {\Delta^2}} \\
        & \Longrightarrow \ln \paran{8 R_T(\pi, \E(n, \ell, k_2))} - \ln \paran{T \Delta} \geq - 2 \, T \wps {\Delta^2} \\
        & \Longrightarrow 2 \, T \wps {\Delta^2} \geq \ln \paran{\frac{T \Delta}{8 R_T(\pi, \E(n, \ell, k_2))}}. 
    \end{align*}
    Then letting $\Delta = \frac{16R_T(\pi, \E(n, \ell, k_2))}{T} $, we obtain
    \begin{align} \label{eq: w_ps lower bound}
        &\frac{2 \wps R_T(\pi, \E(n, \ell, k_2))^2 }{T} \geq \ln(2), \nonumber \\
        &\Longrightarrow \wps \geq \ln(\sqrt{2}) \frac{T}{R_T(\pi, \E(n, \ell, k_2))^2}.
    \end{align}
    Thus we have derived a lower bound on $\wps$, the expected fraction of rounds (under $\V$) in which $\pi$ observes the low-reward configuration $\mathbf{x}_p = s$. 
    Note that without loss of generality, we can assume that $\Delta\leq1$. This is because when $\delta>1$, then the regret of $\pi$ exceeds $T/16$, which clearly satisfies the proposed lower bound in this theorem. 

    Now, to bound the regret using this lower bound, let $Q = \{2,3,\ldots, \ell\}^{k_1}$. For any $\qb \in Q$, let $w_{\qb}$ denote the expected fraction of rounds $t$ in which $\mathbf{x}^{(t)}_{[k_1]} = \qb$ during the interaction between $\pi$ and $\V$. In this case, the regret can be bounded as
    \begin{align} \label{eq: regret w_q bound}
        R_T(\pi, \V) \;\geq\; \Delta \sum_{\qb \in Q} T \, w_{\qb}. 
    \end{align}
    To bound $w_{\qb}$, for each $\qb \in Q$ define $\M_{\qb}$ as the set of pairs $(p,s) \in \M$ with $s_{[k_1]} = \qb$. Then
    $$
        |\M_{\qb}| \;=\; \binom{n - k_1}{\,k_2 - k_1\,} (\ell - 1)^{\,k_2 - k_1},
    $$
    because the indices $\{1,2,\ldots,k_1\}$ must be contained in $p$ and their values are fixed to $\qb$, leaving $\binom{n - k_1}{k_2 - k_1}$ choices for the remaining indices and $(\ell - 1)^{k_2-k_1}$ choices for their non-one values.  
    
    On the other hand, for any $\bb \in [\ell]^n$, the maximum number of pairs $(p,s) \in \M_{\qb}$ such that $\bb_p = s$ is $\binom{m - k_1}{k_2 - k_1}$. Indeed, under $\V$, any non-intervened variable equals $1$, while for $(p,s)\in \M$ all entries of $s$ beyond the first $k_1$ are non-one. Thus, the only way to realize $\bb_p = s$ is to (i) intervene to set the first $k_1$ variables to $\qb$ (none of which is $1$) and (ii) set $m-k_1$ additional variables to non-one values, yielding at most $\binom{m - k_1}{k_2 - k_1}$ distinct matches.
    
    Combining this counting argument with the lower bound in \eqref{eq: w_ps lower bound} gives, for each $\qb \in Q$,
    \begin{align} \label{eq: w_q lower bound}
        w_{\qb} 
        \;\geq\; 
        \frac{\sum_{(p,s) \in \M_{\qb}} w_{p,s}}{\max_{\bb \in [\ell]^n} \sum_{(p,s) \in \M_{\qb}} \mathbbm{1}_{\{ \bb_p = s \}}}
        \;\geq\;
        \frac{\tfrac{\ln(\sqrt2)\, T \, |\M_{\qb}|}{R_T(\pi, \E(n, \ell, k_2))^2}}{\binom{m - k_1}{k_2 - k_1}}
        \;\geq\;
        c \, \frac{T \, (\ell - 1)^{\,k_2 - k_1} \binom{n - k_1}{k_2 - k_1}}{R_T(\pi, \E(n, \ell, k_2))^2 \binom{m - k_1}{k_2 - k_1}},
    \end{align}
    for a universal constant $c>0$. Plugging \eqref{eq: w_q lower bound} into \eqref{eq: regret w_q bound} yields
    \begin{align} \label{eq: unknown first lower bound}
        R_T(\pi, \E(n, \ell, k_1)) 
        &\geq R_T(\pi, \V) 
        \;=\; T \Delta \sum_{\qb \in Q} w_{\qb} \nonumber \\
        &\geq 16c \, |Q| \, T \, \frac{R_T(\pi, \E(n, \ell, k_2))}{T}
        \, \frac{T \, (\ell - 1)^{\,k_2 - k_1} \binom{n - k_1}{k_2 - k_1}}{R_T(\pi, \E(n, \ell, k_2))^2 \binom{m - k_1}{k_2 - k_1}} \nonumber \\
        &\geq 16c \, \frac{T \, (\ell - 1)^{\,k_2} \binom{n - k_1}{k_2 - k_1}}{R_T(\pi, \E(n, \ell, k_2)) \binom{m - k_1}{k_2 - k_1}} \nonumber \\ 
        \Longrightarrow R_T(\pi, \E(n, \ell, k_1)) & R_T(\pi, \E(n, \ell, k_2)) \;\in\; \Omega\!\left( T \, (\ell - 1)^{\,k_2} \, \frac{\binom{n - k_1}{k_2 - k_1}}{\binom{m - k_1}{k_2 - k_1}} \right) 
    \end{align}
    In the above, we use the fact that $|Q|=(\ell-1)^{k_1}$. 
    This proves the first term inside the $\max$ in the theorem.

    We now show that the product is also lower bounded by the second term, namely $T \, \ell^{k_2}$.  
    For this, let 
    \[
    R = \{(p_0, s) \in \M \mid p_0 = [k_2] \} \subseteq \M.
    \]
    Then, each instance $\vpzs$ in $R$ corresponds to the setting where the reward’s parents are the first $k_2$ nodes, and the reward mean is $2\Delta$ whenever their values are equal to $s$.  
    Since $R \subseteq \M$, for each $(p_0, s) \in R$, the lower bound in \eqref{eq: w_ps lower bound} also holds:
    \[
    w_{p_0, s} \;\geq\; c \, \frac{T}{R_T(\pi, \E(n, \ell, k_2))^2}.
    \]
    
    Now, because each $\vpzs$ differs from $\V$ only when $\mathbf{x}_{[k_2]} = s$, and there are $|R| = (\ell^{k_1} - 1)\ell^{k_2 - k_1}$ such distinct configurations $s$ with $(p_0, s) \in \M$, the total expected fraction of rounds in which $\pi$ observes one of these configurations when interacting with $\V$ is at least
    \[
    \sum_{(p_0, s) \in R} w_{p_0, s} 
    \;\geq\; c \, |R| \, \frac{T}{R_T(\pi, \E(n, \ell, k_2))^2} \geq {c} \; \frac{\ell^{k_2}}{2} \; \frac{T}{R_T(\pi, \E(n, \ell, k_2))^2}
    \]
    Moreover, by construction of $\V$, every such configuration $\mathbf{x}_{[k_2]} = s$ with at least one non-one value in the first $k_1$ entries corresponds to a suboptimal mean reward. Thus, the regret on $\V$ can be written as
    \begin{align*}
        R_T(\pi, \V)
        &= \Delta \, T \sum_{(p_0, s) \in R} w_{p_0, s}
        \;\geq\;
        \frac{c}{2} \, \Delta \, T^2 \, \frac{\ell^{k_2}}{R_T(\pi, \E(n, \ell, k_2))^2}.
    \end{align*}
    
    Finally, substituting $\Delta = \frac{16R_T(\pi, \E(n, \ell, k_2))}{T}$ as before yields
    \begin{align*}
        R_T(\pi, \E(n, \ell, k_1)) \;\geq\; R_T(\pi, \V)
        \;\geq\; c' \, \frac{T \, \ell^{k_2}}{R_T(\pi, \E(n, \ell, k_2))},
    \end{align*}
    which implies
    \[
    R_T(\pi, \E(n, \ell, k_1)) \, R_T(\pi, \E(n, \ell, k_2))
    \;\in\;
    \Omega \big( T \, \ell^{k_2} \big).
    \]
    This establishes the second term in the $\max$ expression of the theorem and completes the proof.

\end{proof}

\unknownkUB*

\begin{proof}
    We first introduce a few notations. Recall the definition of $\alpha_k$ as the fraction of optimal arms in $\A_m$ when $|\pa{Y}| = k$, whose value is given in \eqref{eq: alpha_k value}. We define
    $$
    N_k \;=\; \frac{1}{\alpha_k} \ln \sqrt{T}.
    $$
    Fix a value of $k$ with $k \leq m$ and an arbitrary instance $\V \in \E(n, \ell, k)$.  
    Let $R_i$ denote the regret of Algorithm~\ref{algo: unknown-k} on $\V$ during phase $i$, $T_i = \sum_{j=1}^i \Delta T_j$, and let $r = \left\lceil \log_2 \sqrt T \right \rceil$.  

    Note that, since we introduce the mixture arms as new arms after phase one, the assumptions that (i) the reward distributions under different actions are independent and (ii) each is $1$-sub-Gaussian, no longer hold. Indeed, the reward of a mixture arm is a mixture of several sub-Gaussian variables corresponding to previously played arms, and these mixture components may overlap across different arms, introducing correlations. 

    To address this, we make two observations. 
    First, the regret upper bound of the UCB algorithm remains valid even when the arms are correlated. 
    Second, by Lemma~\ref{lem: sub-gaussian}, the reward distribution of any mixture arm is itself sub-Gaussian with parameter $\sigma^2 = \tfrac{5}{4}$. 
    This increased sub-Gaussian parameter scales the regret bound of UCB by a factor of $\tfrac{5}{4}$, which we absorb into the UCB constant $C_{\mathrm{UCB}}$ introduced in~\eqref{eq: ucb bound}.

    \noindent
    We first show that Algorithm~\ref{algo: unknown-k} is well-defined, i.e., it executes all $T$ rounds.  
    Since it runs $\Delta T_i$ rounds in phase $i$, the total number of rounds is
    \begin{align*}
        \sum_{i \in [i_f]} \Delta T_i 
        &= 2^{r} \ceillnm \sum_{i \in [i_f]} 2^i \\[0.3em]
        &= 2^{r} \ceillnm (2^{i_f + 1} - 2) \;\geq\; 2^{2i_f} \Big\lceil \frac{\ell n}{m} \Big\rceil 
        \;\geq\; T \frac{m}{\ell n} \Big\lceil \frac{\ell n}{m} \Big\rceil 
        \;\geq\; T,
    \end{align*}
    where we used $r \ge i_f$ and $i_f = \Big\lceil \log_2 \sqrt{T\frac{m}{\ell n}} \Big\rceil$.  
    
    \noindent
    For each phase $i$, let $\F_{i-1}$ denote the information available at the start of phase $i$ (i.e., all previous observations and the random arms selected in $S_i$).  
    We decompose the regret in phase $i$ into two components:
    \begin{align*}
        R_i = \Delta T_i \mu^* -\!\! \sum_{t = T_{i-1} + 1}^{T_i} \mathbb{E}[\mu_{a_t}] &= \Delta T_i (\mu^* - \mu^*_i) 
            \;+\; \Big(\Delta T_i \mu^*_i - \sum_{t = T_{i-1} + 1}^{T_i} \mathbb{E}[\mu_{a_t}] \Big)\\
            &= \rione + \ritwo,
    \end{align*}
    where $\mu^*_i = \max_{a \in S_i \cup M} \mu_a$ is the best mean reward among arms considered in phase $i$.  
    
    \vspace{0.5em}
    \noindent
    \textbf{Bounding $\ritwo$.}   
    Since each phase runs a standard UCB subroutine, the learning term $\ritwo$ is bounded by the UCB regret upper bound \eqref{eq: ucb bound}:
    \begin{align} \label{eq: r_i2 upper bound}
        \mathbb{E} \left[ \ritwo \right]
        &\leq \cucb \sqrt{\delti |S_i \cup M| \ln(\delti)} \nonumber\\
        &\leq \cucb \sqrt{2^{r + i} \ceillnm (2^{r - i + 1} + i -1) \ln(T)} \nonumber\\
        &\leq \cucb \sqrt{2^{2 r + 1}  \ceillnm \ln(T) \log_2(T)} \nonumber\\
        &\leq \cucb \sqrt{16 \, T \ceillnm \ln^2(T)} 
        \;\leq\; 4\sqrt{2}\,\cucb \sqrt{T \, \paran{\frac{\ell n }{m}} \, \ln^2(T)} 
        \;\in\; \tilde{\ocal} \paran{\sqrt{T \paran{\frac{\ell n }{m}}}},
    \end{align}
    where in the second inequality, we use the fact that $|S_i\cup M|\leq 2^{r-i+1}+(i-1)$ and in the third inequality, we used
    \begin{align*}
        \forall i \in [i_f]: \quad 
        2^{r + i} (2^{r - i + 1} + i -1)
        &< 2^{2 r + 1} + 2^{2 r}i_f 
        \;\leq\; 2^{2 r + 1} + 2^{2 r}\log_2(T) \\
        &\leq 2^{2 r + 1} \log_2(T),
    \end{align*}
    using $i_f = \Big\lceil \log_2 \sqrt{T\frac{m}{\ell n}} \Big\rceil \leq \log_2(T)$, which holds for $T > 3$.  
    The next inequality follows from
    \begin{align*}
        2^{2 r} &\leq 2^{2 (\log_2 \sqrt{T} + 1)} = 4T, \\
        \log_2(T) &\leq 2 \ln(T),
    \end{align*}
    and finally, we use $\lceil x \rceil \leq 2x$ for $x \geq 1$.  

    \vspace{0.5em}
    \noindent
    \textbf{Bounding $\rione$.}  
    To bound the term $\rione$, let
    $$
        i^* = \max \big\{ i \in [i_f] \;\big|\; q_i \geq N_k \big\}.
    $$
    If no such $i$ exists, we will have
    $$
        N_k = \frac{1}{\alpha_k} \ln \sqrt{T} = \frac{\ell^k \binom{n}{k}}{\binom{m}{k}} \ln \sqrt{T} > q_1 = 2^{r} \geq \sqrt{T},
    $$
    which implies
    $$
        \frac{\ell^{k} \binom{n}{k}}{\binom{m}{k}} \sqrt{T \frac{m}{\ell n}}
        > T \, \frac{\sqrt{\frac{m}{\ell n}}}{\ln \sqrt{T}}.
    $$
    In the above inequality, the left-hand side corresponds to the target regret upper bound, while the right-hand side grows linearly in $T$, ignoring the logarithmic factor. In this case, the bound trivially holds. Thus, for the remainder, we may assume that $i^*$ exists.
    
    For each $i$, define the event
    $$
        \E_i = \{ \text{no optimal arm is contained in } S_i \}.
    $$
    By Lemma~\ref{lem: non-optimal prob}, for any $i \leq i^*$, since $q_i \geq N_k$ (recall that $q_i$ is decreasing in $i$), we have $\pr(\E_i) \leq \tfrac{1}{\sqrt{T}}$.  
    Under the complement event $\E_i^c$, at least one optimal arm is included in $S_i$, implying that $\mu^*_i = \mu^*$ and therefore $\rione = 0$.  
    
    Consequently, the expected contribution of $\rione$ for phases $i \leq i^*$ can be bounded as
    \begin{align} \label{eq: rione bound small i}
        \mathbb{E} \left[ \rione \right] = \Delta T_i \, \mathbb{E}[\mu^* - \mu^*_i] \leq \Delta T_i \, (\mu^* - \mu_{\min}) \, \pr(\E_i) \leq \Delta T_i \, \pr(\E_i) 
        \;\leq\; \frac{\Delta T_i}{\sqrt{T}} 
        \;\leq\; \sqrt{T},
    \end{align}
    where $\mu_{\min}$ denotes the smallest mean reward among all arms, and the second inequality follows from the boundedness of rewards in $[0,1]$.
    
    For each $i > i^*$, note that the mixture arm $\tilde{a}_{i^*}$, constructed at the end of phase $i^*$, is included in the arm set for phase $i$. Hence,
    $$
        \mathbb{E}\!\left[ \rione \right]
        = \Delta T_i \, \mathbb{E}[\mu^* - \mu^*_i]
        \;\leq\; \Delta T_i \, \mathbb{E}[\mu^* - \mu_{\tilde{a}_{i^*}}]
        = \Delta T_i \, (\mu^* - \mathbb{E}[\mu_{\tilde{a}_{i^*}}])
        = \Delta T_i \, \frac{\mathbb{E}[R^{(2)}_{i^*}]}{\Delta T_{i^*}},
    $$
    where the last equality holds because the regret of the mixture arm equals the average regret of the actions played during phase $i^*$.
    
    By substituting the bound from~\eqref{eq: r_i2 upper bound} and using the fact that $\Delta T_i\leq T$, we obtain
    \begin{align*}
        \mathbb{E}\!\left[ \rione \right]
        &\leq \Delta T_i \, \frac{4 \, \cucb \sqrt{T \ceillnm \ln^2(T)}}{\ceillnm 2^{r+i^*}}
        \;\leq\; 4 \, T \, \cucb \sqrt{ \frac{T \ln^2(T)}{\ceillnm 2^{2r + 2i^*}}}
        \;\leq\; 4 \, T \, \cucb \sqrt{ \frac{\ln^2(T)}{\ceillnm 2^{2i^*}} },
    \end{align*}
    where the last inequality uses $r \geq \log_2(\sqrt{T})$, implying $2^{2r} \geq T$.
    
    From the definition of $i^*$, we have
    $$
       N_k > q_{i^*+1}=2^{r - i^*}\quad \& \quad q_{i^*} = 2^{r - i^* + 1} \geq N_k 
        \quad\Longrightarrow\quad 
        2^{i^*} N_k > 2^r \geq \sqrt{T},
    $$
    which gives
    $$
        2^{2i^*} > \frac{T}{N_k^2} = \frac{T \alpha_k^2}{\ln(T)}.
    $$
    Substituting this into the previous bound yields
    \begin{align} \label{eq: rione bound large i}
        \mathbb{E}\!\left[ \rione \right]
        \leq 4 \, T \, \cucb \sqrt{\frac{\ln^3(T)}{\ceillnm T \alpha_k^2}}
        = 4 \, \cucb \, \frac{1}{\alpha_k} \sqrt{\frac{T \ln^3(T)}{\ceillnm}}
        \in \tilde{\mathcal{O}}\!\left(
            \sqrt{T \, \frac{m}{n}} \;
            \ell^{k - \frac{1}{2}} \,
            \frac{\binom{n}{k}}{\binom{m}{k}}
        \right).
    \end{align}
    
    Finally, combining~\eqref{eq: r_i2 upper bound}, \eqref{eq: rione bound small i}, and~\eqref{eq: rione bound large i}, we obtain
    $$
        \forall i \in [i_f]:
        \quad
        \mathbb{E}[R_i]
        \in \tilde{\mathcal{O}}\!\left(
            \sqrt{T \, \frac{m}{n}} \;
            \ell^{k - \frac{1}{2}} \,
            \frac{\binom{n}{k}}{\binom{m}{k}}
        \right).
    $$
    Since the number of phases $i_f$ is logarithmic in problem parameters, the same upper bound holds for the total regret (up to logarithmic factors hidden in the $\tilde{\mathcal{O}}$ notation).
    
    This completes the proof for the case $k \leq m$.  
    For $k > m$, all the proof steps hold by setting $k = m$, resulting in the regret bound presented in the theorem. 

    
\end{proof}

\pareto*

\begin{proof}
    First, consider the case $m = n$.  
    In this setting, ignoring logarithmic factors, the regret vector of Algorithm~\ref{algo: unknown-k} is
    \[
        \Rb = \left[ \sqrt{T} \, \ell^{\frac{1}{2}}, \; \sqrt{T} \, \ell^{\frac{3}{2}}, \; \ldots, \; \sqrt{T} \, \ell^{m - \frac{1}{2}} \right].
    \]
    For the sake of contradiction, assume there exists a policy $\pi$ with regret vector $\Rb'$ that rate-Pareto dominates $\Rb$.  
    
    Note that for $k = 1$, the lower bound in Theorem~\ref{thm: m-geq-k-lower-bound} establishes that the optimal rate is $\Theta(\sqrt{T \ell})$, which coincides with the first entry of $\Rb$.  
    Hence, $\Rb'$ cannot achieve a strictly better rate for $k = 1$, implying that $R'_1 = \Theta(R_1)$.  
    This means there exists a $k' > 1$ and a constant $C$ such that
    \[
        R'_{k'} \leq C R_{k'} = C \sqrt{T} \, \ell^{k' - \frac{1}{2}},
    \]
    and the inequality does not hold in the reverse direction (i.e., $\Rb'$ improves over $\Rb$ at $k'$).  
    However, by Theorem~\ref{thm: unknown-k-lower-bound}, there exists a constant $C'$ such that
    \[
        R'_1 \, R'_{k'} \;\geq\; C' T \, \ell^{k'} 
        \quad\Longrightarrow\quad
        R'_{k'} \;\geq\; C'' \sqrt{T} \, \ell^{k' - \frac{1}{2}},
    \]
    for some universal constant $C''$.  
    This shows that $R_{k'}$ and $R'_{k'}$ have the same rate, which contradicts the assumption that $\Rb'$ rate-Pareto dominates $\Rb$.  
    Therefore, Algorithm~\ref{algo: unknown-k} is rate-Pareto optimal in this setting.
    
    \vspace{0.5em}
    \noindent
    Now consider the general case.  
    If $\ell \in \Omega(m)$, then $\frac{\ell^m}{(\ell - 1)^m} \in \mathcal{O}(1)$.  
    Ignoring logarithmic and constant factors, the regret vector $\Rb$ from the theorem satisfies:
    \[
        R_1 = \sqrt{T \frac{(\ell - 1)n}{m}}, 
        \qquad 
        R_k = \sqrt{T \frac{m}{n}} \, \frac{m}{n} (\ell - 1)^{k - \frac{1}{2}} \frac{\binom{n}{k}}{\binom{m}{k}}, \quad \text{for } k > 1.
    \]
    
    Suppose again that there exists a policy $\pi$ with regret vector $\Rb'$ that rate-Pareto dominates $\Rb$.  
    Since Algorithm~\ref{algo: unknown-k} is optimal for $k = 1$, we have $R'_1 \in \mathcal{O}(R_1)$.  
    Therefore, there must exist a $k' > 1$ and a constant $C$ such that $R'_{k'} \leq C R_{k'}$.  
    By Theorem~\ref{thm: unknown-k-lower-bound}, there exists a constant $C'$ satisfying
    \[
        R'_1 R'_{k'} \;\geq\; C' T (\ell - 1)^{k'} \frac{\binom{n - 1}{k' - 1}}{\binom{m - 1}{k' - 1}}
        \quad\Longrightarrow\quad
        R'_{k'} \;\geq\; C'' \sqrt{T \frac{m}{n}} \, \frac{m}{n} (\ell - 1)^{k' - \frac{1}{2}} \frac{\binom{n}{k'}}{\binom{m}{k'}},
    \]
    where $C''$ is a universal constant and we used 
    \[
        \frac{\binom{n - 1}{k' - 1}}{\binom{m - 1}{k' - 1}} 
        = \frac{\binom{n}{k'}}{\binom{m}{k'}} \frac{m}{n}.
    \]
    Since this lower bound matches the rate of $R_{k'}$, we again reach a contradiction, proving that regret vector $\Rb$ is rate-Pareto optimal.
    
\end{proof}

\section{ADDITIONAL EXPERIMENTS} \label{sec: additional experiments}

This section presents further details on the experiments in the main text and more experimental results. 

This section provides additional details about the experimental setup described in the main text and reports further empirical results.

\begin{figure}[t]
    \centering
    \includegraphics[width=0.7\linewidth]{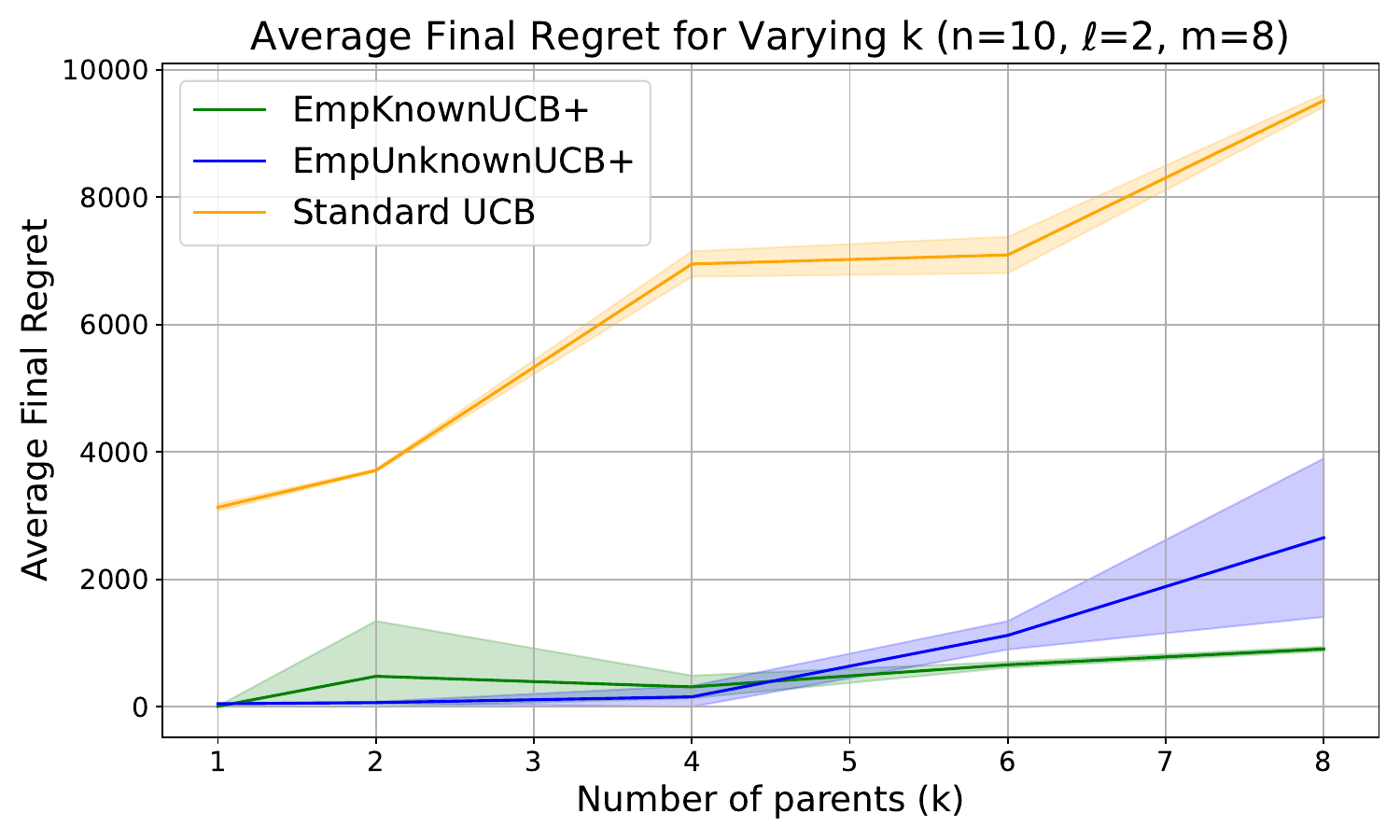}
    \caption{Average cumulative regret of algorithms at time $T$ for varying numbers of reward parents $k \in {1,2,4,6,8}$ on random instances with $n = 10$, $\ell = 2$, $m = 8$, and $T = 30{,}000$. As $k$ increases, the environment becomes more complex, leading to higher regret. Our proposed algorithms consistently achieve lower regret across all settings.}
    \label{fig: vary k}
\end{figure}

\textbf{Instance Generation.}  
For each instance $\V \in \E(n, \ell, k)$ used in the experiments, the causal graph $\G$ is generated as an Erdős–Rényi random graph with edge probability $p = \tfrac{2}{n}$.  
The reward’s parent set is selected uniformly at random from all subsets of size $k$.  

For every variable $X \in \X$ with parent set $Pa_{\G}(X)$, and any $\mathbf{z} \in [\ell]^{| Pa_{\G}(X) |}$, the conditional distribution $\pr(X \mid Pa_{\G}(X) = \mathbf{z})$ is modeled as a categorical distribution with probability vector $\mathbf{v}_{X}(\mathbf{z}) \in \Delta^{\ell - 1}$, constructed as follows.  
First, we sample a base vector $\mathbf{v}_{X}$ from a $\text{Dirichlet}(1,1,\ldots,1)$ distribution, which defines a random categorical distribution shared across all parent configurations $\mathbf{z}$.  
Then, for each $\mathbf{z}$, we draw another random vector $\mathbf{u}$ from the same Dirichlet distribution and define
\[
    \mathbf{v}_{X}(\mathbf{z}) = (1 - \beta)\,\mathbf{v}_{X} + \beta\,\mathbf{u},
\]
where $\beta$ is the \emph{parent-effect} parameter controlling how strongly the parents influence $X$.  
For example, $\beta = 0$ corresponds to a node whose distribution is completely independent of its parents.  
In all experiments, we set $\beta = 0.7$ to allow moderate parent influence while maintaining stochasticity.  

The reward variable is binary, and for each combination of its parents’ values, the reward mean is drawn independently and uniformly from $[0,1]$.

\textbf{RAPS.}  
The RAPS algorithm includes a structural discovery subroutine that, when applied to any node, intervenes on that node across all $\ell$ possible values, performing an equal number of interventions per value, and measures the resulting changes in the distributions of other variables to identify its descendants.  
In our experiments, the number of interventions per value was set to $\epsilon \log(10)$, which corresponds to the number of samples required to detect a change of at least $\epsilon$ with probability $0.1$.  
A node was identified as a descendant if the probability of at least one of its values changed by more than $\epsilon$ under these interventions. The $\epsilon$ is set to $0.05$. 

However, this exploration phase requires a large number
of rounds and is repeated once per parent, since the discovery procedure must be repeated for each parent.  
Consequently, we excluded RAPS from experiments involving more than one parent ($k > 1$), as its cumulative regret becomes prohibitively large in such settings.

\textbf{Effect of $k$.}  
To examine the impact of the number of reward parents on algorithm performance, we evaluate all methods on a set of randomly generated instances with parameters $n = 10$, $\ell = 2$, and $m = 8$, while varying $k \in \{1, 2, 4, 6, 8\}$.  
Figure~\ref{fig: vary k} reports the final regret for each algorithm after $T = 30{,}000$ rounds.  
As expected, the regret generally increases with $k$, reflecting the greater structural complexity and larger effective action space.  

\textbf{Effect of $m$.}  
To analyze the influence of the intervention size, we run all algorithms on random instances with parameters $n = 8$, $\ell = 2$, and $k = 2$, varying $m \in \{2, 3, 4, 5, 6, 7, 8\}$.  
Figure~\ref{fig: vary m} shows the final regret across algorithms after $T = 30{,}000$ rounds.  
Larger $m$ values correspond to broader interventions, allowing more informative exploration and hence lower regret, consistent with our theoretical predictions.

\begin{figure}[t]
    \centering
    \includegraphics[width=0.7\linewidth]{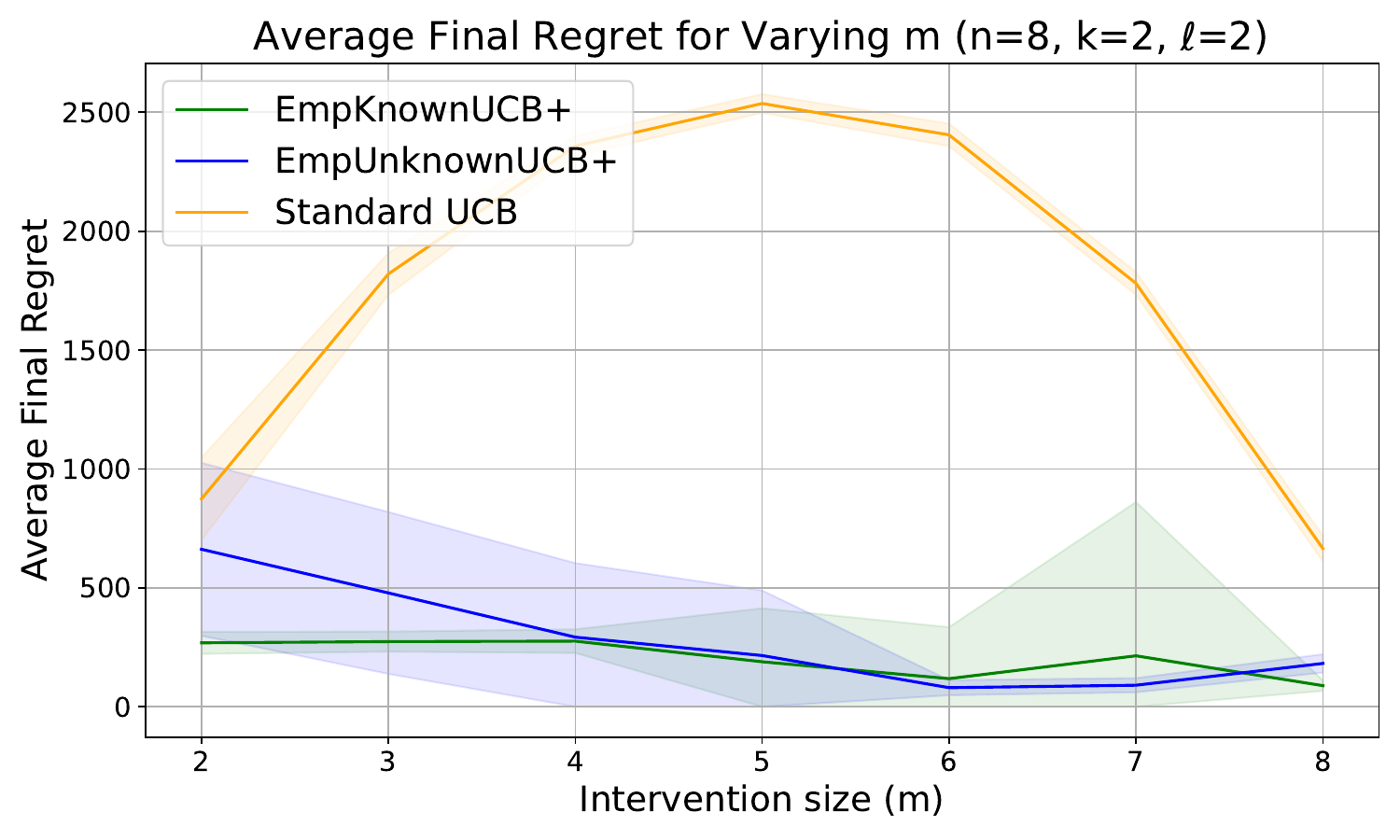}
    \caption{Average cumulative regret of algorithms at time $T$ for varying intervention sizes $m \in {2,3,4,5,6,7,8}$ on random instances with $n = 8$, $\ell = 2$, and $k = 2$. Larger intervention sizes enable more informative exploration, resulting in lower regret.}
    \label{fig: vary m}
\end{figure}

\end{document}